\documentclass[runningheads]{llncs}

% ---------------------------------------------------------------
% Include basic ECCV package
 
% TODO REVIEW: Insert your submission number below by replacing '*****'
% TODO FINAL: Comment out the following line for the camera-ready version
%\usepackage[review,year=2024,ID=*****]{eccv}
% TODO FINAL: Un-comment the following line for the camera-ready version
\usepackage{eccv}

% OPTIONAL: Un-comment the following line for a version which is easier to read
% on small portrait-orientation screens (e.g., mobile phones, or beside other windows)
%\usepackage[mobile]{eccv}

% ---------------------------------------------------------------
% Other packages

% Commonly used abbreviations (\eg, \ie, \etc, \cf, \etal, etc.)
\usepackage{eccvabbrv}

% Include other packages here, before hyperref.
\usepackage{graphicx}
\usepackage{booktabs}

% The "axessiblity" package can be found at: https://ctan.org/pkg/axessibility?lang=en
\usepackage[accsupp]{axessibility}  % Improves PDF readability for those with disabilities.

% ---------------------------------------------------------------
% Hyperref package

% It is strongly recommended to use hyperref, especially for the review version.
% Please disable hyperref *only* if you encounter grave issues.
% hyperref with option pagebackref eases the reviewers' job, but should be disabled for the final version.
%
% If you comment hyperref and then uncomment it, you should delete
% main.aux before re-running LaTeX.
% (Or just hit 'q' on the first LaTeX run, let it finish, and you
%  should be clear).

% TODO FINAL: Comment out the following line for the camera-ready version
%\usepackage[pagebackref,breaklinks,colorlinks,citecolor=eccvblue]{hyperref}
% TODO FINAL: Un-comment the following line for the camera-ready version
\usepackage{hyperref}

% Support for ORCID icon
\usepackage{orcidlink}
\usepackage{pifont}
\usepackage{algorithmicx,algorithm}
\usepackage{algpseudocode}
\usepackage{wrapfig}
\usepackage{diagbox}
\usepackage{enumitem}
\usepackage{multirow}
\usepackage{xcolor}
\usepackage{soul}
\usepackage{hyperref}
\usepackage{url}
\usepackage{thmtools,thm-restate}
\usepackage{tikz}
\usepackage{bm}

\usepackage{amsthm}

\usepackage{adjustbox}

\begin{document}
\newcommand{\name}{\text{CXGNN}}

% ---------------------------------------------------------------
% TODO REVIEW: Replace with your title
\title{Graph Neural Network Causal Explanation \\ via Neural Causal Models} 

% TODO REVIEW: If the paper title is too long for the running head, you can set
% an abbreviated paper title here. If not, comment out.
\titlerunning{Graph Neural Network Causal Explanation}

% TODO FINAL: Replace with your author list. 
% Include the authors' OCRID for the camera-ready version, if at all possible.
\author{Arman Behnam\inst{1}\orcidlink{0000-0002-4087-3274} \and
Binghui Wang\inst{1}\orcidlink{0000-0001-5616-060X}}

% TODO FINAL: Replace with an abbreviated list of authors.
\authorrunning{Arman Behnam and Binghui Wang}
% First names are abbreviated in the running head.
% If there are more than two authors, 'et al.' is used.

% TODO FINAL: Replace with your institution list.
\institute{Illinois Institute of Technology, Chicago IL 60616, USA\\
\email{abehnam@hawk.iit.edu} \quad \email{bwang70@iit.edu}}

\maketitle

\begin{abstract}
  Graph neural network (GNN) explainers identify the important subgraph that ensures the prediction for a given graph. Until now, almost all GNN explainers are based on association, which is prone to spurious correlations. We propose {\name}, a GNN causal explainer via causal inference. Our explainer is based on the observation that a graph often consists of a causal underlying subgraph.  {\name} includes three main steps: 1) It builds causal structure and the corresponding structural causal model (SCM) for a graph, which enables the cause-effect calculation among nodes. 2) Directly calculating the cause-effect in real-world graphs is computationally challenging. It is then enlightened by the recent neural causal model (NCM), a special type of SCM that is trainable, and design customized NCMs for GNNs. By training these GNN NCMs, the cause-effect can be easily calculated. 3) It uncovers the subgraph that causally explains the GNN predictions via the optimized GNN-NCMs. Evaluation results on multiple synthetic and real-world graphs validate that {\name} significantly outperforms existing GNN explainers in exact groundtruth explanation identification\footnote{Code is available at \url{https://github.com/ArmanBehnam/CXGNN}}.
  \keywords{Graph neural network explanation \and Neural causal model}
\end{abstract}
\section{Introduction}

Graph is a pervasive data type that represents complex relationships among entities. Graph Neural Networks (GNNs)~\cite{kipf2016semi,hamilton2017inductive,xu2018how,dwivedi2023benchmarking}, a mainstream learning paradigm for processing graph data, take a graph as input and learn to model the relation between nodes in the graph. GNNs have demonstrated state-of-the-art performance across various graph-related tasks such as node classification, link prediction, and graph classification, to name a few \cite{wu2020comprehensive}.

Explainable GNN provides a human-understandable way of the prediction outputted by a GNN. Given a graph and a label (correctly predicted by a GNN model), a GNN explainer aims to determine the important subgraph (called \emph{explanatory subgraph}) that is able to predict the label. Various GNN explanation methods~\cite{pope2019explainability,feng2023degree,baldassarre2019explainability,vu2020pgm,huang2022graphlime,duval2021graphsvx,zhang2021relex,pereira2023distill,yuan2020xgnn,wang2022gnninterpreter,shan2021reinforcement,li2023dag,GNNEx19,luo2020parameterized,schlichtkrull2021interpreting,funke2022z,wang2021towards,subgraphx_icml21,zhang2022gstarx} have been proposed, wherein almost all of them are based on \emph{associating} the prediction with a subgraph that has the maximum predictability (more details see Section~\ref{sec:related}). 
However, recent studies~\cite{wu2022discovering,fan2022debiasing,sui2022causal} show that association-based explanation methods are prone to biased subgraphs as the valid explanation due to \emph{spurious correlations} in the training data. For instance, when the groundtruth explanatory subgraph is the \emph{House}-motif, it often occurs with the \emph{Tree} bases. Then a GNN may not learn the true relation between the label and the \emph{House}-motif, but the \emph{Tree} base, due to it being easier to learn. We argue % that 
a truly explainable GNN should uncover the intrinsic \emph{causal relation} between the explanatory subgraph and the label, which we call the \emph{causal explanation}~\cite{schwab2019cxplain,beckers2022causal,geiger2022inducing}. Note that a few GNN explanation methods~\cite{sui2022causal,lin2021generative,lin2022orphicx} are motivated by the causality concepts, e.g., Granger causality~\cite{granger1969investigating}\footnote{Granger causality can only identify that one variable helps predict another, but it does not tell you which variable is the cause and which is the effect.}, but they are not causal explanations in essence.

{\bf Our GNN causal explainer:}  
In this paper, we take the first step to propose a {GNN explainer} via {causal inference}~\cite{pearl2016causal}, which focuses on understanding and quantifying cause-and-effect relations between variables in the task of interest. In the context of GNN causal explanation, we base on a common observation that a graph often consists of a causal subgraph 
and a non-causal counterpart~\cite{lin2021generative,lin2022orphicx,wu2022discovering,sui2022causal,fan2022debiasing,wang2022reinforced}.
Then given a graph and its (predicted) label, we aim to identify the \emph{causal explanatory subgraph} that causally yields such prediction. Our key idea is that the causal explainer should be able to identify the causal interactions among nodes/edges and interpret the label based on these interactions.  

Specifically, we propose a GNN causal explainer, called {\name}, that consists of three steps. 1) We first define causal structure (w.r.t. a reference node) for the graph, which admits structure causal models (we call GNN-SCM). Such GNN-SCM enables interventions to calculate cause and effects among nodes via do-calculus~\cite{pearl2016causal}.  
2) In real-world graphs, however, it is computationally challenging to perform do-calculus computation due to a large number of nodes and edges. To address it, we are inspired by the recent neural causal model (NCM)~\cite{xia2021causal},  which is a special type of SCM that can be trainable. We prove that, for each GNN-SCM, we can build a family of the respective GNN-NCMs. We then construct a parameterized GNN-NCM such that when it is optimized, the cause-effects defined on the GNN-NCM  are easily calculated. 3)  We finally determine the causal explanatory subgraph. To do so, we first introduce the node expressivity that reflects how well the reference node is in the causal explanatory subgraph. Then we iterate all nodes in the input graph and identify the trained GNN-NCM leading to the highest node expressivity. The underlying causal structure of this GNN-NCM is then the causal explanatory subgraph.

We evaluate {\name} on multiple synthetic and real-world graph datasets with groundtruth explanations, and compare them with state-of-the-art association-based and causality-inspired GNN explainers. Our results show {\name} significantly outperforms the baselines in exactly finding the groundtruth explanations. 
Our contributions are summarized below:
\begin{itemize}
    \item We propose the first GNN causal explainer {\name}. 
    \item We leverage the neural-causal connection, design the GNN neural causal models and train them to identify the causal explanatory subgraph. 
    \item %Our results show the effectiveness of {\name} and its superiority over 
    {\name} shows superiority over the state-of-the-art GNN explainers. 
\end{itemize}

\section{Related Work}
\label{sec:related}

\noindent{\bf Association-based explainable GNN:}
Almost all existing GNN explainers are based on association. These methods can be roughly classified into five types. (i) \textit{Decomposition-based methods~\cite{pope2019explainability,feng2023degree}} consider the prediction of a GNN model as a score and decompose it backward layer-by-layer until it reaches the input. The score of different parts of the input can be used to explain its importance to the prediction. (ii) \textit{Gradient-based methods~\cite{baldassarre2019explainability,pope2019explainability}} take the gradient of the prediction with respect to the input to show the sensitivity of a prediction to the input. The sensitivity can be used to explain the input for that prediction.
(iii) \textit{Surrogate-based methods~\cite{vu2020pgm,huang2022graphlime,duval2021graphsvx,zhang2021relex,pereira2023distill}} replace the GNN model with a simple and interpretable surrogate one. (iv) \textit{Generation-based methods~\cite{yuan2020xgnn,wang2022gnninterpreter,shan2021reinforcement,li2023dag}} use generative models or graph generators to generate explanations. 
(v) \textit{Perturbation-based methods~\cite{GNNEx19,luo2020parameterized,schlichtkrull2021interpreting,funke2022z,wang2021towards,subgraphx_icml21,zhang2022gstarx}} aim to 
find the important subgraphs as explanations by perturbing the input. 
State-of-the-art explainers from (iii)-(iv) show better performance than those from (i) and (ii).  

\noindent{\bf Causality-inspired explainable GNN:} 
Recent GNN explainers \cite{lin2021generative,lin2022orphicx,wu2022discovering,sui2022causal,fan2022debiasing,wang2022reinforced} are motivated by causality. 
These methods are based on a common observation that a graph consists of the causal subgraph and its non-causal counterpart. For instance, OrphicX~\cite{lin2022orphicx} uses information-theoretic measures of causal influence~\cite{ay2008information}, and proposes to identify the (non)causal factors in the embedding space via information flow maximization. CAL \cite{sui2022causal} introduces edge and node attention modules to estimate the causal and non-causal graph features. 

\noindent {\bf CXGNN vs. causality-inspired explainers.} 
The key difference lies in CXGNN focuses on identifying the \emph{causal explanatory subgraph} by \emph{directly quantifying the cause-and-effect relations} among nodes/edges in the graph. Instead, causality-inspired explainers are inspired by causality concepts to infer the explanatory subgraph, but they inherently do not provide causal explanations.  

\section{Preliminaries}
\label{sec:prelim}

In this section, we provide the necessary background on GNNs and causality to understand this work. For brevity, we will consider GNNs for graph classification.  

\noindent{\bf Notations:}  We denote a graph as $G=(\mathcal{V}, \mathcal{E})$, where $\mathcal{V}$ and $\mathcal{E}$ are the node set and edge set, respectively. $v \in \mathcal{V}(G)$ represents a node and $e_{u,v} \in \mathcal{E}(G)$ is an edge between $u$ and $v$. 
Each graph $G$ is associated with a label $y_G \in \mathcal{Y}$, with $\mathcal{Y}$ the label domain. An uppercase letter $X$ and the corresponding lowercase one $x$ indicate a random variable and 
its value, respectively; bold  ${\bf X}$ and ${\bf x}$ denote a set of random variables and its corresponding values, respectively. 
We use $\textrm{Dom}(X)$ to denote the domain of $X$, and $P({\bf X})$ as a probability distribution over ${\bf X}$. 

\noindent{\bf Graph neural network (GNN):} A GNN is a multi-layer neural network that operates on the graph and iteratively learns node/graph representations via message passing. A GNN mainly uses two operations to compute node representations in each layer. Assume a node $v$'s representations in the $(l-1)$-th layer is learnt and denoted as $h_v^{(l-1)}$. In the $l$-th layer, the message between two connected nodes $u$ and $v$ is defined as $m^l_{u,v}$ = MSG($h^{l-1}_{u}$, $h^{l-1}_{v}$, $e_{u,v}$). The aggregated message for node $u$ is then defined as $u$'s representation in the current layer $l$:  $h^{l}_{u}$ = AGG($m^l_{u,v}: v \in \mathcal{N}_1(u)$). Assume $L$ layers of computation, the final representation for $v$ is $\boldsymbol{z}_{v}=\boldsymbol{h}_{v}^{(L)}$ and ${\bf Z} = \{{\bf z}_v\}_{v \in \mathcal{V}}$. 
GNN can add a predictor on top of ${\bf Z}$ to perform graph-relevant tasks. For instance, when graph classification is the task of interest, GNNs use ${\bf Z}$ to predict the label for a whole graph.

\noindent{\bf Structural Causal Model (SCM):} 
SCMs~\cite{pearl2009causality} provide a rigorous definition of cause-effect relations between random variables. 
An SCM $\mathcal{M}$ is a four-tuple $\mathcal{M} \equiv ({\bf U}, {\bf V},\mathcal{F},P({\bf U}))$, where ${\bf U}$ is a set of exogenous (or latent) variables determined by factors outside the model and they are the only source of randomness in an SCM; $V$ is a set  $\{V_1, V_2, \ldots, V_n\} $ of $n$ endogenous (or observable) variables of interest determined by other variables within the model, i.e., in ${\bf U} \cup {\bf V}$; $\mathcal{F}$ is set of functions (define causal mechanisms) $\{f_{V_1}, f_{V_2}, \ldots, f_{V_n}\}$ such that each $f_{V_i}$ is a mapping function from ${\bf U}_{V_i} \bigcup \textbf{Pa}_{V_i} $ to $V_i$, where ${\bf U}_{V_i} \subseteq {\bf U}$ and $\textbf{Pa}_{V_i} \subseteq {\bf V} \setminus V_i$ is the parent of $V_i$. That is, $v_i \leftarrow f_{V_i}(\textbf{pa}_{V_i}, {\bf u}_{V_i})$ and $\mathcal{F}$ forms a mapping from ${\bf U}$ to ${\bf V}$. $P({\bf U})$ is the probability function over the domain of ${\bf U}$. With SCM, one can perform interventions to find causes and effects and design a model that has the capability of predicting the effect of interventions. 
\begin{definition}[Intervention and Causal Effects]
\label{Valuation}
Interventions are changes made to a system to study the causal effect of a particular variable or treatment on an outcome of interest. An SCM $\mathcal{M}$ induces a set of interventional distributions over ${\bf V}$, one for each intervention $do({\bf X}={\bf x})$ (short for $do({\bf x})$), which forces the value of variable ${\bf X} \subseteq {\bf V}$ to be ${\bf x}$. Then for each ${\bf Y} \subseteq {\bf V}$: 
{
\small
\begin{equation} \label{eq:k_2 valuation}
        p^{\mathcal{M}}({\bf y} |do({\bf x})) = \sum_{\{{\bf u}|{\bf Y}_{\bf x}({\bf u})={\bf y}\}} P({\bf u}). 
    \end{equation}
}
\end{definition}
In words, an intervention forcing a set of variables ${\bf X}$ to take values ${\bf x}$ is modeled by replacing the original mechanism $f_X$ for each $X \in {\bf X}$ with its corresponding value in ${\bf x}$. 
The impact of the intervention ${\bf x}$ on an outcome variable ${\bf Y}$ is called potential response ${\bf Y}_{{\bf x}}({\bf u})$, which expresses \emph{causal effects} and is the solution for ${\bf Y}$ after evaluating: 
$\mathcal{F}_{\bf x}:= \{f_{V_i}: V_i \in {\bf V} \setminus {\bf X}\} \cup \{ f_X \leftarrow x: X \in {\bf X}\}$.

One possible strategy to estimate the underlying SCM of a task is using 
its observational inputs and outputs. However, a critical issue is that causal properties are \emph{provably impossible} to recover solely from the joint distribution over the input graphs and labels~\cite{pearl2016causal}. 
In this paper, we are inspired by the emerging  \emph{Neural Causal Model (NCM)}~\cite{xia2021causal}, which is a special type of SCM that is amenable to gradient descent-based optimization. 

\noindent{\bf Neural Causal Model (NCM):} 
A NCM~\cite{xia2021causal} $\widehat{\mathcal{M}}(\theta)$ over variables ${\bf V}$ with parameters $\theta=\{ \theta_{V_i}; V_i \in V\}$ is an SCM estimation as: $\widehat{\mathcal{M}}(\theta)  \equiv (\widehat{\bf U}, {\bf V},\mathcal{\widehat{F}} ,P(\widehat{\bf U}))$, where 1) $\widehat{\bf U} \subseteq \{{\widehat{U}_{\bf C}; {\bf C} \subseteq {\bf V}}\}$, with each $\widehat{U}$ associated with some subset of variables ${\bf C} \subseteq {\bf V}$, and $\text{Dom}({\widehat{\bf U})} = [0,1]$; 2) $\mathcal{\widehat{F}} = \{\widehat{f}_{V_i}: V_i \in {\bf V} \}$, with each $\widehat{f}_{V_i}$ a  neural network parameterized by $\theta_{V_i}$ that maps $\widehat{\bf U}_{V_i} \cup \textbf{Pa}_{V_i}$ to $V_i$, and $\widehat{\bf U}_{V_i} = \{\widehat{U}_{\bf C}: V_i \in {\bf C}\}$; 3) $P(\widehat{\bf U}): \widehat{U} \sim \text{Unif}(0,1), \forall \widehat{U} \in \widehat{\bf U}$. 

\cite{xia2021causal} shows that \emph{NCM is proved to be as expressive as SCM}, and hence \emph{all NCMs are SCMs}. However, expressiveness does not mean the learned NCM model has the same empirical observations as the SCM model. 
%Structural assumptions are necessary to perform causal inferences when using NCMs. 
To ensure equivalence, there should be a necessary structural assumption on NCMs, called \emph{causal structure consistency}. More details are referred to \cite{xia2021causal} and Appendix~\ref{app:background}.  
\section{GNN Causal Explanation via NCMs}
\label{sec:GNN-NCM}

In this section, we propose our GNN causal explainer, {\name}, 
for explaining graph classification. Our explainer also utilizes the common observation that a graph consists of a causal subgraph and a non-causal counterpart~\cite{lin2021generative,lin2022orphicx,wu2022discovering,sui2022causal,fan2022debiasing,wang2022reinforced}. The overview of CXGNN is shown in Figure~\ref{fig:Causal graph explanation} in Appendix and all proofs are deferred to Appendix~\ref{app:proofs}. 

\subsection{Overview}
Given a graph $G=(\mathcal{V},\mathcal{E})$ and a ground truth or predicted label by a GNN model, our {causal explainer}  bases on {causal learning} and identifies the \emph{causal explanatory subgraph} (denoted as $\Gamma$) that intrinsically yields the label. 

Our {\name} consists of three key steps: 
1) define the causal structure $\mathcal{G}$ for the graph $G$ and the respective SCM $\widehat{\mathcal{M}}(\mathcal{G})$ (we call GNN-SCM) to enable causal effect calculation via interventions; 
2) However, directly calculating the causal effect
in real graphs is computationally challenging. We then construct and train a family of parameterized GNN neural causal model $\widehat{\mathcal{M}}(\mathcal{G}, \theta))$ (we call GNN-NCM), a special type of GNN-SCM that is trainable.
3) We uncover the causal explanatory subgraph (denoted as $\Gamma$) based on the trained GNN-NCM that best yields the graph label. Next, we will illustrate step-by-step in detail. 
%which is demonstrated in Figure.~\ref{fig:Causal graph explanation} in Appendix. 

% \begin{figure}[!t]
%   \centering
%   \includegraphics[width=0.43\textwidth]{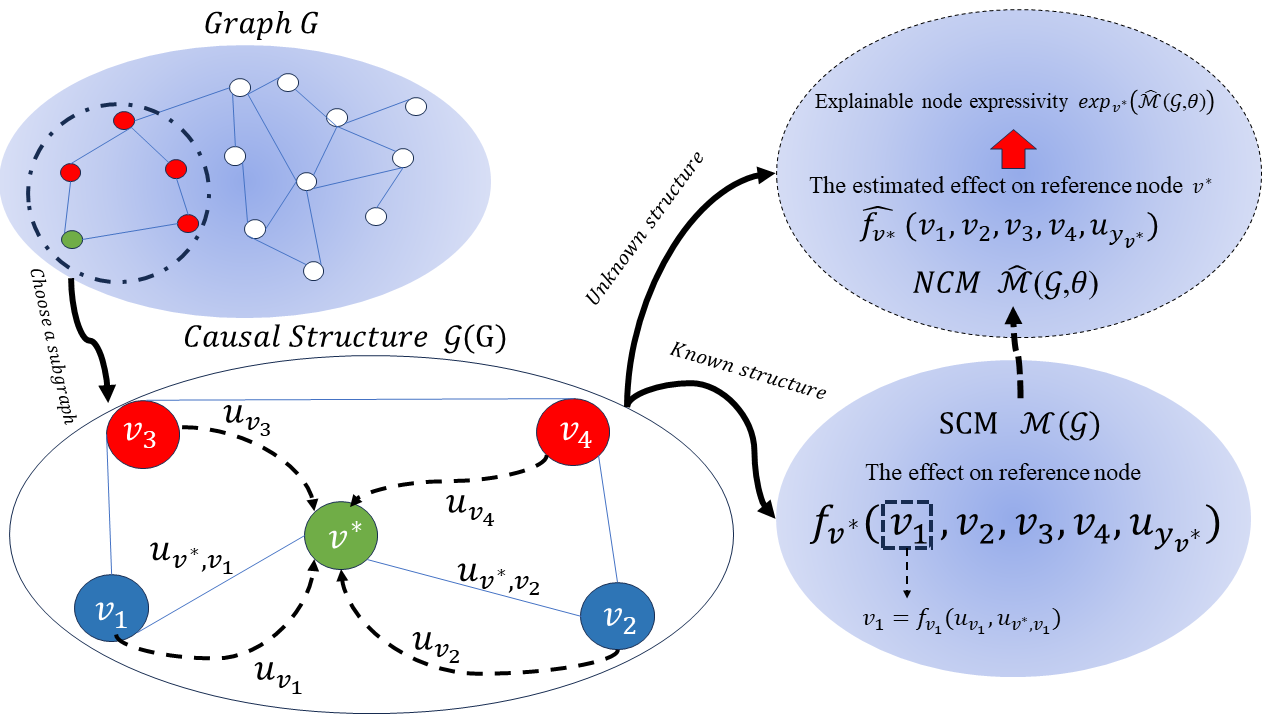}
%   \caption{Overview. {\color{green} Green node} is the reference node $v$, {\color{blue} Blue nodes} and {\color{red}Red nodes} are node $v$'s $1$-hop and $2$-hop neighbors.}
%   \label{fig:Causal graph explanation}
% \end{figure}

\subsection{Causal Structure and Induced SCM on a Graph}
 
In the context of causality, the problem of GNN explanation can be solved by cause and effect identification among nodes and their connections in a graph. Interventions enable us to interpret the causal relation between nodes. To perform interventions on a graph, one often needs to first define the causal structure for this graph, which involves the observable and latent variables. 

\noindent {\bf Observable and latent variables in a graph:}
Given a $G=(\mathcal{V},\mathcal{E})$. For each 
node $v \in \mathcal{V}$, there are both known and unknown effects from other nodes and edges on $v$, which we call observable variables (denoted as ${\bf V}_{v}$) and latent variables (denoted as ${\bf U}_{v}$), respectively. 
% We use ${\bf V}_{v}$ and ${\bf U}_{v}$ to indicate the set of observable variables and latent variables for $v$, respectively. 
With it, we define a congruent causal structure for enabling the graphs to admit SCMs. 
\begin{definition} [Causal structure of a graph] \label{causal network}
Consider a graph $G=(\mathcal{V},\mathcal{E})$, we define the causal structure $\mathcal{G}$ of $G$ as a subgraph that centers on a reference node  $v$ and accepts the SCM structure: 
{
\small
\begin{align}
    & \mathcal{G}(G) = \Big\{ {\bf V}_{v} = \{y_{v}\} \cup \{y_{v_i}:
    v_i \in  \mathcal{N}_{\leq k}(v) \}, %\label{causal net} %\\ 
%& \, 
{\bf U}_{v} = \{{\bf U}_{v_i}: v_i \in \mathcal{N}_{\leq k}(v) \} \cup \{{\bf U}_{v,v_i}:{e_{v,v_i} \in \mathcal{E}}\}\Big\}, \label{causal net} %\nonumber 
\end{align}
}%
where $v$ is to be learnt (see Section~\ref{sec:causalexp}), $y_{v_i}$ is the node $v_i$'s label, $\mathcal{N}_{\leq k}(v)$ means  nodes within the $k$-hop neighbors of $v$, ${\bf U}_{v_i}$ is $v_i$'s latent variable, called \emph{node effect}; and ${\bf U}_{v, v_i}$ the edge $e_{v, v_i}$ latent variable, called \emph{edge effect}. In practice, we can specify  ${\bf U}_{v_i}$ and ${\bf U}_{v, v_i}$ as random variable, e.g., from a Gaussian distribution.
\end{definition}

With a causal structure for a graph, we can build the corresponding SCM in the following theorem:

\begin{restatable}[GNN-SCM]{theorem}{thmgnnscm}
\label{thm:GNN-SCM}
For a GNN operating on a graph $G$, there exists an SCM $\mathcal{M}(\mathcal{G})$ w.r.t. the causal structure $\mathcal{G}$ of the graph $G$.
\end{restatable}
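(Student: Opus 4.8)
The plan is to construct the four-tuple $\mathcal{M}(\mathcal{G}) \equiv ({\bf U},{\bf V},\mathcal{F},P({\bf U}))$ explicitly from the causal structure $\mathcal{G}(G)$ of Definition~\ref{causal network} together with the GNN's message-passing operations, and then verify that the resulting object satisfies every clause of the SCM definition. First I would take the endogenous set to be the observable variables ${\bf V} := {\bf V}_v = \{y_v\} \cup \{y_{v_i}: v_i \in \mathcal{N}_{\leq k}(v)\}$ and the exogenous set to be the latent variables ${\bf U} := {\bf U}_v$, comprising the node effects ${\bf U}_{v_i}$ and edge effects ${\bf U}_{v,v_i}$ already specified in $\mathcal{G}$. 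For $P({\bf U})$ I would adopt the distribution indicated in Definition~\ref{causal network} (e.g.\ independent Gaussians on each ${\bf U}_{v_i}$ and ${\bf U}_{v,v_i}$), so that the latent variables are the sole source of randomness, as an SCM requires.

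Next I would fix the parent relation. Because a valid recursive SCM requires the parent map to induce a well-defined assignment from ${\bf U}$ to ${\bf V}$, I would orient the edges inside the $k$-hop neighborhood by hop distance from the reference node $v$: the label $y_v$ takes as parents the labels $y_{v_i}$ of its $1$-hop neighbors, each of which in turn takes its own outer-hop neighbors as parents, and so on up to distance $k$. This layering makes $\textbf{Pa}_{V_i} \subseteq {\bf V}\setminus V_i$ acyclic, so the structure is consistent with the requirement that $\mathcal{F}$ form a map ${\bf U}\to{\bf V}$. With the parents fixed, each causal mechanism $f_{V_i}$ is obtained by unrolling the GNN's $\mathrm{MSG}$ and $\mathrm{AGG}$ operations over the relevant layers together with the label predictor on top of $\boldsymbol{z}_{v_i}$: the \emph{known} dependence of $y_{v_i}$ on its neighbors is carried through $\textbf{Pa}_{V_i}$, while the \emph{unknown} influences---exactly those that Definition~\ref{causal network} ascribes to node and edge effects---are absorbed into the exogenous inputs ${\bf U}_{V_i} = \{{\bf U}_{v_i}\}\cup\{{\bf U}_{v,v_i}: e_{v,v_i}\in\mathcal{E}\}$. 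Each $f_{V_i}$ thus maps ${\bf U}_{V_i}\cup\textbf{Pa}_{V_i}$ into $\mathrm{Dom}(y_{v_i})$, matching the SCM signature.

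Finally I would verify the axioms: every $f_{V_i}$ has the correct domain and codomain by the previous step; $\mathcal{F}=\{f_{V_i}\}$, evaluated in topological (hop) order, yields a unique value for each endogenous variable given a draw of ${\bf U}$, hence defines a solution map ${\bf U}\to{\bf V}$; and the induced graph over ${\bf V}$ coincides with $\mathcal{G}$ by construction. Together these establish that $\mathcal{M}(\mathcal{G})$ is an SCM with respect to $\mathcal{G}$, and since we have exhibited it explicitly, existence follows.

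I expect the main obstacle to be constructing the mechanisms $f_{V_i}$ so that acyclicity and the GNN's message-passing semantics are simultaneously respected. A general graph neighborhood contains cycles, whereas a recursive SCM demands a DAG of parents; the hop-distance orientation resolves this, but one must argue that the orientation is compatible with how the GNN actually propagates information and, crucially, that every residual, cycle-inducing influence can be faithfully offloaded onto the latent node/edge effects without leaving any endogenous variable under-determined. Once that absorption is justified, the existence of the four-tuple follows by explicit exhibition.
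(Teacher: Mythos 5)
Your proposal is correct and follows the same constructive strategy as the paper---exhibit the four-tuple $({\bf U},{\bf V},\mathcal{F},P({\bf U}))$ directly from the causal structure $\mathcal{G}(G)$ of Definition~\ref{causal network} and the GNN's message passing---but the two arguments differ in how the correspondence is drawn and in how much is actually verified. The paper's proof is a correspondence table: it identifies the observable variables $\{y_v\}\cup\{y_{v_i}\}$ with the node embeddings $\{h_v\}\cup\{h_u\}$, the latent node/edge effects with the messages $m_{u,v}$, and $P({\bf U})$ with $\mathrm{Dom}(\{h_v\})$, and then declares existence of $\mathcal{M}(\mathcal{G})$ by this identification, leaving $\mathcal{F}$ as an abstract set $\{f_1,\dots,f_n\}$ with $f({\bf U})\to{\bf V}$. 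You instead keep the latents as genuine exogenous noise (absorbing the unknown influences), build each $f_{V_i}$ by unrolling $\mathrm{MSG}$/$\mathrm{AGG}$ with the predictor, and---crucially---impose a hop-distance orientation on the neighborhood so that the parent relation is acyclic and $\mathcal{F}$ evaluated in topological order gives a well-defined solution map ${\bf U}\to{\bf V}$. This last step addresses a point the paper's proof never raises: undirected graph neighborhoods contain cycles, while a recursive SCM needs a DAG of parents, so some orientation (or absorption of cycle-inducing influences into latents) is required for $\mathcal{M}(\mathcal{G})$ to be well defined. In short, the paper's route buys brevity and a tight visual alignment between GNN components and SCM components; your route buys an actually verified SCM, filling a genuine gap in the paper's own argument rather than diverging from it.
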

Appendix~\ref{app:scm-example} shows an example on how to compute the causal effects on a toy graph via a SCM truth table. 

\subsection{GNN Neural Causal Model}

In reality, it is computationally challenging to build a truth table for variables in GNN-SCM and perform do-calculus computation due to the large number of nodes/edges in real-world graphs. Such a challenge impedes the calculation of causal effects. 
To address it, we are motivated by estimating the causal effect via NCM (see Section \ref{sec:prelim}). 
Specifically, Definition~\ref{def:g-cons-nscm} in Appendix shows: to ensure the equivalence between NCM and SCM, NCM is required to be $\mathcal{G}$-constrained. However, the general $\mathcal{G}$-constrained NCM cannot be directly applied in our setting. 
To this end, we first define a customized $\mathcal{G}$-constrained GNN-NCM as below: 
\begin{definition} [$\mathcal{G}$-Constrained GNN-NCM (constructive)] 
\label{lem:g-cons-gnn-ncm}
Let GNN-SCM $\mathcal{M}(\mathcal{G, \theta})$ be induced from the causal structure $\mathcal{G}(G)$ on a graph $G$. Then GNN-NCM $\widehat{\mathcal{M}}(\mathcal{G}, \theta)$ will be constructed based on the causal structure  $\mathcal{G}(G)$. 
\end{definition}

This construction ensures that any inferences made by $\widehat{\mathcal{M}}_{NCM}(\mathcal{G}, \theta)$ respect the causal dependencies as captured by $\mathcal{G}(G)$. Note that $\widehat{\mathcal{M}}(\mathcal{G},\theta)$  represents a family of GNN-NCMs since the parameters $\theta$ of the neural networks are not specified by the construction. Next, we propose a construction of a $\mathcal{G}$-constrained GNN-NCM, following Definition~\ref{lem:g-cons-gnn-ncm}.

\noindent {\bf GNN Neural Causal Model Construction} One should consider the sound and complete structure of GNN-NCMs that are consistent with Definition \ref{causal network}. Here, we define the general GNN-NCM structure as shown in below Equation \ref{eq:gnn-ncm}, which is an instantiation of Theorem~\ref{thm:thmgnnncm}. 
{
\begin{align}
\scriptsize
\label{eq:gnn-ncm}
    {\widehat{\mathcal{M}}(\mathcal{G},\theta) =} 
        \begin{cases}
            {\bf V}:= \mathcal{V}(\mathcal{G}) \\
            \widehat{\bf U} := \{\widehat{\bf U}_{v_i}, 
            v_i \in \mathcal{V}(\mathcal{G}\}, 
            P(\widehat{\bf U}) := \{\widehat{\bf U}_{v_i} \sim \text{Unif}(0,1) 
            \}  \cup \{ T_{k, v_i} \sim \mathcal{N}(0,1): 
            k \in \{0,1\} \} \\
            \widehat{f}_{v_i}(\widehat{\bf u}_{v_i}, \widehat{\bf u}_{{v_i},{v_j}}) := 
            \arg\max\limits_{k\in\{0,1\}} T_{k,v_i}+
            \begin{cases}
                \log \sigma (ff_{v_i}(\widehat{\bf u}_{v_i}, \widehat{\bf u}_{{v_i},{v_j}};\theta_{v_i})) \text{ if } {k=1}\\
                \log (1 - \sigma (ff_{v_i}(\widehat{\bf u}_{v_i}, \widehat{\bf u}_{{v_i},{v_j}};\theta_{v_i}))) \text{ if } {k=0}, \\
            \end{cases} %\forall (v_i,v_j)\in \mathcal{E}(\mathcal{G}) 
            \\
             \widehat{\mathcal{F}} := \{ \widehat{f}_{v_i}(\widehat{\bf u}_{v_i},\widehat{\bf u}_{{v_i},{v_j}})\}
        \end{cases}
\end{align}
}%

\begin{restatable}[GNN-NCM]{theorem}{thmgnnncm}
\label{thm:thmgnnncm}
Given causal structure $\mathcal{G}$ of a graph $G$ and the underlying GNN-SCM $\mathcal{M}(\mathcal{G})$, 
there exists a $\mathcal{G}$-constrained GNN-NCM $\widehat{\mathcal{M}}(\mathcal{G}, \theta)$ that enables any inferences consistent with $\mathcal{M}(\mathcal{G})$.
\end{restatable}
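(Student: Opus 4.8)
The plan is to prove the statement constructively. Starting from the GNN-SCM $\mathcal{M}(\mathcal{G})$ whose existence is guaranteed by Theorem~\ref{thm:GNN-SCM}, I would exhibit the family $\widehat{\mathcal{M}}(\mathcal{G},\theta)$ of Equation~\ref{eq:gnn-ncm} and establish two things: (i) every member is $\mathcal{G}$-constrained in the sense of Definition~\ref{lem:g-cons-gnn-ncm}, and (ii) some parameter setting $\theta^{\star}$ makes the induced model agree with $\mathcal{M}(\mathcal{G})$ on all the inferences of interest. Here ``enables any inferences consistent with $\mathcal{M}(\mathcal{G})$'' means agreement on the observational distribution $P^{\mathcal{M}}({\bf V})$ and, for every intervention $do({\bf x})$ with ${\bf X}\subseteq{\bf V}$, on the post-intervention distribution of Equation~\ref{eq:k_2 valuation}; so it suffices to reproduce these two families of distributions.

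First I would check $\mathcal{G}$-constrainedness, which follows directly from the construction. By Definition~\ref{causal network} the endogenous variables are the node labels and the latents split into node effects $\widehat{\bf U}_{v_i}$ and edge effects $\widehat{\bf U}_{v_i,v_j}$. In Equation~\ref{eq:gnn-ncm} each mechanism $\widehat{f}_{v_i}$ takes as arguments exactly $\widehat{\bf u}_{v_i}$, the incident edge effects $\widehat{\bf u}_{v_i,v_j}$, and the parent labels $\textbf{Pa}_{v_i}$ drawn from $\mathcal{G}$; no other endogenous variable enters. Hence the directed edges of $\widehat{\mathcal{M}}$ coincide with those of $\mathcal{G}$, and sharing a single edge-effect latent $\widehat{U}_{v_i,v_j}$ between the two incident mechanisms reproduces the confounding (bidirected) structure of $\mathcal{G}$. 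This matches the structural requirement of Definition~\ref{lem:g-cons-gnn-ncm}.

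The core of the argument is step (ii). Because each endogenous label is discrete---binary, indexed by $k\in\{0,1\}$---the mechanism $f_{V_i}$ of $\mathcal{M}(\mathcal{G})$ induces, for every configuration of $(\textbf{Pa}_{v_i},{\bf U}_{v_i})$, a Bernoulli conditional $P^{\mathcal{M}}(V_i \mid \textbf{Pa}_{v_i},{\bf U}_{v_i})$. I would invoke universal approximation to pick $\theta^{\star}_{v_i}$ so that $\sigma(ff_{v_i}(\cdot;\theta^{\star}_{v_i}))$ equals this conditional probability on its finite input domain, and then observe that the $\arg\max$ over $k$ of $T_{k,v_i}$ plus the log-probabilities is a reparameterized draw from precisely that Bernoulli law. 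Matching the conditionals factor-by-factor along a topological order of $\mathcal{G}$ gives equality of the joints, $P^{\widehat{\mathcal{M}}}({\bf V})=P^{\mathcal{M}}({\bf V})$. Since an intervention $do({\bf x})$ deletes the same factors $\{f_X : X\in{\bf X}\}$ in both models and leaves the remaining factorization identical, the two post-intervention distributions of Equation~\ref{eq:k_2 valuation} also coincide. This is the $\mathcal{G}$-level instantiation of the expressiveness theorem of~\cite{xia2021causal}, specialized to the GNN causal structure.

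The main obstacle I anticipate lies in the sampling step, not in approximating the logits. The construction perturbs the log-probabilities with Gaussian noise $T_{k,v_i}\sim\mathcal{N}(0,1)$, whereas the exact $\arg\max$ reparameterization of a categorical law requires i.i.d.\ standard Gumbel noise. I would therefore need either to show that for binary labels the resulting threshold rule still sweeps out every Bernoulli parameter as the logit $ff_{v_i}$ ranges over all real values---so exact matching survives---or to restate the mechanism with Gumbel noise to make the reparameterization exact. A secondary subtlety is the shared-latent bookkeeping: one must verify that feeding the same draw of each edge effect $\widehat{U}_{v_i,v_j}$ into both incident mechanisms reproduces the correct joint confounding dependence of $\mathcal{M}(\mathcal{G})$ and not merely the correct marginals, which is ensured by keeping these latents as common inputs exactly as prescribed in Definition~\ref{causal network}.
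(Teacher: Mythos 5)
Your proposal is correct in substance, but it proves the theorem by a genuinely different route than the paper. The paper's proof is citation-based: it invokes Lemma~\ref{SCM-NCM} (from \cite{xia2021causal}), stating that all NCMs are SCMs and that any $\mathcal{G}$-constrained NCM reproduces the distributions of the underlying SCM, then observes that the causal structure of Definition~\ref{causal network} satisfies Definition~\ref{def:g-cons-nscm}, and finally writes out the four-tuple $\widehat{\mathcal{M}}(\mathcal{G},\theta)$ whose components mirror the GNN-SCM of Theorem~\ref{thm:GNN-SCM}; the entire expressiveness/consistency burden is outsourced to the cited result, and no distribution-matching argument appears. You instead prove your step (ii) from first principles: universal approximation of the Bernoulli conditionals, an $\arg\max$ reparameterization to realize them, factor-by-factor matching along a topological order of $\mathcal{G}$, and the observation that $do({\bf x})$ deletes the same mechanisms in both models---essentially re-deriving, specialized to $\mathcal{G}(G)$, the result the paper cites wholesale. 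What your route buys is self-containedness and, more importantly, it surfaces a genuine technical wrinkle the paper's proof never confronts: Equation~\ref{eq:gnn-ncm} perturbs the log-probabilities with Gaussian noise $T_{k,v_i}\sim\mathcal{N}(0,1)$ rather than Gumbel noise, so the $\arg\max$ is a probit-type rather than a logistic sampler; your remark that for binary labels the induced threshold probability still sweeps out all of $(0,1)$ as the logit $ff_{v_i}$ ranges over $\mathbb{R}$ (so exact matching survives after recalibrating the target logits) is precisely the fix needed, and the paper is silent on this point. What the paper's route buys is brevity and independence from the particular noise model and the binary-label assumption, at the cost of never verifying that its concrete construction in Equation~\ref{eq:gnn-ncm} actually instantiates the $\mathcal{G}$-constrained framework it cites.
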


In Equation \ref{eq:gnn-ncm}, 
${\bf V}$ are the nodes in the causal structure $\mathcal{G}(G)$; each $T_{v_i}$ is a standard Gaussian random variable; each $ff_{v_i}$ is a feed-forward neural network on $v_i$ parameterized by $\theta_{v_i}$ (note one requirement of $ff_{v_i}$ is it could approximate any continuous function), and $\sigma$ is sigmoid activation function. The parameters $\{\theta_{v_i}\}$ are not yet specified and must be learned through training the NCM.

\begin{algorithm}[!t]
\caption{GNN Neural Causal Model Training}
\label{Algorithm1}
\footnotesize
{\bf Input:} The causal structure $\mathcal{G}$ (including a reference node $v$, its within $k$-hop neighbors $\mathcal{N}_{\leq k}(v)$, and set of latent variables ${\bf U}_{v}$), node label $y_v$ \\
{\bf Output:} An optimized GNN-NCM $\widehat{\mathcal{M}}(\mathcal{G}, \theta^{*})$ for the causal structure $\mathcal{G}$ centered at $v$ %\\ 
\begin{algorithmic}[1]
\State Build the GNN-NCM $\widehat{\mathcal{M}}(\mathcal{G},\theta)$ 
based on $\mathcal{G}$ and Eqn. \ref{eq:gnn-ncm}
\For{each node $v_i \in \mathcal{N}_{\leq k}(v)$}
\State  Calculate $p^{\widehat{\mathcal{M}}(\mathcal{G},\theta)}(y_{v} \mid do(v_i))$ via Eqn. \ref{eq:NM3}
\EndFor
\State Calculate $p^{\widehat{\mathcal{M}}(\mathcal{G},\theta)}(y_{v})$ via Eqn.~\ref{eq:NM6}
\State Calculate the loss $\mathcal{L}({\widehat{\mathcal{M}}(\mathcal{G},\theta)}; v)$ via Eqn. \ref{eq:cel}
\State Minimize the loss to reach the GNN-NCM $\widehat{\mathcal{M}}(\mathcal{G}, \theta^{*})$  
\end{algorithmic}
\end{algorithm}

\setlength{\textfloatsep}{2mm}

\noindent {\bf Training Neural Networks for GNN-NCMs}
We now compute the causal effects on a target node $v$. Based on Definition~\ref{Valuation} and the constructed GNN-NCM $\widehat{\mathcal{M}}(\mathcal{G},\theta)$ in Equation \ref{eq:gnn-ncm}, 
the causal effect on $v$ of an intervention $do(v_i)$ 
($v_i \in \mathcal{N}_{1}(v))$ is $p^{\widehat{\mathcal{M}}(\mathcal{G},\theta)}(y_{v}|do(v_i))$. 
This do-calculus then can be calculated as the expected value of nodes and edges affects values for $v$ shown below: 
{
\footnotesize
\begin{align}\label{eq:NM3}
    & p^{\widehat{\mathcal{M}}(\mathcal{G},\theta)}(y_{v} \mid do(v_i)) = \mathbb{E}_{p(\widehat{\bf u}_{v})} \Big[ \prod_{(v, v_j) \in \mathcal{E}(\mathcal{G})} \widehat{f}_{v_i}(\widehat{\bf u}_{v_j},\widehat{\bf u}_{v, v_j})\Big] \nonumber \\
     & \quad \quad \approx \frac{1}{|\mathcal{N}_{\leq k}(v)|} \sum_{v_i \in \mathcal{N}_{\leq k}(v)} \prod_{(v, v_j) \in \mathcal{E}(\mathcal{G})} \widehat{f}_{v_i}(\widehat{\bf u}_{v_j},\widehat{\bf u}_{v, v_j}).
\end{align}
}%
Then one can calculate the probability of the target node label $y_{v}$  as the expected value of all the effects from the neighbor nodes on  $v$:
{
\footnotesize
\begin{align} 
\label{eq:NM6}
    & p^{\widehat{\mathcal{M}}(\mathcal{G},\theta)}(y_{v}) = \mathbb{E}_{p(\hat{\bf u}_{v})} \left[ \widehat{f}_{v}\right] %\\ & 
    \approx \frac{1}{|\mathcal{N}_1(v)|}  
    \frac{1}{|\mathcal{Y}|} \sum_{y \in \mathcal{Y}} \sum_{v_i \in \mathcal{N}_1(v)} p^{\widehat{\mathcal{M}}(\mathcal{G},\theta)}(y_{v} = y \mid do(v_i)) %\nonumber 
\end{align}
}

The true GNN-SCM induces a causal structure that encodes constraints over the interventional distributions. We now first investigate the feasibility of causal inferences in the class of $\mathcal{G}$-constrained GNN-NCMs. These models approximate the likelihood of the observed data based on the graph's latent variables. The cross-entropy loss measures the discrepancy between the target node's label prediction and its true label.
Inspired by \cite{xia2021causal}, we define the GNN-NCM loss as: %function as below: 
{
\footnotesize
\begin{align}\label{eq:cel}
% & 
%\small
\mathcal{L} (\widehat{\mathcal{M}}(\mathcal{G},\theta); v) = - \sum_{y_v \in \mathcal{Y}} y_{v} \log(p^{\widehat{\mathcal{M}}(\mathcal{G},\theta)}({y_{v}})) 
\end{align}
}%

To train neural networks for GNN-NCMs, one should generate samples from the GNN-SCM. If provided, it is 
% basically 
the specific realization of the interventions. Specifically, GNN-NCMs are trained on node effects $\hat{\bf u}_{v_i}$ and edge effects $\hat{\bf u}_{{v_i},{v_j}}$ on the target node, as shown in Equation  \ref{eq:gnn-ncm}, and should specify $\widehat{f}_{v_i}(\hat{\bf u}_{v_i}, \hat{\bf u}_{{v_i},{v_j}})$. 
Then a model, denoted as $\theta^*$, is achieved by minimizing the GNN-NCM loss:
{
\small
\begin{equation}\label{eq:theta}
    \theta^{*} \in \arg\min_{\theta}  \mathcal{L}(\widehat{\mathcal{M}}(\mathcal{G},\theta); v)
\end{equation}
}%
Details of training GNN-NCMs are shown in Algorithm \ref{Algorithm1}. 
Basically, this algorithm takes the causal structure \(\mathcal{G}\) with respect to a reference node $v$ as input and returns an optimized GNN-NCM model \(\widehat{\mathcal{M}}(\mathcal{G}, \theta^*)\).

\begin{algorithm}[!t]
\caption{{\name:} GNN Causal Explainer}
\label{Algorithm2}
{\bf Input:}  Graph $G$ with label, and expressivity threshold $\delta$ \\
{\bf Output:}  Explanatory subgraph $\Gamma$
\begin{algorithmic}[1]
\For{each node $v \in \mathcal{V}(G)$}
\State Build $\mathcal{G}_{v}$ based on the reference node $v$  
\State Train the GNN-NCM $\widehat{\mathcal{M}}(\mathcal{G}_v, \theta_v^{*})$ via Alg. \ref{Algorithm1} and calculate the node expressivity $\text{exp}_v(\widehat{\mathcal{M}}(\mathcal{G}_v, \theta_v^{*}))$
\EndFor
\State Find $v^* = \text{argmax}_{v \in \mathcal{V}(G)} \text{exp}_{v}(\widehat{\mathcal{M}}(\mathcal{G}_v,\theta_v^*))$; 
\State Return the explanatory subgraph $\Gamma$  induced by $\mathcal{G}_{v^*}$
\end{algorithmic}
\end{algorithm}

\subsection{Realizing GNN Causal Explanation}
\label{sec:causalexp}
The remaining question is: how to find the causal explanatory subgraph $\Gamma$ from a graph $G$ to causally explain GNN predictions? 
The answer is using the trained GNN-NCMs $\widehat{\mathcal{M}}(\mathcal{G},\theta^*)$. Before that, the first step is to clarify a node's role in GNN-NCMs for explanation.
\begin{restatable}[Node explainability]{theorem}{nodeex}
\label{thm:nodeex}
Let a prediction for a graph $G$ be explained. A node $v \in {G}$  is causally explainable, if $p^{\widehat{\mathcal{M}}(\mathcal{G}(G),\theta)}(y_{v})$ can be computed.
\end{restatable}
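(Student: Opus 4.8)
The plan is to show that the computability of $p^{\widehat{\mathcal{M}}(\mathcal{G}(G),\theta)}(y_{v})$ is both well-founded inside the GNN-NCM and sufficient to recover every cause-effect relation determining the label of $v$, which is exactly what I will take \emph{causally explainable} to mean. First I would pin down that notion: a node $v$ is causally explainable when, for the causal structure $\mathcal{G}(G)$ centered at $v$, the whole family of interventional distributions $p(y_v \mid do(v_i))$ with $v_i \in \mathcal{N}_1(v)$ is identifiable and evaluable. By Definition~\ref{Valuation} these quantities are precisely the potential responses ${\bf Y}_{{\bf x}}({\bf u})$ restricted to the neighborhood of $v$, so causal explainability is equivalent to the evaluability of this do-query family on the structure around $v$.

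Next I would invoke Theorem~\ref{thm:thmgnnncm}: the $\mathcal{G}$-constrained GNN-NCM $\widehat{\mathcal{M}}(\mathcal{G},\theta)$ enables inferences consistent with the true GNN-SCM $\mathcal{M}(\mathcal{G})$. Because the model is $\mathcal{G}$-constrained, every interventional query identifiable from $\mathcal{M}(\mathcal{G})$ can be evaluated in $\widehat{\mathcal{M}}(\mathcal{G},\theta)$ through the mechanisms $\widehat{f}_{v_i}$. Concretely, the causal effect $p^{\widehat{\mathcal{M}}(\mathcal{G},\theta)}(y_v \mid do(v_i))$ is given by Eq.~\eqref{eq:NM3} as an expectation over the exogenous variables $\widehat{\bf u}_v$. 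I would then verify this expectation is well-defined: the latents carry fixed laws ($\widehat{U}\sim\text{Unif}(0,1)$, $T_{k,v_i}\sim\mathcal{N}(0,1)$), each $ff_{v_i}$ is continuous, and the sigmoid output lies in $[0,1]$, so the product in Eq.~\eqref{eq:NM3} is bounded and its expectation exists and is finite; the Monte-Carlo average supplies an explicit evaluation procedure.

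I would close the loop via Eq.~\eqref{eq:NM6}, which expresses $p^{\widehat{\mathcal{M}}(\mathcal{G},\theta)}(y_v)$ as a deterministic, finite, normalized aggregation of the interventional terms $p^{\widehat{\mathcal{M}}(\mathcal{G},\theta)}(y_v = y \mid do(v_i))$ over $y\in\mathcal{Y}$ and $v_i\in\mathcal{N}_1(v)$. Hence computing $p(y_v)$ presupposes having computed all these do-queries, and conversely each do-query appears as a summand of $p(y_v)$; computability of $p(y_v)$ and evaluability of the full family of causal effects on $v$ are therefore equivalent. Combined with the well-definedness established above, this yields the claim: once $p^{\widehat{\mathcal{M}}(\mathcal{G}(G),\theta)}(y_v)$ can be computed, the cause-effect relations fixing $y_v$ are identified, i.e. $v$ is causally explainable.

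The main obstacle I anticipate is justifying that the averaging in Eq.~\eqref{eq:NM6} loses no causal information: I must argue the aggregation does not merely summarize but actually encodes each interventional distribution, so that computability of the marginal genuinely certifies identifiability rather than returning a single opaque scalar. Establishing this faithfulness, together with confirming existence of the governing expectations (integrability of the NCM mechanisms over their latent domains), is where the real content sits; the residual algebra and the Monte-Carlo evaluation in Eqs.~\eqref{eq:NM3}--\eqref{eq:NM6} are routine.
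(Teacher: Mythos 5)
Your proposal is correct and takes essentially the same route as the paper's own proof: both rest on Theorem~\ref{thm:thmgnnncm} (the $\mathcal{G}$-constrained GNN-NCM supports inferences consistent with the GNN-SCM) and on the observation that Eqn.~\ref{eq:NM6} defines $p^{\widehat{\mathcal{M}}(\mathcal{G}(G),\theta)}(y_v)$ as an aggregation of the interventional queries of Eqn.~\ref{eq:NM3}, so computability of this marginal certifies that the causal effects of the neighbors on $v$ have been evaluated, which is exactly what causal explainability asserts. Your additional steps (pinning down the definition of causal explainability, checking integrability of the NCM mechanisms, and flagging the faithfulness of the aggregation) are refinements the paper's proof glosses over --- it merely notes that the expectation exists, i.e., $p^{\widehat{\mathcal{M}}(\mathcal{G}(G),\theta)}(y_v) \geq 0$ whenever $v$ has neighbors, and concludes $v$'s label can be explained from it --- so they strengthen rather than diverge from the paper's argument.
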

The $\mathcal{G}$-constrained GNN-NCM is trained on interventions and can interpret the GNN predictions. Moreover, the information extracted from interventions can be used for interpreting nodes. Specifically, we define expressivity to measure the information for an explainable node.  
\begin{restatable}[Explainable node expressivity]{theorem}{nodeexp}
\label{thm:nodeexp}
An explainable node $v$ has expressivity defined as 
$\text{exp}_{v}(\widehat{\mathcal{M}}(\mathcal{G},\theta))  = \sum_{y_{v}} y_{v} p^{\widehat{\mathcal{M}}(\mathcal{G},\theta)}(y_{v})$.
\end{restatable}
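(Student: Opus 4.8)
The plan is to read Theorem~\ref{thm:nodeexp} as a well-definedness-and-interpretation claim: given that $v$ is explainable in the sense of Theorem~\ref{thm:nodeex}, I must show that the quantity $\text{exp}_{v}(\widehat{\mathcal{M}}(\mathcal{G},\theta)) = \sum_{y_{v}} y_{v}\, p^{\widehat{\mathcal{M}}(\mathcal{G},\theta)}(y_{v})$ is a well-defined, computable real number that genuinely summarizes the \emph{causal} information the trained GNN-NCM carries about $v$. First I would invoke Theorem~\ref{thm:nodeex} directly: explainability guarantees that $p^{\widehat{\mathcal{M}}(\mathcal{G},\theta)}(y_{v})$ can be computed, so each summand $y_{v}\, p^{\widehat{\mathcal{M}}(\mathcal{G},\theta)}(y_{v})$ is available by construction.

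Next I would identify the expressivity with an expectation. Since the label domain $\mathcal{Y}$ is finite, the sum $\sum_{y_{v}\in\mathcal{Y}} y_{v}\, p^{\widehat{\mathcal{M}}(\mathcal{G},\theta)}(y_{v})$ is exactly $\mathbb{E}_{p^{\widehat{\mathcal{M}}(\mathcal{G},\theta)}(y_{v})}[y_{v}]$, a finite weighted combination of quantities each lying in $[0,1]$; hence it is bounded and computable. I would then substitute Equation~\ref{eq:NM6} to unfold $p^{\widehat{\mathcal{M}}(\mathcal{G},\theta)}(y_{v})$ into the average over neighbors of the interventional effects $p^{\widehat{\mathcal{M}}(\mathcal{G},\theta)}(y_{v}\mid do(v_i))$ from Equation~\ref{eq:NM3}. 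This rewrites the expressivity as an aggregate of the causal effects of the interventions $do(v_i)$ on the reference node's label, making explicit that $\text{exp}_{v}$ measures causal rather than merely associational signal.

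The hard part will be arguing that this expectation faithfully reflects the causal semantics rather than being an arbitrary scalar. Here I would lean on Theorem~\ref{thm:thmgnnncm}: because the underlying model is a $\mathcal{G}$-constrained GNN-NCM whose inferences are consistent with the GNN-SCM $\mathcal{M}(\mathcal{G})$, the interventional distributions entering Equation~\ref{eq:NM6} inherit the do-calculus semantics of Definition~\ref{Valuation}. Consequently the weighting in $\text{exp}_{v}$ aggregates genuine potential responses, so a larger value indicates that the causal structure centered at $v$ more strongly and consistently yields the label. The remaining subtlety is confirming that the empirical approximations in Equations~\ref{eq:NM3}--\ref{eq:NM6} preserve this reading; I would close this by noting that they are unbiased estimators of the corresponding expectations, so the defined expressivity converges to the population quantity $\mathbb{E}[y_{v}]$ under the intervention-induced distribution, completing both the well-definedness and the causal interpretation of the statement.
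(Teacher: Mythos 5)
Your proposal is correct and follows essentially the same route as the paper's own proof: both treat the statement as a well-definedness-and-interpretation claim, identify $\text{exp}_{v}(\widehat{\mathcal{M}}(\mathcal{G},\theta))$ with the expectation $\sum_{y_{v}} y_{v}\, p^{\widehat{\mathcal{M}}(\mathcal{G},\theta)}(y_{v})$ of the node label under the trained GNN-NCM's distribution (well-defined because, by Theorem~\ref{thm:nodeex}, that probability is computable), and justify its causal rather than associational character by unfolding Equations~\ref{eq:NM6} and \ref{eq:NM3} into interventional quantities---the paper phrases this through the PCH intervention layer $\mathrm{L}_2$, while you phrase it through Theorem~\ref{thm:thmgnnncm} and the do-calculus semantics of Definition~\ref{Valuation}, which amounts to the same thing. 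One small caution: your closing claim that the approximations in Equations~\ref{eq:NM3}--\ref{eq:NM6} are unbiased estimators of the corresponding expectations is neither needed for the statement nor established by the paper, which simply takes the expectation itself as the definition of expressivity.
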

In other words, the node expressivity reflects how well the node is in the causal explanatory subgraph. Now we are ready to realize GNN causal explanation based on learned GNN-NCMs. Given a graph $G$, we start from a random node $v$, and build the causal structure $\mathcal{G}$ centered on $v$. By Algorithm \ref{Algorithm1}, we can reach an optimized GNN-NCM $\widehat{\mathcal{M}}(\mathcal{G}, \theta^{*})$ and obtain the $v$'s expressivity. 

We repeat this process for all nodes in the graph $G$ and find the node $v^*$ with the associated $\widehat{\mathcal{M}}(\mathcal{G},\theta^*)$ yielding the highest expressivity $\text{exp}_{v^*}(\widehat{\mathcal{M}}(\mathcal{G},\theta^*))$. The underlying subgraph of the causal structure centered by $v^*$ is then treated as the causal explanatory subgraph $\Gamma$. Algorithm \ref{Algorithm2} describes the learning 
process.

\section{Experiments}
\label{sec:exp}

\begin{table}[!t]
\centering
\footnotesize
\addtolength{\tabcolsep}{-2pt}
\caption{Dataset statistics.} 
\begin{tabular}{lccc}
\toprule
& \textbf{Avg. \#nodes} & \textbf{ Avg. \#edges} & \textbf{ \#test graphs} \\
\midrule
\textbf{BA+House} & $11.97$ & $18.17$ & $500$ \\
\textbf{BA+Grid} & $15.96$ & $24.20$ & $500$ \\
\textbf{BA+Cycle} & $10.0$ & $10.5$ & $500$ \\
\textbf{Tree+House} & $12$ & $13$ & $500$ \\
\textbf{Tree+Cycle}  & $13$ & $13.50$ & $500$ \\
\textbf{Tree+Grid}  & $24$ & $27$ & $500$ \\
\textbf{Benzene} & $20.48$ & $21.73$ & $100$ \\
\textbf{Fluoride carbonyl} & $20.66$ & $22.03$ & $100$ \\
\bottomrule
\end{tabular}
\label{tab:dataset}
\end{table}

\subsection{Experimental Setup}

\noindent {\bf Datasets:}
Following prior works~\cite{GNNEx19,lin2021generative}, we use six synthetic datasets,   and two real-world datasets with groundtruth explanation for evaluation. 
Dataset statistics are shown in Table~\ref{tab:dataset}.

\begin{itemize}
\item \emph{Synthetic graphs}: {\bf 1) BA+House:} This graph stems from a base random Barabási-Albert (BA) graph attached with a 5-node ``house"-structured motif as the groundtruth explanation;  {\bf 2) BA+Grid:} This graph contains a base random BA graph and is attached with a 9-node ``grid" motif as the groundtruth explanation; 
{\bf 3) BA+Cycle:} A 6-node ``cycle" motif is appended to randomly chosen nodes from the base BA graph. The ``cycle" motif is the groundtruth explanation;
{\bf 4) Tree+House:}  The core of this graph is a balanced binary tree. The 5-node ``house" motif, as the groundtruth explanation, is attached to random nodes from the base tree. 
{\bf 5) Tree+Grid:} Similarly, binary tree a the core graph and a 9-node ``grid" motif as the groundtruth explanation is attached;
{\bf 6) Tree+Cycle:} A 6-node ``cycle" motif, the groundtruth explanation, is appended to nodes from the binary tree.  
The label of the synthetic graph is decided by the label of nodes in the  groundtruth explanation. Following existing works~\cite{GNNEx19,lin2021generative}, a node $v$'s label $y_v$ is set to be 1 if $v$ is in the groundtruth, and 0 otherwise. Hence, in these graphs, the base graph acts as the non-causal subgraph that can cause the spurious correlation, while the attached motif can be seen as the causal subgraph, as it does not change across graphs and decides the graph label.
\item \emph{Real-world graphs:} We use two representative real-world graph datasets with groundtruth~\cite{agarwal2023evaluating}. {\bf 1) Benzene:} it includes 12,000 molecular graphs extracted from the ZINC15~\cite{sterling2015zinc} database and the task is to identify whether a given molecule graph has a benzene ring or not. The groundtruth explanations are the nodes (atoms) forming the benzene ring. {\bf 2) Fluoride carbonyl:}  This dataset contains 8,671 molecular graphs with two classes: a positive class means a molecule graph contains a fluoride (F-) and a carbonyl (\text{C=O}) functional group. The groundtruth explanation consists of combinations of fluoride atoms and carbonyl functional groups within a given molecule. 
\end{itemize} 

\noindent {\bf Models and parameter setting:} 
In {\name}, we use a feedforward neural network to parameterize GNN-NCM. The neural network consists of an input layer, two fully connected hidden layers, and an output layer. ReLU activation functions is used in all hidden layers, while a softmax activation function is applied to the output layer. The input to the network is the target node $v$'s node effects and edge effects (see Equation~\ref{causal net}), whose values are sampled from a standard Gaussian distribution, and the output is the predicted causal effect on $v$.  The detailed hyperparameters are shown in Appendix~\ref{app:setup}. The hyperparameters in the compared GNN explainers are optimized based on their source code.

\noindent {\bf Baseline GNN explainers:}
We compare {\name} with both association-based and causality-inspired GNN explainers.
We choose 4 representative 
ones: gradient-based Guidedbp~\cite{gu2019saliency},  perturbation-based  GNNExplainer~\cite{GNNEx19},  surrogate-based PGMExplainer~\cite{vu2020pgm}, and causality-inspired GEM~\cite{lin2021generative}, RCExplainer~\cite{wang2022reinforced}, and
OrphicX~\cite{lin2022orphicx}. 
We use the public source code of these explainers for comparison. The causality-inspired explainers are inspired by causality concepts to infer the explanatory subgraph, but they inherently do not provide causal explanations.

\noindent {\bf Evaluation metrics:}
Given a set of testing graphs $\mathbb{G}$. For each test graph $G \in \mathbb{G}$, we let its groundtruth explanatory subgraph be $\Gamma_G$ and the estimated explanatory subgraph by a GNN explainer be $\Gamma$. We use two common metrics, i.e., graph explanation accuracy and explanation recall from the literature
\cite{agarwal2023evaluating}. In addition, to justify the superiority of our causal explainer, we introduce a third metric groundtruth match accuracy, which is the most challenging one.  

\begin{itemize}
\item {\bf Graph explanation accuracy:} For a graph $G$, the graph explanation accuracy is defined as the fraction of nodes in the estimated explanatory subgraph $\Gamma$ that are contained in the groundtruth $\Gamma_G$, i.e.,  ${|V(\Gamma) \cap V(\Gamma_G)|}/{|V(\Gamma_G)|}$. We then report the average accuracy across all testing graphs.  

\item {\bf Graph explanation recall:} Different GNN explainers output the estimated explanatory subgraph with different node sizes. When two explainers output the same number of nodes in  $\Gamma_G$, the one with a smaller node size should be treated as having a better quality.  To account for this, we use the explanation recall metric that is defined as ${|V(\Gamma) \cap V(\Gamma_G)|}/{|V(\Gamma)|}$ for a given graph $G$. 
We then report the average recall across all testing graphs.

\item {\bf Groundtruth match accuracy:} For a testing graph $G$, we count a 1 if the estimated $\Gamma$ and groundtruth  $\Gamma_G$ exactly match, i.e., $\Gamma_G = \Gamma$,  and 0 otherwise. In other words, the groundtruth match accuracy of all testing graphs $\mathbb{G}$ is defined as 
${\sum_{G \in \mathbb{G}} {\bf 1}[{\Gamma_G = \Gamma}]}/{|\mathbb{G}|}$, 
%$\frac{\sum_{G \in \mathbb{G}} {\bf 1}[{\Gamma_G = \Gamma}]}{|\mathbb{G}|}$, 
where ${\bf 1}[\cdot]$ is an indicator function. 
\vspace{-4mm}
\end{itemize}

\begin{table}[!t]\renewcommand{\arraystretch}{0.85}
\centering
\footnotesize
\caption{Comparison results on the synthetic datasets. B.H.: BA+House; B.G.: BA+Grid; B.C.: BA+Cycle; T.H.: Tree+House; T.G.: Tree+Grid; T.C.: Tree+Cycle.}
\begin{tabular}{lcccccc}
\toprule
\multicolumn{7}{c}{\textbf{Graph explanation accuracy (\%)}} \\ 
\noalign{\smallskip}
\hline 
\noalign{\smallskip}
& \textbf{B.H.} & \textbf{B.G.} & \textbf{B.C.} & \textbf{T.H.} & \textbf{T.C.}& \textbf{T.G.}  \\
\textbf{GNNExp.~\cite{GNNEx19}} & 75.60 & 76.16 & 75.13 & 77.24 & 71.60 & 72.18  \\ 
\textbf{PGMExp.~\cite{vu2020pgm}} & 61.60 & 44.98 & 63.07 & 58.28 & 49.90 & 37.42 \\
\textbf{Guidedbp~\cite{gu2019saliency}} & 60.00 & 0.00 & 66.67 & 0.00 & 0.00 & 0.00  \\
\textbf{GEM~\cite{lin2021generative}} & 98.2 & 88.19 & 97.91 & 96.23 & 95.51 & 86.96  \\
\textbf{RCExp.~\cite{wang2022reinforced}} & \textbf{100.00} & 88.89 & \textbf{100.00} & \textbf{100.00} & \textbf{100.00} & \textbf{100.00}  \\
\textbf{OrphicX~\cite{lin2022orphicx}} & 88.00 & 89.00 & 55.65 & 96.20 & \textbf{100.00} & 99.93  \\
\textbf{\name} & \textbf{100.0} & \textbf{100.00} & 83.33 & \textbf{100.0} & 82.67 & \textbf{100.00} \\
\hline \hline 
\noalign{\smallskip}

\multicolumn{7}{c}{\textbf{Graph explanation recall (\%)}} \\ 
\noalign{\smallskip}
\hline 
\noalign{\smallskip}
& \textbf{B.H.} & \textbf{B.G.} & \textbf{B.C.} & \textbf{T.H.} & \textbf{T.C.}& \textbf{T.G.}  \\
\textbf{GNNExp.~\cite{GNNEx19}} & 37.62 & 52.72 & 45.08 & 32.18 & 33.05 & 40.60  \\ 
\textbf{PGMExp.~\cite{vu2020pgm}} & 30.80 & 31.14 & 37.84 & 24.28 & 23.93 & 21.05 \\
\textbf{Guidedbp~\cite{gu2019saliency}} & 12.40 & 17.94  & 16.18 & 5.99 & 12.98 & 15.38  \\
\textbf{GEM~\cite{lin2021generative}} & 39.18 & 50.86 & 45.40 & 38.75 & 34.65 & 41.20  \\
\textbf{RCExp.~\cite{wang2022reinforced}} & 100.00 & 60.00 & 89.52 & 45.45 & 46.60 & 39.13   \\
\textbf{OrphicX~\cite{lin2022orphicx}} & 98.08 & \textbf{97.71} & 60.00 & 41.38 & 59.22 & 40.61   \\
\textbf{\name} & \textbf{100.0} & 60.55 & \textbf{90.00} & \textbf{61.67} & \textbf{68.15} & \textbf{49.05} \\
\hline \hline 
\noalign{\smallskip}
\multicolumn{7}{c}{\textbf{Groundtruth match accuracy (\%)}} \\ 
\noalign{\smallskip}
\hline 
\noalign{\smallskip}
& \textbf{B.H.} & \textbf{B.G.} & \textbf{B.C.} & \textbf{T.H.} & \textbf{T.C.}& \textbf{T.G.}  \\
\textbf{GNNExp.~\cite{GNNEx19}} & 0.20 & 2.20 & 2.20 & 0.80 & 0.40 & 0.20  \\ 
\textbf{PGMExp.~\cite{vu2020pgm}} & 1.00 & 0.00 & 0.00 & 3.80 & 0.00 & 0.00 \\
\textbf{Guidedbp~\cite{gu2019saliency}} & 1.00 & 0.6 & 0.6 & 0.6 & 0.2 & 0.6  \\
\textbf{GEM~\cite{lin2021generative}} & 0.80 & 6.00 & 6.00 & 2.50 & 1.20 & 1.00  \\
\textbf{RCExp.~\cite{wang2022reinforced}} & 100.00 & 0.00 & 49.60 & 0.00 & 0.00 & 0.00 \\
\textbf{OrphicX~\cite{lin2022orphicx}} & 39.00 & 43.00 & 5.00 & 1.40 & 21.00 & 33.00 \\
\textbf{\name} & \textbf{100.0} & \textbf{44.00} & \textbf{67.60} & \textbf{99.40} & \textbf{61.20} & \textbf{46.00} \\
\hline
\end{tabular}
\label{tab:synthetic_comparison}
\end{table}

\subsection{Results on Synthetic Datasets}

\noindent {\bf Comparison results:}
Table~\ref{tab:synthetic_comparison} shows the results of all the compared GNN explainers on the 6 synthetic datasets with 500 testing graphs and 3 metrics. We have several observations. In terms of explanation accuracy, {\name} performs comparable or slightly worse than causality-inspired methods. This is because, to ensure high accuracy, the estimated explanatory subgraph of these methods should have a large size.
This can be reflected by the explanation recall, where %GEM's 
explanation recall is significantly reduced. Overall, the causality-inspired methods obtain higher accuracies than purely association-based methods. 

More importantly, {\name} drastically outperforms all the compared GNN explainers in terms of groundtruth match. Such a big difference demonstrates all the association-based and causality-inspired GNN explainers are insufficient to uncover the exact groundtruth. 
The is due to existing GNN explainers inherently learning from \emph{correlations} among nodes/edges in the graph, and capturing spurious correlations. 
Instead, our causal explainer 
 {\name} can do so much more accurately. This verifies the causal explainer indeed can intrinsically uncover the causal relation between the explanatory subgraph and the graph label.

\begin{figure*}[!t]
  \centering
  \begin{subfigure}[b]{0.158\textwidth}
    \includegraphics[width=\linewidth]{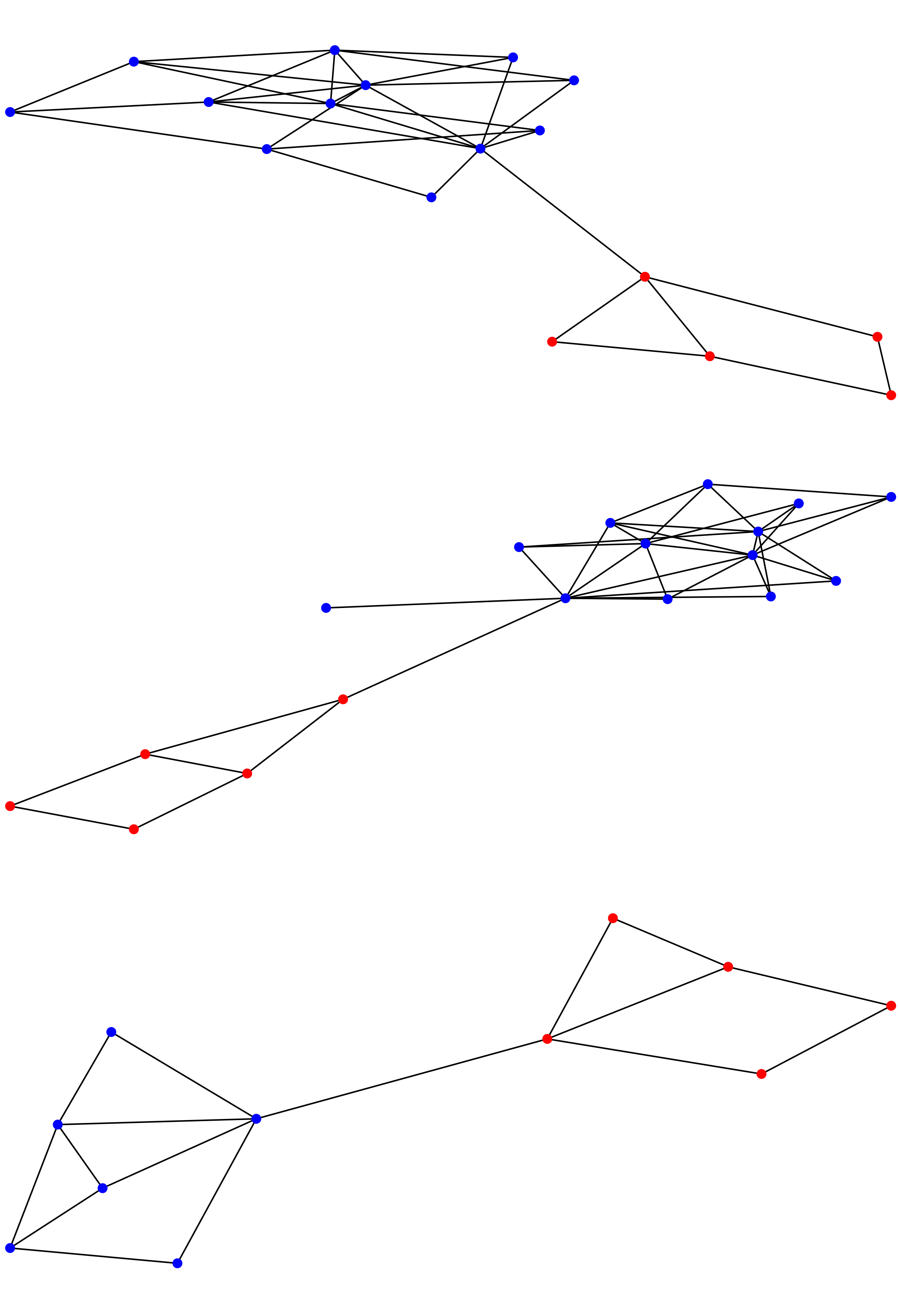}
    \caption{BA+House}
  \end{subfigure}
  \begin{subfigure}[b]{0.158\textwidth}
    \includegraphics[width=\linewidth]{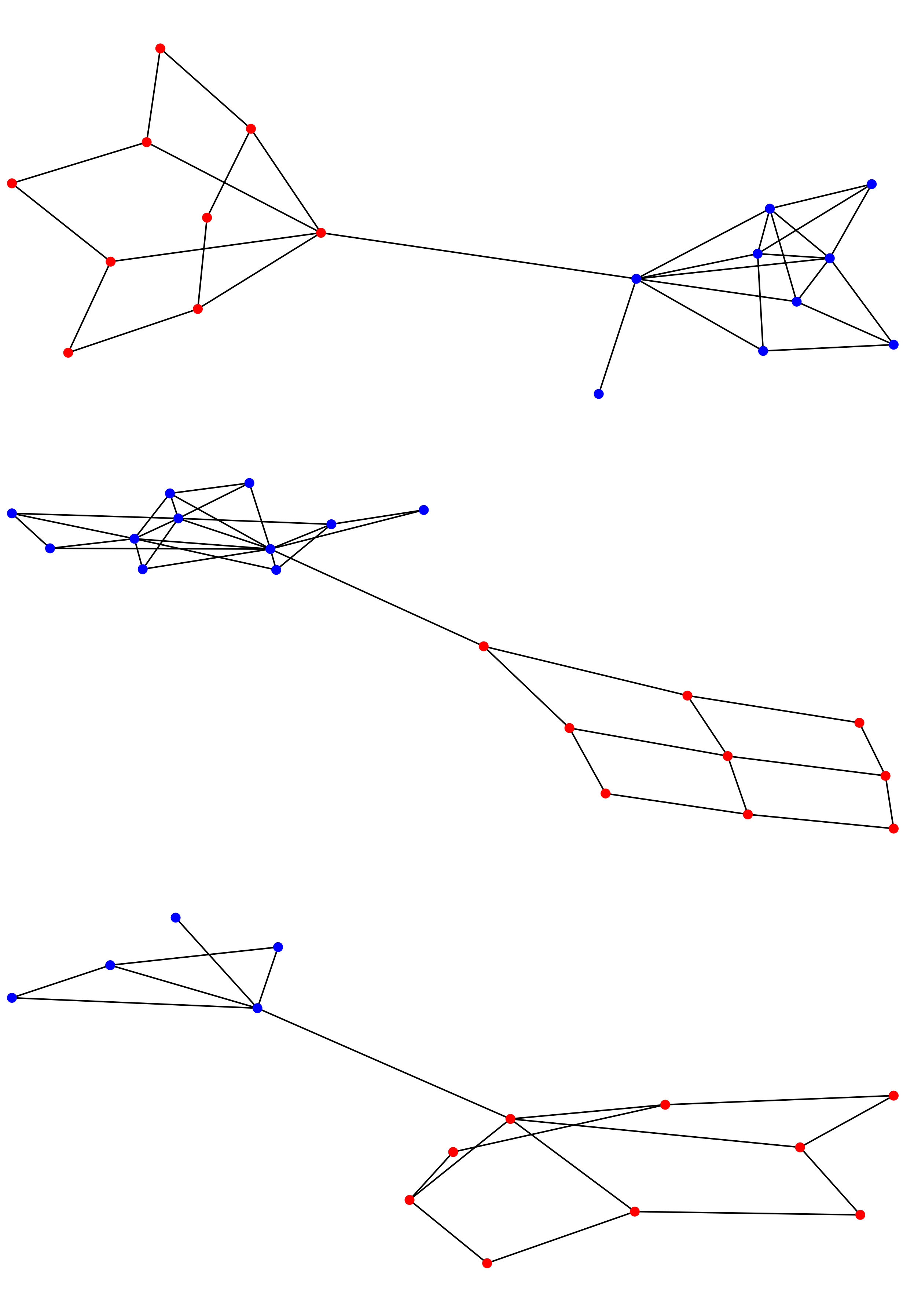}
    \caption{BA+Grid}
  \end{subfigure}
    \begin{subfigure}[b]{0.158\textwidth}
    \includegraphics[width=\linewidth]{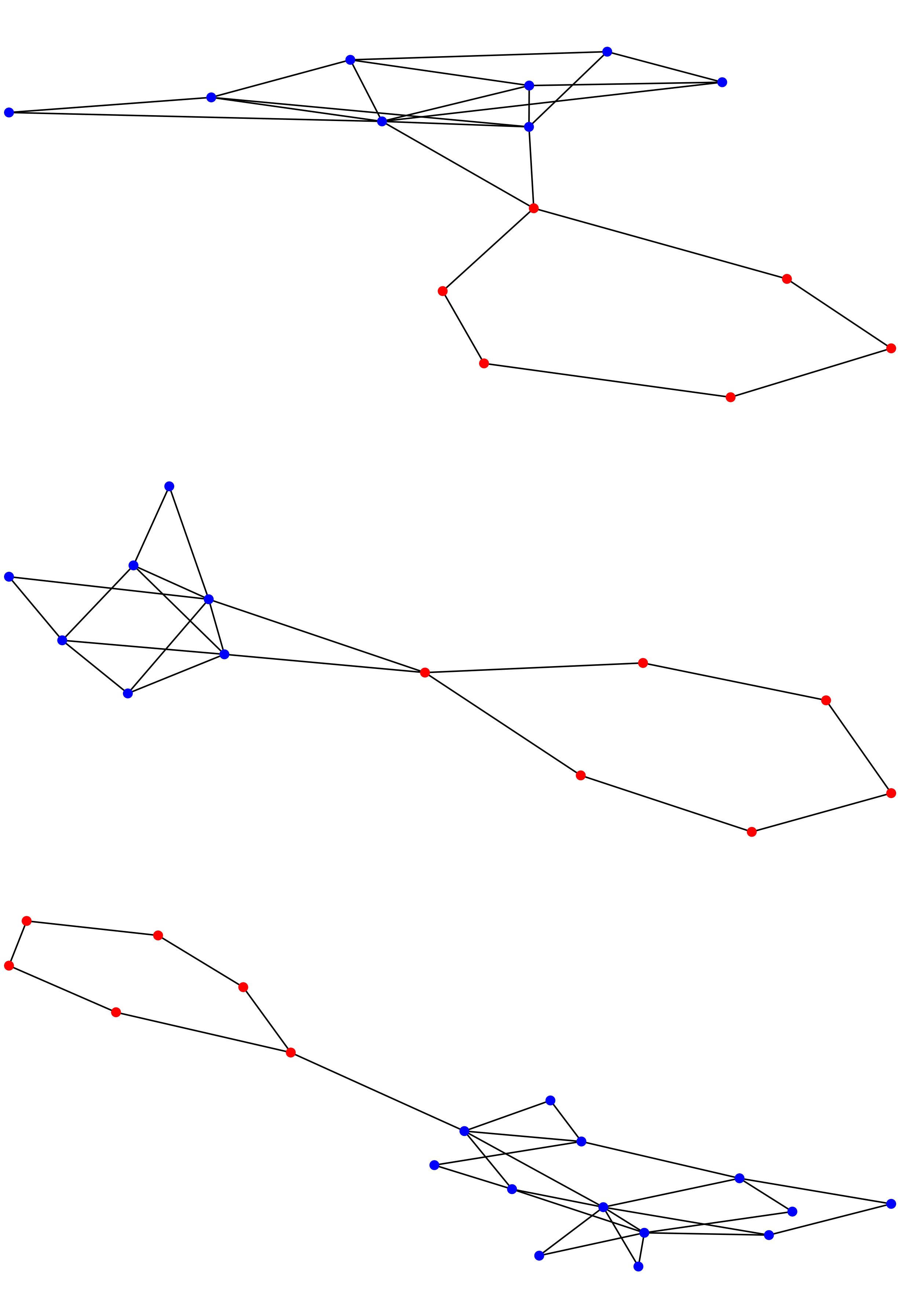}
    \caption{BA+Cycle}
  \end{subfigure}
  \begin{subfigure}[b]{0.165\textwidth}
    \includegraphics[width=\linewidth]{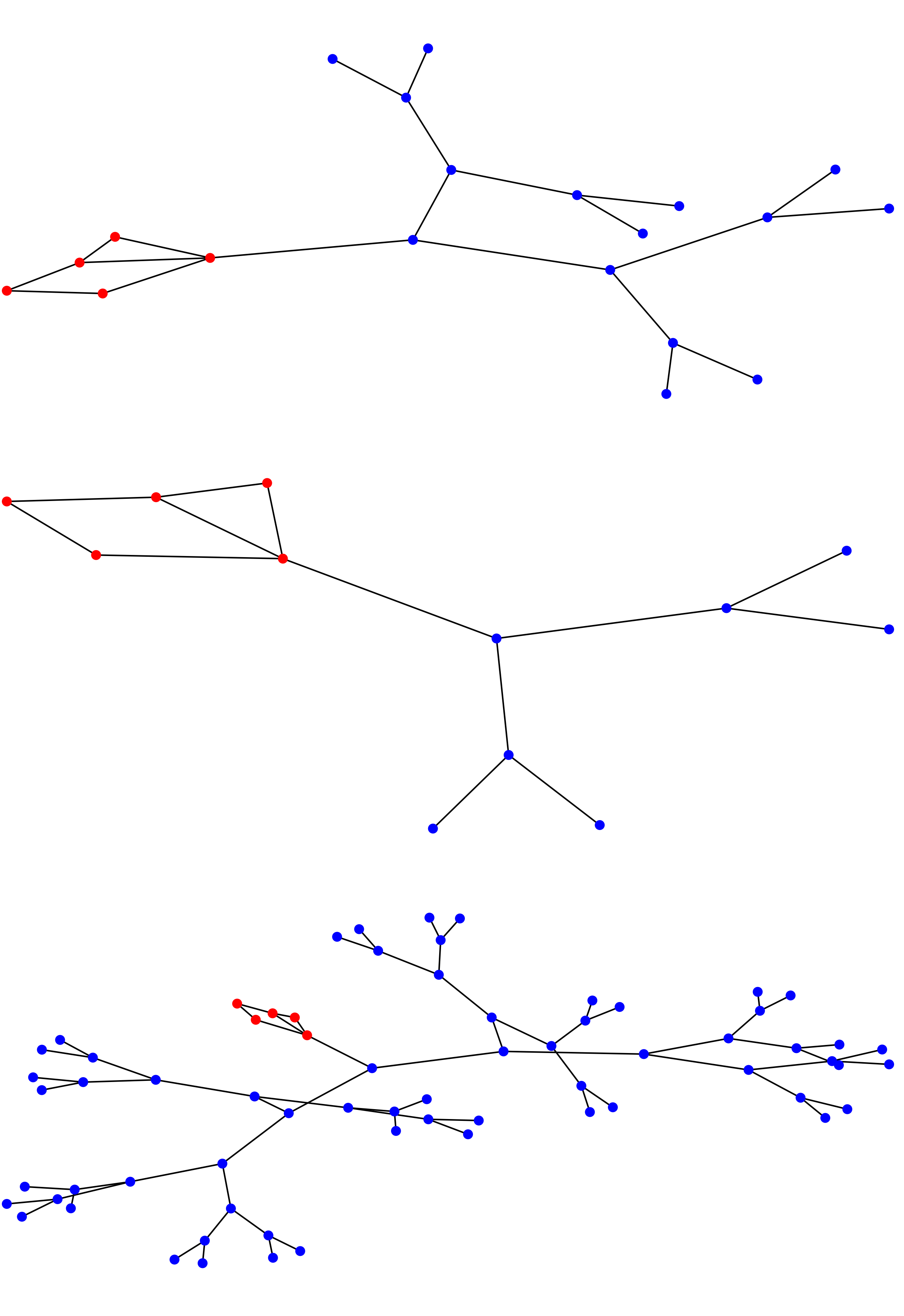}
    \caption{Tree+House}
  \end{subfigure}
  \begin{subfigure}[b]{0.158\textwidth}
    \includegraphics[width=\linewidth]{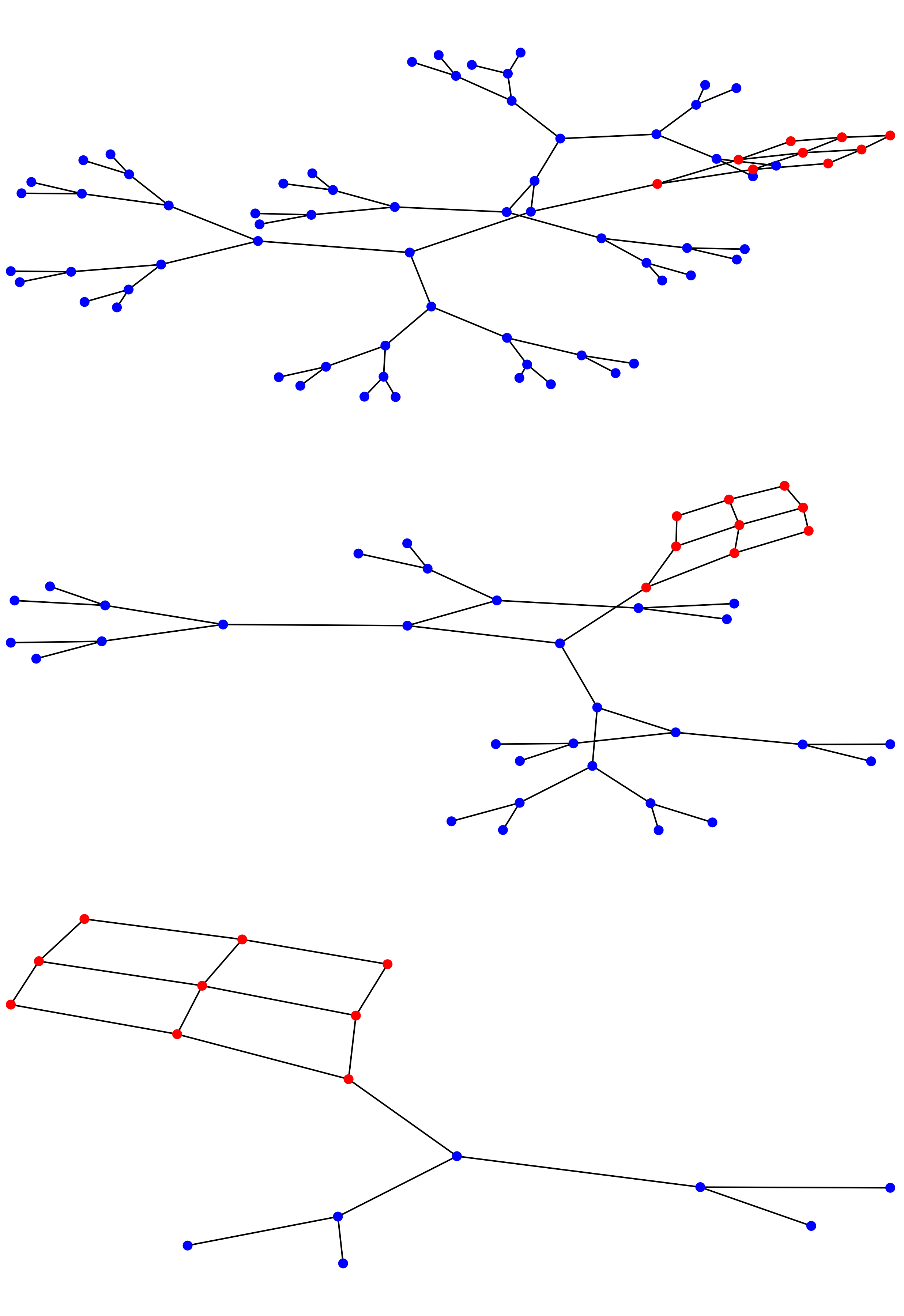}
    \caption{Tree+Grid}
  \end{subfigure}
  \begin{subfigure}[b]{0.158\textwidth}
    \includegraphics[width=\linewidth]{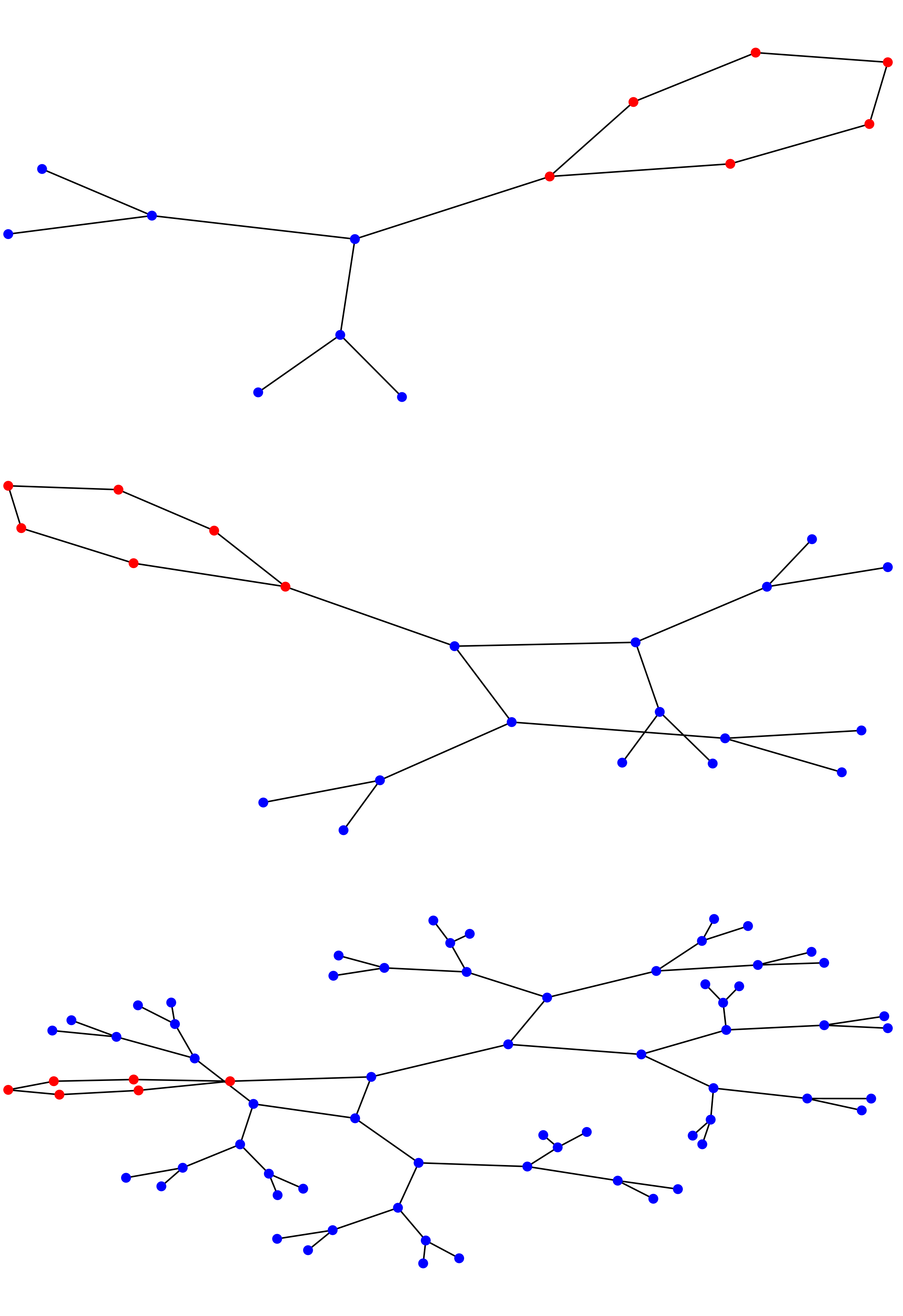}
    \caption{Tree+Cycle}
  \end{subfigure}
  \caption{Visualizing explanation results (subgraph containing the {\color{red} red} nodes) by our {\name} on synthetic graphs.
  }
\label{fig:vis_syn}
\end{figure*}

\begin{figure}[!t]
  \centering
    \begin{subfigure}{0.45\linewidth}
      \includegraphics[width=\linewidth]{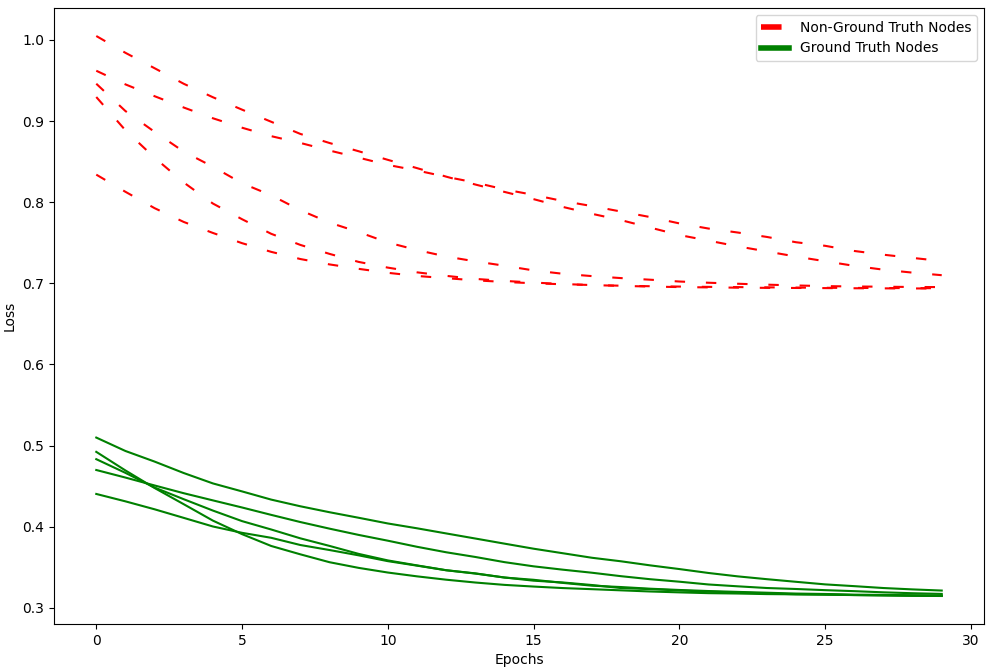}
    \end{subfigure}
    \begin{subfigure}{0.45\linewidth}
      \includegraphics[width=\linewidth]{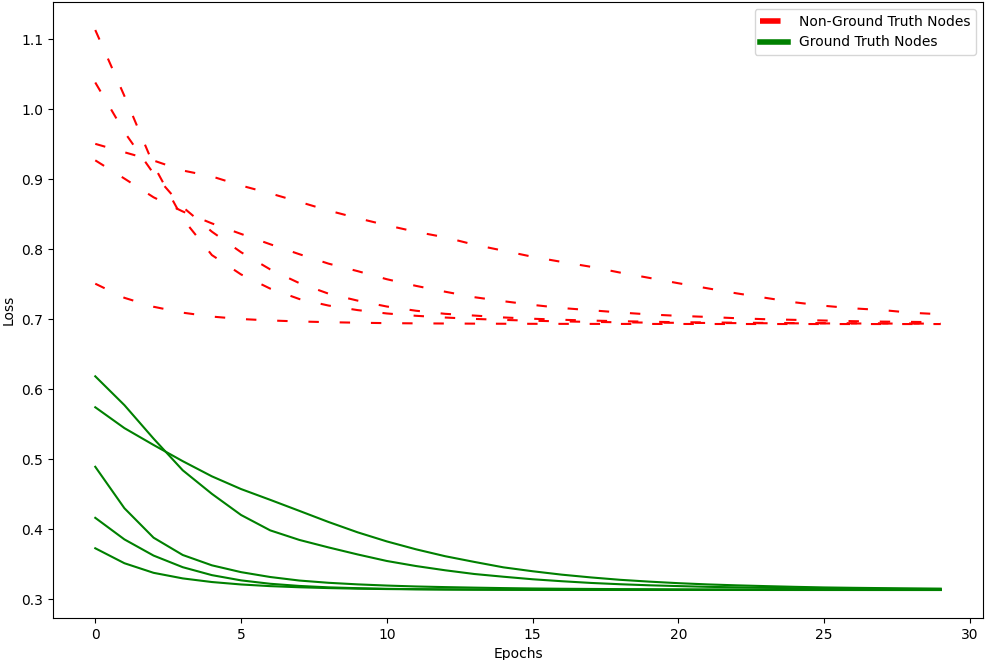}
    \end{subfigure}
    \caption{Loss curves of training the GNN-NCMs on the groundtruth nodes ({\color{green} green curves}) and non-groundtruth ones ({\color{red} red curves}) on two random chosen graphs from BA+House. More examples in other datasets are shown in Appendix~\ref{app:exp}.}
        \label{fig:loss_syn}
\end{figure}

\begin{figure}[!t]
  \centering
  \begin{subfigure}[b]{0.45\textwidth}
    \includegraphics[width=\linewidth]{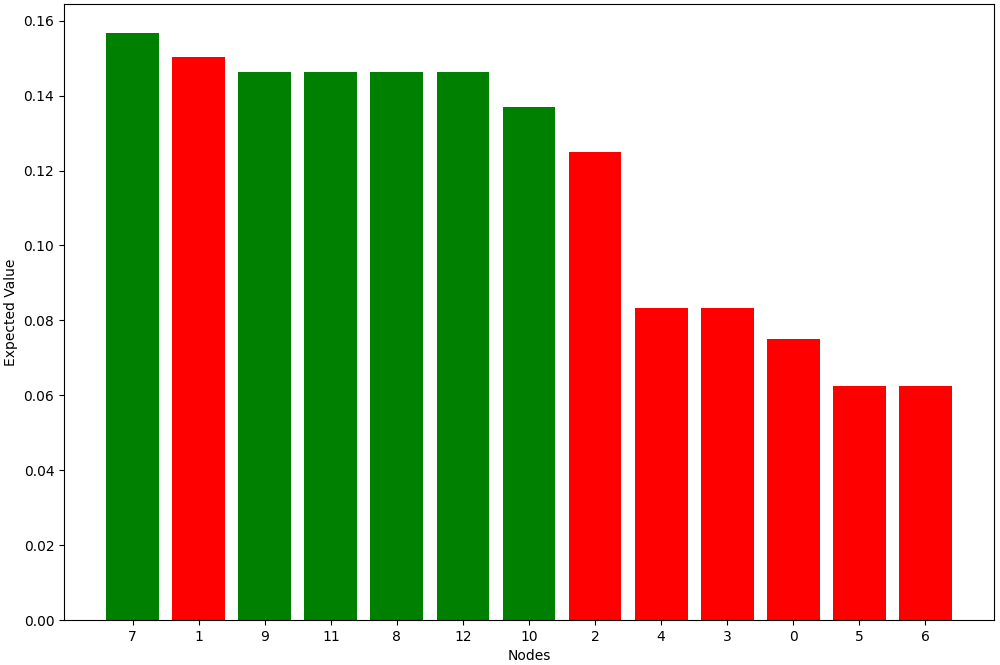}
  \end{subfigure}
  \begin{subfigure}[b]{0.45\textwidth}
    \includegraphics[width=\linewidth]{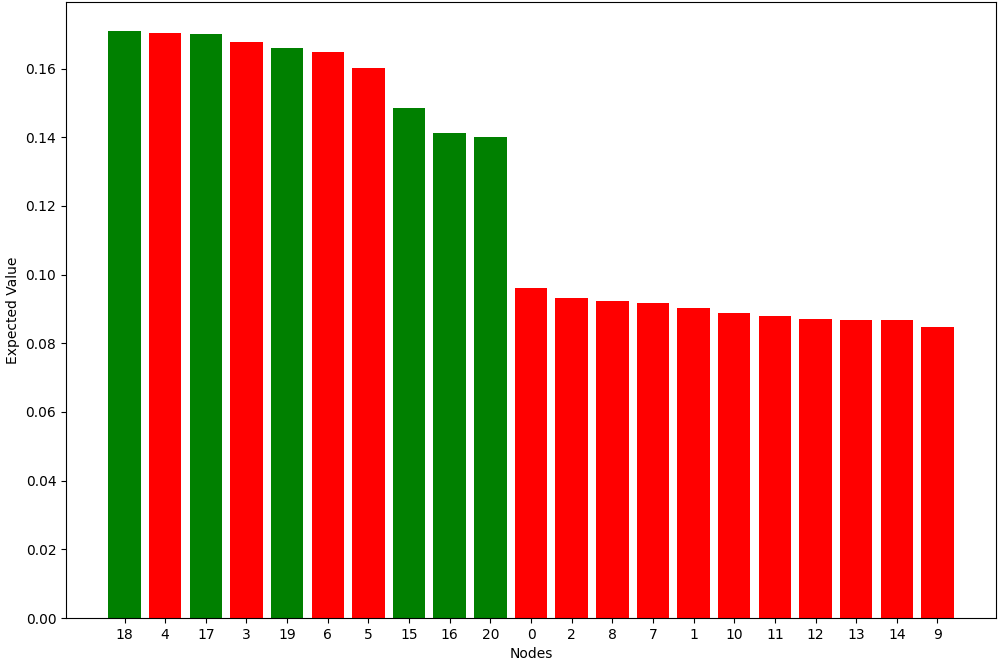}
  \end{subfigure}
   \caption{Node expressivity distributions on two unsuccessful graphs from BA+Cycle. {\color{green} Green bars} correspond to nodes that are in the groundtruth, while {\color{red} red bars} correspond to nodes that are not. More examples in other datasets are shown in Appendix~\ref{app:exp}.}
    \label{fig:exp_dist}
\end{figure}

\noindent {\bf Visualization results:} Figure~\ref{fig:vis_syn} visualizes the explanations results of some testing graphs in the four synthetic datasets. We note that there are different ways for the groundtruth subgraph to attach to the base synthetic graph. We can see {\name}'s output exactly matches the groundtruth in these cases, while the existing GNN explainers cannot. One reason could be that existing GNN explainers are sensitive to the spurious relation. 

\noindent {\bf Loss curve:} 
Figure~\ref{fig:loss_syn} shows the loss curves to train our GNN-NCM on a set of nodes, where some nodes are in the groundtruth and some are not from BA+House. We can see the loss decreases stably for groundtruth nodes, while the loss for nodes not from the groundtruth are relatively high. This reflects our designed GNN-NCM makes it easier to learn groundtruth nodes. That being said, {\name} indeed tends to find the causal subgraph.    

\noindent {\bf Node expressivity distribution:} We notice {\name} still misses finding the groundtruth explanatory subgraph for some graphs. 
One possible reason could be that, theoretically, our GNN-SCM can always uncover the causal subgraph, but practically, it is challenging to train the optimal one.  
Here, we randomly select 2 such unsuccessful graphs in BA+Cycle and plot their distributions on the node expressivity in Figure~\ref{fig:exp_dist}. 
We observe that, though the groundtruth nodes are not always having the best expressivity, they are still at the top. 

\begin{table}[!t]\renewcommand{\arraystretch}{0.8}
\centering
\footnotesize
\caption{Comprehensive comparison results on the real-world datasets.}
\begin{tabular}{lcccccc}
\noalign{\smallskip}
\toprule
& \multicolumn{2}{c}{\textbf{Exp. Acc. (\%)}} & \multicolumn{2}{c}{\textbf{Exp. Recall (\%)}} & \multicolumn{2}{c}{\textbf{GT Match Acc. (\%)}} \\
\cmidrule(r){2-3} \cmidrule(r){4-5} \cmidrule(r){6-7}
\textbf{Method} & \textbf{Benzene} & \textbf{F.C.} & \textbf{Benzene} & \textbf{F.C.} & \textbf{Benzene} & \textbf{F.C.} \\
\midrule
{\bf GNNExp.~\cite{GNNEx19}} & 66.05 & 44.44 & 18.88 & 14.42 & 0.00 & 0.00 \\
{\bf PGMExp.~\cite{vu2020pgm}} & 33.33 & 17.78 & 7.51 & 4.98 & 0.00 & 0.00 \\
{\bf Guidedbp~\cite{gu2019saliency}} & 0.00 & 0.00 & 9.06 & 8.00 & 0.00 & 0.00 \\
{\bf GEM~\cite{lin2021generative}} & 71.98 & 46.22 & 19.80 & 14.57 & 0.00 & 0.00 \\
{\bf RCExp.~\cite{wang2022reinforced}} & 0.20 & 0.05 & 10.85 & 2.01 & 0.00 & 0.00 \\
{\bf OrphicX~\cite{lin2022orphicx}} & 47.63 & 11.14 & 30.31 & 10.01 & 3.40 & 5.50 \\
\textbf{\name} & \textbf{73.46} & \textbf{66.67} & \textbf{21.35} & \textbf{16.43} & \textbf{66.67} & \textbf{75.00} \\
\bottomrule
\end{tabular}
\label{tab:real_comparison}
\end{table}

\begin{figure}[!t]
  \centering
  \begin{subfigure}[b]{.45\textwidth}
    \begin{subfigure}[b]{0.45\textwidth}
    \includegraphics[width=\linewidth]{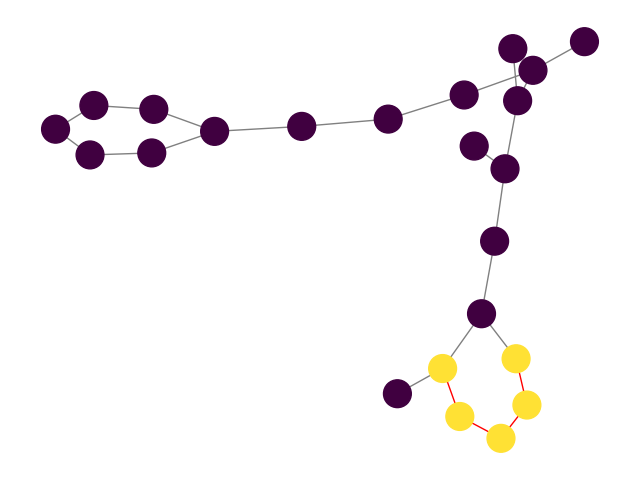}
    \end{subfigure}
    \begin{subfigure}[b]{0.45\textwidth}
    \includegraphics[width=\linewidth]{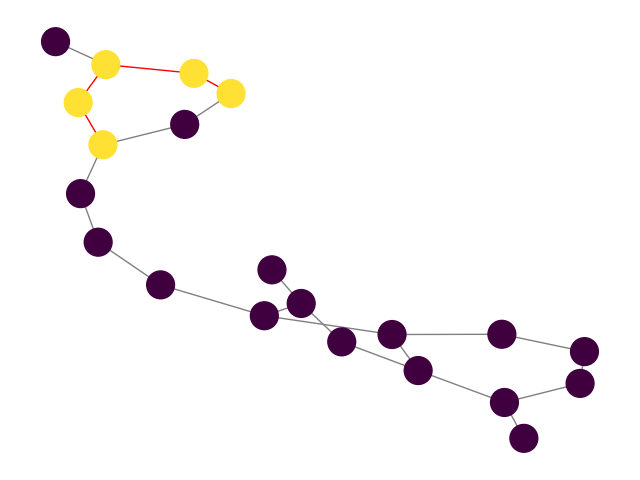}
    \end{subfigure}
  \end{subfigure}
  %\hfill
  \begin{subfigure}[b]{.45\textwidth}
  \begin{subfigure}[b]{0.45\textwidth}
    \includegraphics[width=\linewidth]{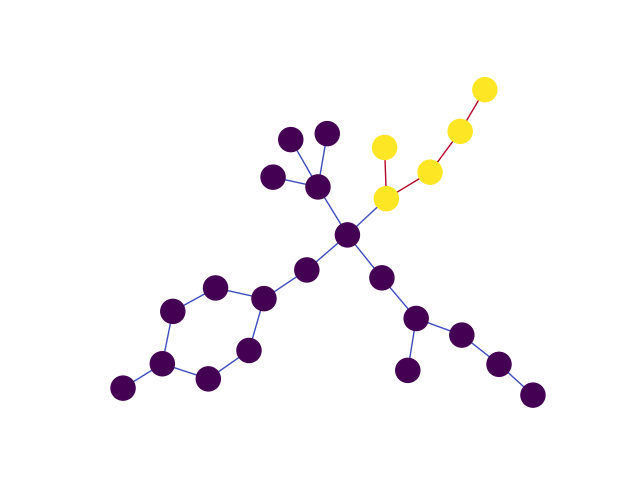}
    \end{subfigure}
    \begin{subfigure}[b]{0.45\textwidth}
    \includegraphics[width=\linewidth]{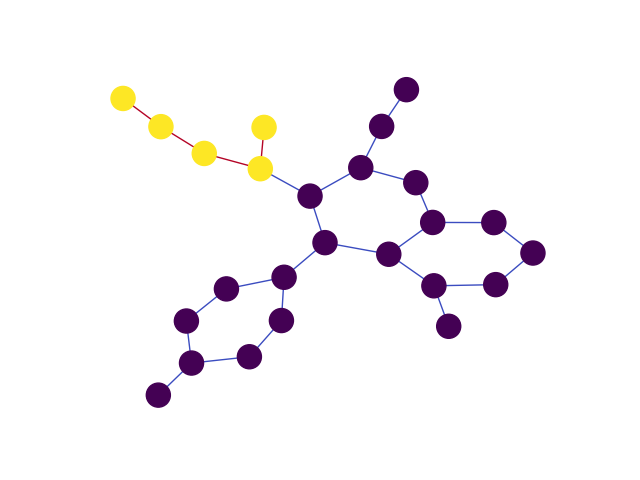}
    \end{subfigure}
  \end{subfigure}
  \caption{Explanation results (subgraph containing the {\color{yellow} yellow} nodes) by our {\name} on real-world graphs. The left and right two graphs are in Benzene and F.C., respectively. 
  }
\label{fig:vis_real}
\end{figure}

\begin{figure}[!t]
  \centering
    \begin{subfigure}{0.45\linewidth}
      \includegraphics[width=\linewidth]{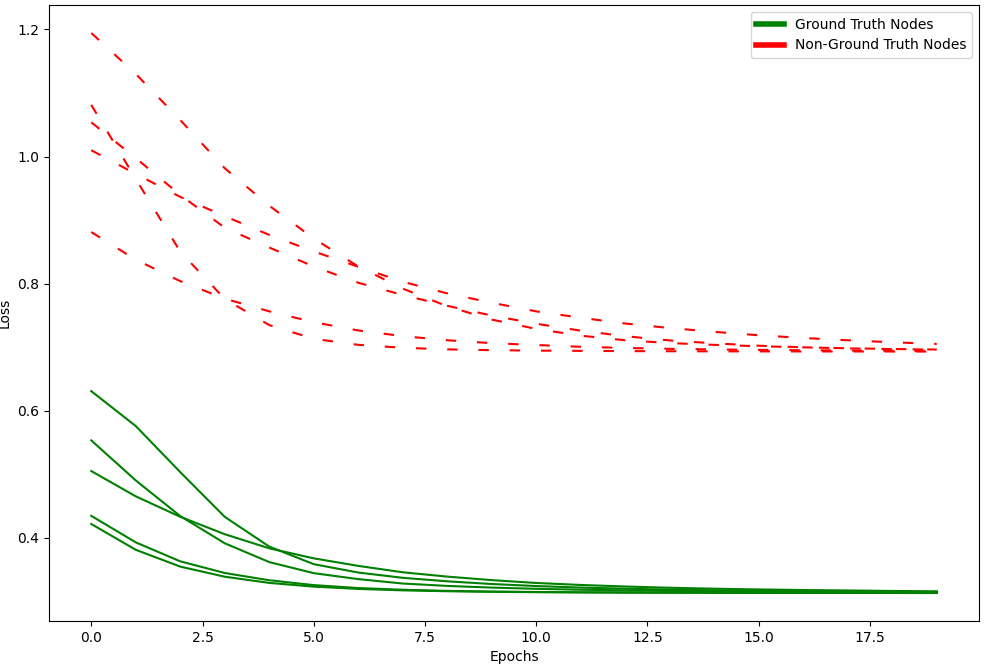}
    \end{subfigure}
    \begin{subfigure}{0.45\linewidth}
      \includegraphics[width=\linewidth]{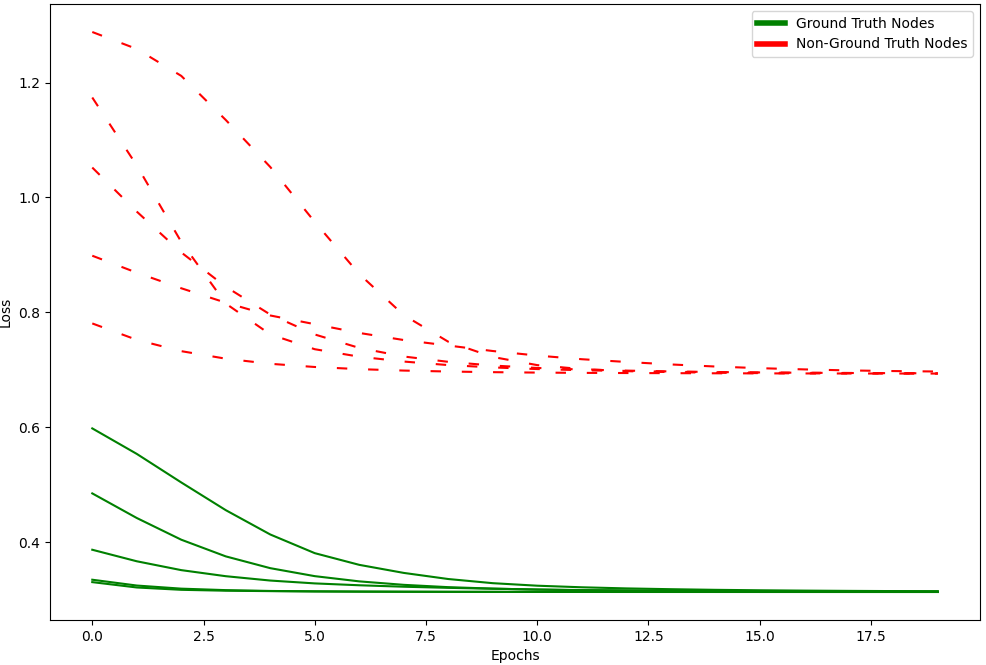}
    \end{subfigure}
    \caption{Loss curves of training the GNN-NCMs on the groundtruth nodes ({\color{green} green curves}) and non-groundtruth ones ({\color{red} red curves}) on two graphs from the two real-world datasets, respectively. More examples are shown in Appendix~\ref{app:exp}.}
        \label{fig:loss_real}
\end{figure}

\begin{figure}[!t]
  \centering
  \begin{subfigure}[b]{0.45\textwidth}
    \includegraphics[width=\linewidth]{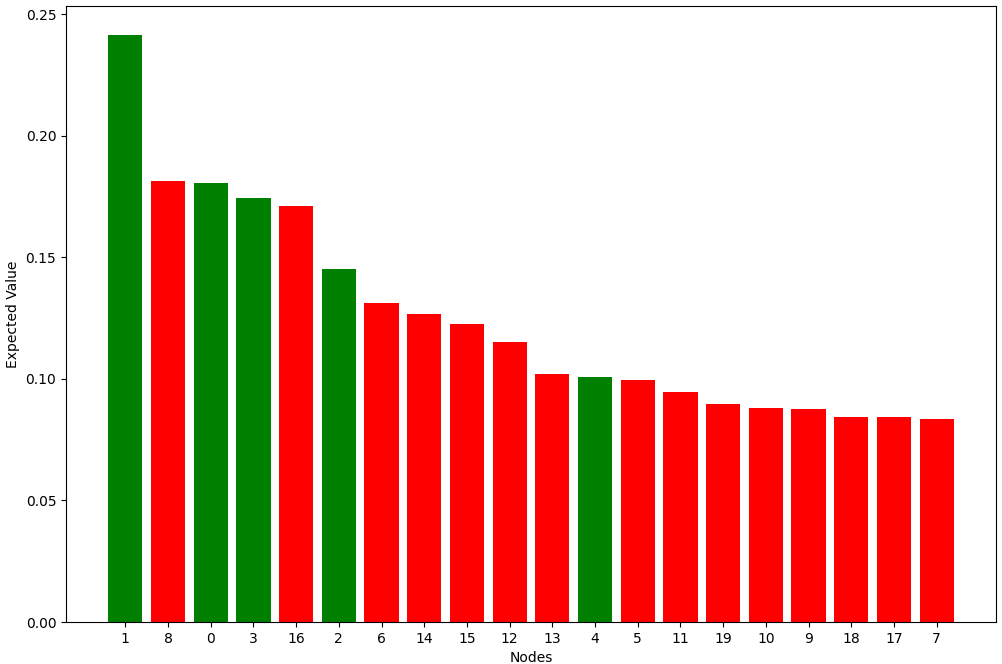}
  \end{subfigure}
  \begin{subfigure}[b]{0.45\textwidth}
    \includegraphics[width=\linewidth]{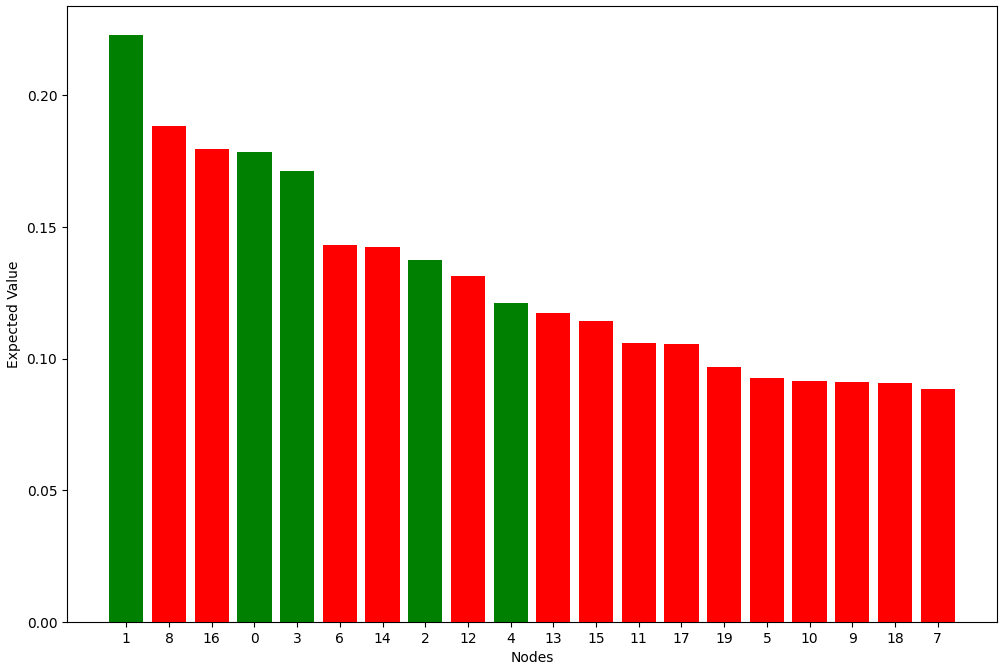}
  \end{subfigure}
   \caption{Node expressivity distributions on two unsuccessful graphs from the real-world datasets, respectively. {\color{green} Green bars} correspond to nodes that are in the groundtruth, while {\color{red} red bars} correspond to nodes that are not. More examples are in Appendix~\ref{app:exp}.}
    \label{fig:exp_dist_real}
\end{figure}

\subsection{Results on Real-World Datasets}

\noindent {\bf Comparison results:}
Table~\ref{tab:real_comparison} shows the results of all the compared explainers on the real-world datasets and three metrics. We have similar observations as those in Table~\ref{tab:synthetic_comparison}. 
Especially, no existing explainers can even find one exactly matched groundtruth. Particularly, the explanation subgraphs produced by the two causality-inspired baselines can cover the majority or almost all groundtruth in synthetic datasets (hence high accuracy), and the sizes of the explanation subgraphs are slightly larger than those of the groundtruth (hence relatively large recall). However, the causality-inspired baselines are not good at exactly matching the groundtruth, i.e., {groundtruth match accuracy is low overall}. Note also that the exact match of {\name} is also largely reduced (about $30\%$). One possible reason is that the groundtruth explanation in real-world graphs is not easy to define or even inaccurate. For instance, in MUTAG, both $NO_2$ and $NH_2$ motifs are considered as the "mutagenic" groundtruth in the literature. However, \cite{lin2021generative} found 32\% of non-mutagenic graphs contain $NO_2$ or $NH_2$, implying inaccurate groundtruth.
Here, we propose to also use an approximate groundtruth match accuracy, where we require the estimated subgraph to be a subset and its size is no less than 60\% of the groundtruth. With this new alternative metric, its value is much larger (i.e., 67\% and 75\%) on the two datasets. 

\noindent {\bf Visualization results:} Figure~\ref{fig:vis_real} visualizes the explanation results of some graphs in the real-world datasets. We observe the explanatory subgraphs found by {\name} approximately/exactly match the groundtruth.

\noindent {\bf Loss curve:} Figure~\ref{fig:loss_real} shows the loss curves to train our GNN-NCM on a set of groundtruth and non-groundtruth ones. Similarly, the loss decreases stably for groundtruth nodes, while not for non-groundtruth ones. Again, this implies our GNN-NCM tends to find the causal subgraph.     

\noindent {\bf Node expressivity distribution:}
We randomly select some unsuccessful graphs in real-world datasets and plot their distributions on the node expressivity in Figure~\ref{fig:exp_dist_real}. Still, though the groundtruth nodes do not always achieve the best expressivity, they are at the top.

\section{Conclusion}
%GNNs have achieved state-of-the-art performance in many graph-related tasks such as graph classification. 
GNN explanation, i.e., identifying the informative subgraph that ensures a GNN makes a particular prediction for a graph, is an important research problem. Though various GNN explainers have been proposed, they are shown to be prone to spurious correlations. 
We propose a \emph{causal} GNN explainer based on the fact that a graph often consists of a causal subgraph and fulfills the goal via causal inference. We then propose to train GNN neural causal models to uncover the causal explanatory subgraph. In future work, we will study the robustness of our CXGNN under the adversarial graph perturbation attacks~\cite{wang2019attacking,mu2021a,wang2022bandits,wang2023turning,wang2021certified,yanggnncert,li2024graph}. 

\section*{Acknowledgements}
We thank all the anonymous reviewers for their valuable feedback and constructive comments. Behnam and Wang are partially supported by the Amazon Research Award, the National Science Foundation (NSF) under Grant Nos. 2216926, 2241713, 2331302, and 2339686.  
Any opinions, findings conclusions, and recommendations expressed in this material are those of the author(s) and do not necessarily reflect the views of the funding agencies.

% ---- Bibliography ----
%
% BibTeX users should specify bibliography style 'splncs04'.
% References will then be sorted and formatted in the correct style.
%
\bibliographystyle{splncs04}
\bibliography{main}
\newpage 
\appendix
\section*{Appendix}

\section{A GNN-SCM Example} \label{app:scm-example}

%\noindent {\bf Intuitive example on what is/how to capture the causality/causal structure in a graph.} 
The causal structure of a graph is a subgraph centering on a reference node and accepts the SCM structure via Definition~\ref{causal network}. The goal of causality in a graph is to identify the subgraph with the maximum explainable node expressivity as Theorem~\ref{thm:nodeexp} that causally explains GNN predictions.
Below, we use a toy example graph to show how our explainer captures the causality in this graph.

\begin{wrapfigure}{r}{0.3\textwidth}
  \centering
    \begin{tikzpicture}
      \node[circle, draw] (A) at (0,0) {A};
      \node[circle, draw] (B) at (2,0) {B};
      \node[circle, draw] (C) at (1,-1) {C};
      \node[circle, draw] (D) at (3,0) {D};
      \draw[-] (A) -- (B);
      \draw[-] (A) -- (C);
      \draw[-] (B) -- (D);
    \end{tikzpicture}
  \caption{A toy example}
  \label{fig2}
\end{wrapfigure}
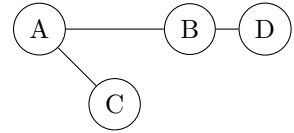
We use a toy example to demonstrate SCMs and the intervention process in GNNs. Figure \ref{fig2} shows a graph $G$ that contains four nodes $A$, $B$, $C$, and $D$, and three edges $A$-$B$,  $A$-$C$, and $B$-$D$. In GNNs, these edges contain messages passing between two nodes. For example, the message between two nodes A and B in the l-th layer of the GNN is $m^l_{A,B}$ = MSG($h^{l-1}_A$, $h^{l-1}_B$, $e_{A,B})$. If there exists an edge, it means that there is an interaction between nodes that has a specific value in each layer $l$. If there is no edge, two nodes don't share a message. If we consider node $A$ as the reference node $v$, the nodes $B$, and $C$ are in the 1-hop neighbors $\mathcal{N}_{\leq 1}(v)$, and node $D$ is in the 2-hop neighbors $\mathcal{N}_{\leq 2}(v)$. 
This GNN-SCM induces its causal structure $\mathcal{G}$ from Graph G, as discussed in Definition \ref{causal network}.

\subsection{GNN-SCM construction}
Following~\cite{pearl2009causality}, 
we build a GNN-SCM $\mathcal{M}(\mathcal{G})$ that learns from the causal structure $\mathcal{G}$. The endogenous variables are node labels $\{y_v; v \in A,B,C,D\}$. The exogenous variables in $\mathcal{G}$ are reference node $A$'s states: $u_{A_1}, u_{A_2}$ (in this example we consider  binary states $A_1$ and $A_2$), edges effects on reference node $A$: $u_{A,B}, u_{A,C}$, and neighbor nodes' effects on reference node $A$: $u_{B}, u_{C}, u_{D}$. All of these latent variables are assumed to accept the same probability $P({\bf U})$ as a probability function defined over the domain of U since we don't want to input new specific information. $\mathcal{F}$ is a set of functions based on the observable and latent variables discussed above. One should consider that $u_{v_{i}}$ and $u_{v_{j}}$ are not independent.

As discussed in Theorem~\ref{thm:GNN-SCM}, we construct the GNN-SCM $\mathcal{M}(\mathcal{G})$ based on graph G as $\mathcal{M}(\mathcal{G}) =: (U,V,F, P({\bf U}))$, where:

{
\footnotesize
\begin{align}\label{eq:example SCM}
    \mathcal{M}(\mathcal{G}) = 
        \begin{cases}
            U := \begin{cases}u_{A_1}, u_{A_2}\\
            u_{A,B}, u_{A,C}\\
            u_{B}, u_{C}, u_{D}\end{cases} , {D_{u}} = \{0,1\}\\
            V := \{A,B,C,D\}\\
            \mathcal{F} := \begin{cases}
            f_A(B, C, D, u_{A_1},u_{A_2}) = f_A(B,u_{A_1},u_{A_2}) \land f_A(C,u_{A_1},u_{A_2}) \land f_A(D)\\
            f_A(B,u_{A_1},u_{A_2}) = (((\neg B \oplus U_{A_1}) \lor U_{A,B}) \oplus u_{A_2})\\
            f_A(C,u_{A_1},U_{A_2}) = (((\neg C \oplus U_{A_1}) \lor U_{A,C}) \oplus u_{A_2})\\
            f_A(D) = \neg D\\
            f_B(u_{B}, u_{A,B}) = \neg u_{B} \land \neg u_{A,B}\\
            f_C(u_{C}, u_{A,C}) = \neg u_{C} \land \neg u_{A,C}\\
            f_D(u_{D}) = \neg u_{D} \end{cases}\\
            P({\bf U}) := \begin{cases}P(u_{A_1}) = P(u_{A_2}) = P(u_{A,B}) = P(u_{A,C})\\ = P(u_{A,D}) = P(u_{B}) = P(u_{C}) = P(u_{D})\end{cases} = 1/8\\
        \end{cases}
\end{align}
}%

\begin{center}
\begin{figure}[h]
\centering
\begin{tikzpicture}[scale=0.8]
[level distance=15mm,
   every node/.style={fill=white,rectangle,align=center},
   level 1/.style={sibling distance=60mm},
   level 2/.style={sibling distance=50mm},
   level 3/.style={sibling distance=40mm},
   level 4/.style={sibling distance=30cm},
   level 5/.style={sibling distance=20cm}]
   \node {A, B, C, P({\bf U})}
    child {node{$U_{B}=0$}
        child {node{$U_{C}=0$}
            child {node{$U_{A,B}=0$}
                child {node{$U_{A,C}=0$}
                    child {node {$U_{A_1}=0$}
                        child {node{$U_{A_2}=0$}}
                        child {node{$U_{A_2}=1$}}
                    }
                    child {node {$U_{A_1}=1$}
                        child {node{$U_{A_2}=0$}}
                        child {node{$U_{A_2}=1$}}
                    }
                }
                child {node{$U_{A,C}=1$}
                    child {node{$U_{A_1}=0$}
                        child {node{$U_{A_2}=0$}}
                        child {node{$U_{A_2}=1$}}
                    }
                    child {node{$U_{A_1}=1$}
                        child {node{$U_{A_2}=0$}}
                        child {node{$U_{A_2}=1$}}
                    }
                }
            }
            child {node{$U_{A,B}=1$}
                child {node{$U_{A,C}=0$}
                    child {node{$U_{A_1}=0$}
                        child {node{$U_{A_2}=0$}}
                        child {node{$U_{A_2}=1$}}
                    }
                    child {node{$U_{A_1}=1$}
                        child {node{$U_{A_2}=0$}}
                        child {node{$U_{A_2}=1$}}
                    }
                }
                child {node{$U_{A,C}=1$}
                    child {node{$U_{A_1}=0$}
                        child {node{$U_{A_2}=0$}}
                        child {node{$U_{A_2}=1$}}
                    }
                    child {node{$U_{A_1}=1$}
                        child {node{$U_{A_2}=0$}}
                        child {node{$U_{A_2}=1$}}
                    }
                }
            }
        }
        child {node{$U_{C}=1$}
            child {node{$U_{A,B}=0$}
                child {node{$U_{A,C}=0$}
                    child {node {$U_{A_1}=0$}
                        child {node{$U_{A_2}=0$}}
                        child {node{$U_{A_2}=1$}}
                    }
                    child {node {$U_{A_1}=1$}
                        child {node{$U_{A_2}=0$}}
                        child {node{$U_{A_2}=1$}}
                    }
                }
                child {node{$U_{A,C}=1$}
                    child {node{$U_{A_1}=0$}
                        child {node{$U_{A_2}=0$}}
                        child {node{$U_{A_2}=1$}}
                    }
                    child {node{$U_{A_1}=1$}
                        child {node{$U_{A_2}=0$}}
                        child {node{$U_{A_2}=1$}}
                    }
                }
            }
            child {node{$U_{A,B}=1$}
                child {node{$U_{A,C}=0$}
                    child {node{$U_{A_1}=0$}
                        child {node{$U_{A_2}=0$}}
                        child {node{$U_{A_2}=1$}}
                    }
                    child {node{$U_{A_1}=1$}
                        child {node{$U_{A_2}=0$}}
                        child {node{$U_{A_2}=1$}}
                    }
                }
                child {node{$U_{A,C}=1$}
                    child {node{$U_{A_1}=0$}
                        child {node{$U_{A_2}=0$}}
                        child {node{$U_{A_2}=1$}}
                    }
                    child {node{$U_{A_1}=1$}
                        child {node{$U_{A_2}=0$}}
                        child {node{$U_{A_2}=1$}}
                    }
                }
            }
        }
    }
    child {node{$U_{B}=1$}
        child {node{$U_{C}=0$}
            child {node{$U_{A,B}=0$}
                child {node{$U_{A,C}=0$}
                    child {node {$U_{A_1}=0$}
                        child {node{$U_{A_2}=0$}}
                        child {node{$U_{A_2}=1$}}
                    }
                    child {node {$U_{A_1}=1$}
                        child {node{$U_{A_2}=0$}}
                        child {node{$U_{A_2}=1$}}
                    }
                }
                child {node{$U_{A,C}=1$}
                    child {node{$U_{A_1}=0$}
                        child {node{$U_{A_2}=0$}}
                        child {node{$U_{A_2}=1$}}
                    }
                    child {node{$U_{A_1}=1$}
                        child {node{$U_{A_2}=0$}}
                        child {node{$U_{A_2}=1$}}
                    }
                }
            }
            child {node{$U_{A,B}=1$}
                child {node{$U_{A,C}=0$}
                    child {node{$U_{A_1}=0$}
                        child {node{$U_{A_2}=0$}}
                        child {node{$U_{A_2}=1$}}
                    }
                    child {node{$U_{A_1}=1$}
                        child {node{$U_{A_2}=0$}}
                        child {node{$U_{A_2}=1$}}
                    }
                }
                child {node{$U_{A,C}=1$}
                    child {node{$U_{A_1}=0$}
                        child {node{$U_{A_2}=0$}}
                        child {node{$U_{A_2}=1$}}
                    }
                    child {node{$U_{A_1}=1$}
                        child {node{$U_{A_2}=0$}}
                        child {node{$U_{A_2}=1$}}
                    }
                }
            }
        }
        child {node{$U_{C}=1$}
            child {node{$U_{A,B}=0$}
                child {node{$U_{A,C}=0$}
                    child {node {$U_{A_1}=0$}
                        child {node{$U_{A_2}=0$}}
                        child {node{$U_{A_2}=1$}}
                    }
                    child {node {$U_{A_1}=1$}
                        child {node{$U_{A_2}=0$}}
                        child {node{$U_{A_2}=1$}}
                    }
                }
                child {node{$U_{A,C}=1$}
                    child {node{$U_{A_1}=0$}
                        child {node{$U_{A_2}=0$}}
                        child {node{$U_{A_2}=1$}}
                    }
                    child {node{$U_{A_1}=1$}
                        child {node{$U_{A_2}=0$}}
                        child {node{$U_{A_2}=1$}}
                    }
                }
            }
            child {node{$U_{A,B}=1$}
                child {node{$U_{A,C}=0$}
                    child {node{$U_{A_1}=0$}
                        child {node{$U_{A_2}=0$}}
                        child {node{$U_{A_2}=1$}}
                    }
                    child {node{$U_{A_1}=1$}
                        child {node{$U_{A_2}=0$}}
                        child {node{$U_{A_2}=1$}}
                    }
                }
                child {node{$U_{A,C}=1$}
                    child {node{$U_{A_1}=0$}
                        child {node{$U_{A_2}=0$}}
                        child {node{$U_{A_2}=1$}}
                    }
                    child {node{$U_{A_1}=1$}
                        child {node{$U_{A_2}=0$}}
                        child {node{$U_{A_2}=1$}}
                    }
                }
            }
        }
    };
\end{tikzpicture}
\caption{Logic tree for example's SCM}
\label{fig:logic-tree}
\end{figure}
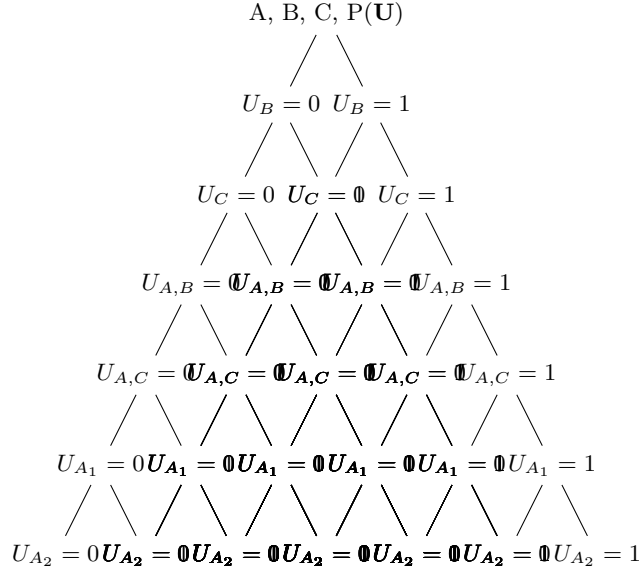
\end{center}

In the GNN-SCM $\mathcal{M}(\mathcal{G})$, the set of functions $\mathcal{F}$ should be the exact form of the interactions between variables. The argument of each function is the input that this function will do one of the logical operations OR, AND, NOT, or XOR  on them. For example, $f_A(B, C, D, u_{A_1},u_{A_2}) = f_A(B,u_{A_1},u_{A_2}) \land f_A(C,u_{A_1},u_{A_2}) \land f_A(D)$ calculates the value of effects on reference observable variable $A$ when we assume that all observable variables B, C, D, and each state $A_1$, or $A_2$ are cause of reference observable variable A to accept a specific value $A_1$, or $A_2$. In this setting, all of these variables should be feasible and have value. \emph{For simplicity, we consider all observable variables as binary, and the probability of these states as uniform distribution.}

In the graph provided, intervention $do(C=1)$ means forcing the value of node $C$'s label to be 1, and the probability $P(A = 1 | do(C = 1))$ determines the respective causal effect for a treatment $A=1$. In the GNN-SCM $\mathcal{M}(\mathcal{G})$, this effect is denoted as $u_{C}$. Since $C$ has an edge to the reference node $A$, there is also a latent variable $u_{A,C}$, and its causal effect can be calculated by $P(do(C=1) | e_{A,C} = 1)$. In this example, there is one node $D$ that does not have an edge to $A$. So, here we just calculate its causal effect on node $A$ by latent variable $u_{D}$.

For the causal effect calculation, we need a truth table showing induced values of $\mathcal{M}(\mathcal{G})$. The logic tree we used for this table is shown in Figure \ref{fig:logic-tree}, which has seven layers since there are seven distinct latent variables($u_{B}$, $u_{C}$, $u_{A,B}$, $u_{A,C}$, $u_{A_1}$, $u_{A_2}$) each accepting the value of 0 (it's not the cause) or 1 (it is the cause).

\subsection{SCM tables}
By interpreting the variables and their values from the logic tree, the truth table will be in four different states. If none of the 1-hop neighborhood nodes' observable variable affects reference node A, if one of them (node B or C) has an effect, or otherwise both of them affect the reference node. Probabilities in P({\bf U}) are labeled from $p_0$ to $p_{63}$ for convenience, which are 7 binary variables(layers in the logic tree).

\label{example truth tables}
\begin{table}[!t]
    \centering
    \begin{tabular}{|c|c|c|c|c|c|c||c|c|c|c||c|}
        \hline
        $u_{B}$ & $u_{C}$ & $u_{D}$ & $u_{A,B}$ & $u_{A,C}$ & $u_{A_1}$ & $u_{A_2}$ & \textbf{B} & \textbf{C} &\textbf{D} & \textbf{A} &\textbf{P({\bf U})} \\ 
        \hline
        0 & 0 & 0 & 0 & 0 & 0 & 0 & 1 & 1 & 1 & $\neg{(B \land C)}$ & $p_0$ \\
        0 & 0 & 0 & 0 & 0 & 1 & 0 & 1 & 1 & 1 & $(B \land C)$ & $p_1$ \\
        0 & 0 & 0 & 0 & 0 & 0 & 1 & 1 & 1 & 1 & $(B \land C)$ & $p_2$ \\
        0 & 0 & 0 & 0 & 0 & 1 & 1 & 1 & 1 & 1 & $\neg{(B \land C)}$ & $p_3$ \\
        \hline
        0 & 0 & 0 & 1 & 0 & 0 & 0 & 0 & 1 & 1 & 1 & $p_4$ \\
        0 & 0 & 0 & 1 & 0 & 1 & 0 & 0 & 1 & 1 & 1 & $p_5$ \\
        0 & 0 & 0 & 1 & 0 & 0 & 1 & 0 & 1 & 1 & 0 & $p_6$ \\
        0 & 0 & 0 & 1 & 0 & 1 & 1 & 0 & 1 & 1 & 0 & $p_7$ \\
        \hline
        0 & 0 & 0 & 0 & 1 & 0 & 0 & 1 & 0 & 1 & 1 & $p_8$ \\
        0 & 0 & 0 & 0 & 1 & 1 & 0 & 1 & 0 & 1 & 1 & $p_9$ \\
        0 & 0 & 0 & 0 & 1 & 0 & 1 & 1 & 0 & 1 & 0 & $p_{10}$ \\
        0 & 0 & 0 & 0 & 1 & 1 & 1 & 1 & 0 & 1 & 0 & $p_{11}$ \\
        \hline
        0 & 0 & 0 & 1 & 1 & 0 & 0 & 0 & 0 & 1 & 1 & $p_{12}$ \\
        0 & 0 & 0 & 1 & 1 & 1 & 0 & 0 & 0 & 1 & 1 & $p_{13}$ \\
        0 & 0 & 0 & 1 & 1 & 0 & 1 & 0 & 0 & 1 & 0 & $p_{14}$ \\
        0 & 0 & 0 & 1 & 1 & 1 & 1 & 0 & 0 & 1 & 0 & $p_{15}$ \\
        \hline
    \end{tabular}
    \caption{Example's SCM truth table where $u_{B} = 0$, and $u_{C} = 0$ (node $B$ and $C$ are the cause of node $A$ to get a specific value). Probabilities in $P(U)$ are labeled from $p_{0}$ to $p_{15}$ for convenience.}
    \label{tab:example1}
\end{table}

\begin{table}[!t]
    \centering
    \begin{tabular}{|c|c|c|c|c|c|c||c|c|c|c||c|}
        \hline
        $u_{B}$ & $u_{C}$ & $u_{D}$ & $u_{A,B}$ & $u_{A,C}$ & $u_{A_1}$ & $u_{A_2}$ & \textbf{B} & \textbf{C} &\textbf{D} & \textbf{A} &\textbf{P({\bf U})} \\ 
        \hline
        1 & 0 & 0 & 0 & 0 & 0 & 0 & 0 & 1 & 1 & $\neg{(B \land C)}$ & $p_{16}$ \\
        1 & 0 & 0 & 0 & 0 & 1 & 0 & 0 & 1 & 1 & $(B \land C)$ & $p_{17}$ \\
        1 & 0 & 0 & 0 & 0 & 0 & 1 & 0 & 1 & 1 & $(B \land C)$ & $p_{18}$ \\
        1 & 0 & 0 & 0 & 0 & 1 & 1 & 0 & 1 & 1 & $\neg{(B \land C)}$ & $p_{19}$ \\
        \hline
        1 & 0 & 0 & 1 & 0 & 0 & 0 & 0 & 1 & 1 & 1 & $p_{20}$ \\
        1 & 0 & 0 & 1 & 0 & 1 & 0 & 0 & 1 & 1 & 1 & $p_{21}$ \\
        1 & 0 & 0 & 1 & 0 & 0 & 1 & 0 & 1 & 1 & 0 & $p_{22}$ \\
        1 & 0 & 0 & 1 & 0 & 1 & 1 & 0 & 1 & 1 & 0 & $p_{23}$ \\
        \hline
        1 & 0 & 0 & 0 & 1 & 0 & 0 & 0 & 0 & 1 & 1 & $p_{24}$ \\
        1 & 0 & 0 & 0 & 1 & 1 & 0 & 0 & 0 & 1 & 1 & $p_{25}$ \\
        1 & 0 & 0 & 0 & 1 & 0 & 1 & 0 & 0 & 1 & 0 & $p_{26}$ \\
        1 & 0 & 0 & 0 & 1 & 1 & 1 & 0 & 0 & 1 & 0 & $p_{27}$ \\
        \hline
        1 & 0 & 0 & 1 & 1 & 0 & 0 & 0 & 0 & 1 & 1 & $p_{28}$ \\
        1 & 0 & 0 & 1 & 1 & 1 & 0 & 0 & 0 & 1 & 1 & $p_{29}$ \\
        1 & 0 & 0 & 1 & 1 & 0 & 1 & 0 & 0 & 1 & 0 & $p_{30}$ \\
        1 & 0 & 0 & 1 & 1 & 1 & 1 & 0 & 0 & 1 & 0 & $p_{31}$ \\
        \hline
    \end{tabular}
    \caption{Example's SCM truth table where $u_{B} = 1$, and $u_{C} = 0$ (only node $C$ is not the cause of node $A$ to get a specific value). Probabilities in $P(U)$ are labeled from $p_{16}$ to $p_{31}$ for convenience.}
    \label{tab:example2}
\end{table}

\begin{table}[!t]
    \centering
    \begin{tabular}{|c|c|c|c|c|c|c||c|c|c|c||c|}
        \hline
        $u_{B}$ & $u_{C}$ & $u_{D}$ & $u_{A,B}$ & $u_{A,C}$ & $u_{A_1}$ & $u_{A_2}$ & \textbf{B} & \textbf{C} &\textbf{D} & \textbf{A} &\textbf{P({\bf U})} \\ 
        \hline
        0 & 1 & 0 & 0 & 0 & 0 & 0 & 1 & 0 & 1 & $\neg{(B \land C)}$ & $p_{32}$ \\
        0 & 1 & 0 & 0 & 0 & 1 & 0 & 1 & 0 & 1 & $(B \land C)$ & $p_{33}$ \\
        0 & 1 & 0 & 0 & 0 & 0 & 1 & 1 & 0 & 1 & $(B \land C)$ & $p_{34}$ \\
        0 & 1 & 0 & 0 & 0 & 1 & 1 & 1 & 0 & 1 & $\neg{(B \land C)}$ & $p_{35}$ \\
        \hline
        0 & 1 & 0 & 1 & 0 & 0 & 0 & 0 & 0 & 1 & 1 & $p_{36}$ \\
        0 & 1 & 0 & 1 & 0 & 1 & 0 & 0 & 0 & 1 & 1 & $p_{37}$ \\
        0 & 1 & 0 & 1 & 0 & 0 & 1 & 0 & 0 & 1 & 0 & $p_{38}$ \\
        0 & 1 & 0 & 1 & 0 & 1 & 1 & 0 & 0 & 1 & 0 & $p_{39}$ \\
        \hline
        0 & 1 & 0 & 0 & 1 & 0 & 0 & 1 & 0 & 1 & 1 & $p_{40}$ \\
        0 & 1 & 0 & 0 & 1 & 1 & 0 & 1 & 0 & 1 & 1 & $p_{41}$ \\
        0 & 1 & 0 & 0 & 1 & 0 & 1 & 1 & 0 & 1 & 0 & $p_{42}$ \\
        0 & 1 & 0 & 0 & 1 & 1 & 1 & 1 & 0 & 1 & 0 & $p_{43}$ \\
        \hline
        0 & 1 & 0 & 1 & 1 & 0 & 0 & 0 & 0 & 1 & 1 & $p_{44}$ \\
        0 & 1 & 0 & 1 & 1 & 1 & 0 & 0 & 0 & 1 & 1 & $p_{45}$ \\
        0 & 1 & 0 & 1 & 1 & 0 & 1 & 0 & 0 & 1 & 0 & $p_{46}$ \\
        0 & 1 & 0 & 1 & 1 & 1 & 1 & 0 & 0 & 1 & 0 & $p_{47}$ \\
        \hline
    \end{tabular}
    \caption{Example's SCM truth table where $u_{B} = 0$, and $u_{C} = 1$ (only node $B$ is not the cause of node $A$ to get a specific value). Probabilities in $P(U)$ are labeled from $p_{32}$ to $p_{47}$ for convenience.}
    \label{tab:example3}
\end{table}

\begin{table}[!t]
    \centering
    \begin{tabular}{|c|c|c|c|c|c|c||c|c|c|c||c|}
        \hline
        $u_{B}$ & $u_{C}$ & $u_{D}$ & $u_{A,B}$ & $u_{A,C}$ & $u_{A_1}$ & $u_{A_2}$ & \textbf{B} & \textbf{C} &\textbf{D} & \textbf{A} &\textbf{P({\bf U})} \\ 
        \hline
        1 & 1 & 0 & 0 & 0 & 0 & 0 & 0 & 0 & 1 & $\neg{(B \land C)}$ & $p_{48}$ \\
        1 & 1 & 0 & 0 & 0 & 1 & 0 & 0 & 0 & 1 & $(B \land C)$ & $p_{49}$ \\
        1 & 1 & 0 & 0 & 0 & 0 & 1 & 0 & 0 & 1 & $(B \land C)$ & $p_{50}$ \\
        1 & 1 & 0 & 0 & 0 & 1 & 1 & 0 & 0 & 1 & $\neg{(B \land C)}$ & $p_{51}$ \\
        \hline
        1 & 1 & 0 & 1 & 0 & 0 & 0 & 0 & 0 & 1& 1 & $p_{52}$ \\
        1 & 1 & 0 & 1 & 0 & 1 & 0 & 0 & 0 & 1 & 1 & $p_{53}$ \\
        1 & 1 & 0 & 1 & 0 & 0 & 1 & 0 & 0 & 1 & 0 & $p_{54}$ \\
        1 & 1 & 0 & 1 & 0 & 1 & 1 & 0 & 0 & 1 & 0 & $p_{55}$ \\
        \hline
        1 & 1 & 0 & 0 & 1 & 0 & 0 & 0 & 0 & 1 & 1 & $p_{56}$ \\
        1 & 1 & 0 & 0 & 1 & 1 & 0 & 0 & 0 & 1 & 1 & $p_{57}$ \\
        1 & 1 & 0 & 0 & 1 & 0 & 1 & 0 & 0 & 1 & 0 & $p_{58}$ \\
        1 & 1 & 0 & 0 & 1 & 1 & 1 & 0 & 0 & 1 & 0 & $p_{59}$ \\
        \hline
        1 & 1 & 0 & 1 & 1 & 0 & 0 & 0 & 0 & 1 & 1 & $p_{60}$ \\
        1 & 1 & 0 & 1 & 1 & 1 & 0 & 0 & 0 & 1 & 1 & $p_{61}$ \\
        1 & 1 & 0 & 1 & 1 & 0 & 1 & 0 & 0 & 1 & 0 & $p_{62}$ \\
        1 & 1 & 0 & 1 & 1 & 1 & 1 & 0 & 0 & 1 & 0 & $p_{63}$ \\
        \hline
    \end{tabular}
    \caption{Example's SCM truth table where $u_{B} = 1$, and $u_{C} = 1$ (node $B$ and $C$ are not the cause of node $A$ to get a specific value). Probabilities in $P(U)$ are labeled from $p_{48}$ to $p_{63}$ for convenience.}
    \label{tab:example4}
\end{table}
In all provided table rows, $u_D = 0$ and $D=1$, meaning  D is not the cause and the latent variable of it is 1. However, there is no edge between nodes A and D, we need to mention this in our calculations. For simplicity, we just showed the cases that $u_D = 0$, but there are the same truth tables with $u_D = 1$ and $D = 0$ by probabilities $p_{64}$ to $p_{127}$. Given the probabilities from the truth tables, we can define them as follows:
\begin{align*}
    P({\bf U}) :=  \text{Unif}(0,1) \implies p_0 = p_1 = p_2 = \ldots = p_{63} = \ldots = p_{127} = {1}/{128}
\end{align*}
\subsection{GNN-SCM results}
The capability of the tables shows that our specified GNN-SCM $\mathcal{M}(\mathcal{G})$ can calculate all queries from each PCH layer~\cite{pearl2018book}. In continue, we will calculate an example for each layer:

An association layer query such as $P(A = 1 | C = 1)$ which is the probability of observable variable A to be 1 given observable variable C to be 1, can be computed as:

{
\scriptsize
\begin{align}\label{eq:level1 example}
& P(A = 1 | C = 1) = \frac{P(A = 1 , C = 1)}{P(C = 1)} \nonumber \\
& = \frac{p1 + p2 + p4 + p5 + p17 + p18 + p20 + p21}{p0 + p1 + p2 + p3+ p4 + p5 + p6 + p7 + p16 + p17 + p18 + p19 + p20 + p21 + p22 + p23}   = 0.5  
\end{align}
}%

An intervention layer query such as $P(A = 1 | do(C = 1))$ which is the probability of A being 1 after this intervention on C. It seeks to understand the causal effect of setting C to 1 on outcome A . $do(C = 1)$ represents an intervention where the variable C is actively set to 1 to control the variable directly, essentially breaking its usual causal edges with other variables. This query can be computed as:

{
\scriptsize
\begin{align}\label{eq:level2 example}
 &  P(A = 1 | do(C = 1)) = P(A = (B \land C) | C = 1) \lor P(A = 1 | C = 0)  \nonumber \\ 
 & = \frac{p1 + p2 + p17 + p18}{p0 + p1 + p2 + p3+ p4 + p5 + p6 + p7 + p16 + p17 + p18 + p19 + p20 + p21 + p22 + p23}  \nonumber \\ 
& + \frac{p24 + p25 + p28 + p29 + p36 + p37 + p40 + p41 + p44 + p45 + p52 + p53 + p56 + p57 + p60 + p61}
{p8 + p9  + ... 
+ p14 + p15 + p24 + p25 + p26 + p27 + p28 + p29 + p30 + p31 + p32 + ... + p63} \nonumber \\ 
&  = 0.25 + 0.33 = 0.58
\end{align}
}%

For the implementation, we need the observed data generated from the graph. One generated data can be: \{$A: True, B: False, C: False, D: True, u_{A_1}: 1, u_{A_2}: 0, u_{A,B}: 1, u_{A,C}: 1, u_B: 0, u_C: 1, u_D: 0, A|C : 1$\}. The Fraction of samples that satisfy each function of the GNN-SCM $\mathcal{M}(\mathcal{G})$: \{$f_A: 0.156458, f_B:  0.249319, f_C: 0.25009, f_D: 0.50059$\}. These values show the range and central tendency of the probabilities across the 500 trials, indicating a degree of variability in the outcomes based on the random generation of the probabilities.

\subsection{Example's GNN-NCM}
With respect to literature, an NCM is as expressive as an SCM, and all NCMs are SCMs. The causal diagram constraints are the bias between SCMs and NCMs. In our specified GNN-SCM $\widehat{\mathcal{M}}(\mathcal{G})$, and GNN-NCM $\widehat{\mathcal{M}}(\mathcal{G}, \theta))$, all causal information(observable, and latent variables) comes from the causal structure defined in Definition. \ref{causal network}. The respective GNN-NCM $\widehat{\mathcal{M}}(\mathcal{G}, \theta)$ is constructed as a proxy of the exact GNN-SCM $\mathcal{M}(\mathcal{G})$. Based on Equation. \ref{eq:gnn-ncm}, reference node A is chosen as the target node for causal structure $\mathcal{G}$. This GNN-NCM $\widehat{\mathcal{M}}(\mathcal{G}, \theta)$ is an inductive bias type of the GNN-SCM as $\widehat{\mathcal{M}}(\mathcal{G}, \theta)$=:$\langle \widehat{\bf U},V,\mathcal{\widehat{F}},P(\widehat{\bf U})\rangle $. The construction of the corresponding GNN-NCM that induces the same distributions for our example dataset is as follows:

{
\footnotesize
\begin{align}\label{eq:example NCM}
    \widehat{\mathcal{M}}(\mathcal{G}, \theta))= 
        \begin{cases}
            \widehat{\bf U} := \{\widehat{\bf U}\}, D_{\widehat{\bf U}} = [0,1]\\
            V := \{A,B,C,D\}\\
            \widehat{\mathcal{F}} := \begin{cases}
            \widehat{f}_A(\widehat{\bf U}) = ?\\
            \widehat{f}_B(A,\widehat{\bf U}) = ?\\
            \widehat{f}_C(A,\widehat{\bf U}) = ?\\
            \widehat{f}_D(A,\widehat{\bf U}) = ?\end{cases}\\
            P(\widehat{\bf U}) :=  ?\\
        \end{cases}
\end{align}
}

\begin{figure}[!t]
\centering
    \includegraphics[width=0.4\linewidth]{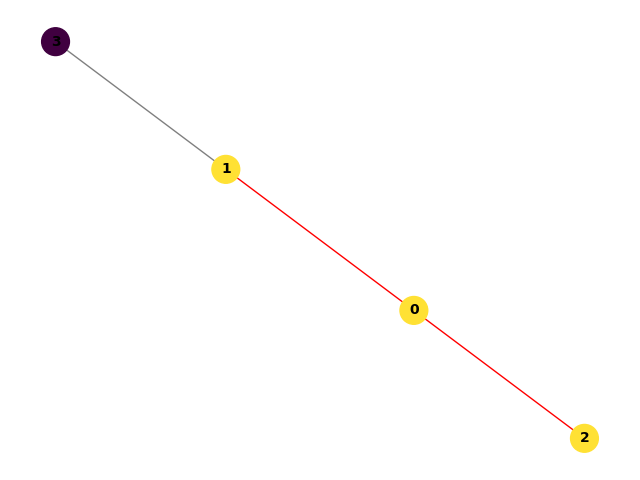}
  \caption{Result of GNN-NCM $\widehat{\mathcal{M}}(\mathcal{G}, \theta))$ for toy example}
\label{fig:vis_toy}
\end{figure}

We know the observable variables $V$ values, but there is no clue about the exact values of latent variables U, so we have to estimate them by functions $\widehat{\mathcal{F}}$ based on the information in the given causal structure $\mathcal{G}$. First, we have to build the causal structure $\mathcal{G}$ given example graph G. 

{
\footnotesize
\begin{align}
\mathcal{G}(G) &= \big\{ {\bf V}_{v} = \{A, B, C, D\} \}, \notag \\
& \quad  {\bf U}_{v} = \big\{ {\bf U}_{v_i}: \{u_{B}, u_{C}, u_{D}, u_{A_1}, u_{A_2}\} \cup \{{\bf U}_{v,v_i}: \{u_{A,B}, u_{A,C}\}\big\}
\end{align}
}%
Second, given the probabilities from the truth tables, we have: %can define them as follows:
\begin{align*}
    P(\widehat{\bf U}) :=  \text{Unif}(0,1) \implies p_0 = p_1 = p_2 = \ldots = p_{63} = {1}/{256}
\end{align*}
At last, based on Algorithm \ref{Algorithm1}, we train the GNN-NCM to find the functions. The GNN-NCM extends the GNN-SCM to utilize the power of neural networks in capturing complex patterns in the dataset. The process begins with sampling from a prior distribution to simulate the unobserved confounders in the causal process. 
Since the reference node in the GNN-SCM was node $A$, we assign value 1 to it and want to see how GNN-NCM finds the causal effects on this node in graph G. The result of the Algorithm \ref{Algorithm2}, are:

When the target node is $A$, the expected probability is 0.24082797765731812

When the target node is $B$, the expected probability is 0.2353899081548055

When the target node is $C$, the expected probability is 0.18209974467754364

When the target node is $D$, the expected probability is 0.12958189845085144

Hence, the final causal explanatory subgraph $\Gamma$ based on the results is the GNN-NCM $\widehat{\mathcal{M}}(\mathcal{G}, \theta))$ with target node $A$ is:
{
\begin{align}
& \mathcal{G}(G) = \big\{ {\bf V}_{v} = \{A = (B \land C), B = 1, C = 1, D = 0\} \}, \nonumber \\ 
& {\bf U}_{v} = \big\{ {\bf U}_{v_i}: \{u_{B}=0, u_{C}=0, u_{D}=1, u_{A_1}=0, u_{A_2}=1\} \notag \\ 
& \quad \quad  \cup \{{\bf U}_{v,v_i}: \{u_{A,B}=1, u_{A,C}=1\}\big\}
\end{align}
}

The GNN-NCM $\widehat{\mathcal{M}}(\mathcal{G}, \theta))$ structure is:
\begin{center}
\small
\begin{equation}\label{eq:example NCM}
    \widehat{\mathcal{M}}(\mathcal{G}, \theta))= 
        \begin{cases}
            \widehat{\bf U} := \Big\{ {\bf U}_{v_i}: \{u_{B}=0, u_{C}=0, u_{D}=1, u_{A_1}=0, u_{A_2}=1\} \\ \qquad \cup \{{\bf U}_{v,v_i}: \{u_{A,B}=1, u_{A,C}=1\}\Big\}\\
            V := \{A = (B \land C), B = 1, C = 1, D = 0\}\\
            \widehat{\mathcal{F}} := \begin{cases}
            \widehat{f}_A(\widehat{\bf U}) = \widehat{f}_{A}(\widehat{\bf u}_{A_1} = 0, \widehat{\bf u}_{A_2} = 1,\widehat{\bf u}_{{A},{B}} = 1,\widehat{\bf u}_{{A},{C}} = 1)\\
            \widehat{f}_B(A,\widehat{\bf U}) = \widehat{f}_{B}(\widehat{\bf u}_{B} = 0,\widehat{\bf u}_{{A},{B}} = 1)\\
            \widehat{f}_C(A,\widehat{\bf U}) = \widehat{f}_{C}(\widehat{\bf u}_{C} = 0,\widehat{\bf u}_{{A},{C}} = 1)\\
            \widehat{f}_D(A,\widehat{\bf U}) = \widehat{f}_{D}(\widehat{\bf u}_{D} = 1)\end{cases}\\
            P(\widehat{\bf U}) :=  \begin{cases}P(u_{A_1}) = P(u_{A_2}) = P(u_{A,B}) = P(u_{A,C})\\ = P(u_{A,D}) = P(u_{B}) = P(u_{C}) = P(u_{D})\end{cases} = 1/8\\
        \end{cases}
\end{equation}
\end{center}

\section{More Background on Causality}
\label{app:background}

According to the literature, causality interprets the information by the Pearl Causal Hierarchy (PCH) layers~\cite{pearl2018book}.

\begin{definition} [PCH layers] \label{PCH}
The PCH layers $\mathrm{L}_i$ for $i$ $\in$ {1,2,3} are: $\mathrm{L}_1$ association layer, $\mathrm{L}_2$ intervention layer, and $\mathrm{L}_3$ counterfactual layer. 
\end{definition}

\begin{definition}[$\mathcal{G}$-Consistency] \label{def:g-con}
  Let $\mathcal{G}$ be the causal structure induced by SCM $\mathcal{M}^*$. For any SCM $\mathcal{M}$, we say $\mathcal{M}$ is $\mathcal{G}$-consistent w.r.t $\mathcal{M}^*$ if $\mathcal{M}$ imposes the same constraints over the interventional distributions as the true $\mathcal{M}^*$.  
\end{definition} 

\begin{definition}[$\mathcal{G}$-Constrained NCM]
    \label{def:g-cons-nscm}
    Let $\mathcal{G}$ be the causal structure induced by  SCM $\mathcal{M}^*$.
    We can construct NCM $\widehat{\mathcal{M}}$ as follows: 1) Choose $\widehat{\bf U}$ s.t. $\widehat{U}_{\bf C} \in \widehat{\bf U}$, where any pair $(V_i, V_j) \in {\bf C}$ is connected with a bidirected arrow in $\mathcal{G}$ and is maximal; 
    %iff ${\bf C}$ is a complete confounded-component\footnote{any pair $(V_i, V_j) \in {\bf C}$ is connected with a bidirected arrow in $\mathcal{G}$ and is maximal.} in $\mathcal{G}$; 
    2) For each $V_i \in {\bf V}$, choose $\textbf{Pa}(V_i) \subseteq {\bf V}$ s.t. for every $V_j \in {\bf V}$, $V_j \in \textbf{Pa}(V_i)$ iff there is a directed arrow from $V_j$ to $V_i$ in $\mathcal{G}$.
    Any NCM in this family is said to be $\mathcal{G}$-constrained. 
\end{definition} 

\begin{theorem}
\label{thm:nscm-g-cons}
Any $\mathcal{G}$-constrained NCM $\widehat{\mathcal{M}}(\bm{\theta})$ is $\mathcal{G}$-consistent. 
\end{theorem}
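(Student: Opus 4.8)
The plan is to reduce the claim to two facts: (i) the $\mathcal{G}$-constrained NCM $\widehat{\mathcal{M}}(\bm{\theta})$ induces exactly the causal diagram $\mathcal{G}$, and (ii) any two SCMs that induce the same causal diagram impose identical constraints over their interventional ($\mathrm{L}_2$) distributions. Since the true $\mathcal{M}^*$ induces $\mathcal{G}$ by assumption, combining (i) and (ii) yields that $\widehat{\mathcal{M}}(\bm{\theta})$ and $\mathcal{M}^*$ agree on all $\mathrm{L}_2$ constraints, which is precisely $\mathcal{G}$-consistency in the sense of Definition~\ref{def:g-con}.

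For step (i), I would read off the diagram of $\widehat{\mathcal{M}}(\bm{\theta})$ directly from the two construction rules of Definition~\ref{def:g-cons-nscm}. Rule 2 places $V_j \in \textbf{Pa}(V_i)$ if and only if $\mathcal{G}$ has a directed arrow from $V_j$ to $V_i$, so the directed part of $\widehat{\mathcal{M}}$'s diagram coincides with that of $\mathcal{G}$. Rule 1 assigns a shared exogenous variable $\widehat{U}_{\bf C}$ to each maximal set ${\bf C}$ whose members are pairwise joined by bidirected arrows in $\mathcal{G}$; two endogenous variables thus share a latent parent in $\widehat{\mathcal{M}}$ exactly when they lie in a common maximal bidirected clique of $\mathcal{G}$, reproducing the confounding (bidirected) part of $\mathcal{G}$. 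Because every NCM is a bona fide SCM, $\widehat{\mathcal{M}}(\bm{\theta})$ has a well-defined induced diagram, and by the above it equals $\mathcal{G}$.

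For step (ii), I would invoke the standard semi-Markovian result that the constraints an SCM places on its interventional distributions depend only on its causal diagram, and not on the particular mechanisms $\mathcal{F}$ or the prior $P({\bf U})$. Concretely, the collection of identifiable interventional quantities, their c-component (Q-factor) factorizations, and the conditional-independence constraints characterized by d-separation in the mutilated submodels are all determined by the directed and bidirected edges alone. Hence any SCM whose diagram is $\mathcal{G}$ --- in particular $\widehat{\mathcal{M}}(\bm{\theta})$ --- imposes exactly the constraints dictated by $\mathcal{G}$, matching those of $\mathcal{M}^*$, which closes the argument.

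The main obstacle is step (ii): making precise and justifying that the entire family of $\mathrm{L}_2$ constraints is a function of the diagram alone. This rests on the c-component factorization together with the soundness and completeness of d-separation and do-calculus for semi-Markovian models, which I would cite (following \cite{xia2021causal}) rather than reprove. The NCM construction is engineered so as to preserve both ingredients that these results consume --- the parent structure and the c-component structure --- so once (ii) is granted, consistency follows immediately; the one delicate point I would verify carefully is that the maximality condition in Rule~1 yields precisely the c-components of $\mathcal{G}$, with neither spurious nor missing confounding arcs.
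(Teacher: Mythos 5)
The first thing you should know is that the paper itself contains no proof of this statement: Theorem~\ref{thm:nscm-g-cons} is quoted as imported background from \cite{xia2021causal}, and the paper's appendix proofs only consume it as a black box (via Lemma~\ref{SCM-NCM}). Judged on its own merits, your argument is correct and is essentially the proof given in that reference: show that the construction rules of Definition~\ref{def:g-cons-nscm} make the induced diagram of $\widehat{\mathcal{M}}(\bm{\theta})$ coincide with $\mathcal{G}$, then invoke the fact that the interventional-distribution constraints an SCM is guaranteed to satisfy are exactly those encoded by its induced diagram, yielding $\mathcal{G}$-consistency in the sense of Definition~\ref{def:g-con}. Two refinements would tighten it. First, your step (ii) should be stated one-directionally, as ``the constraints encoded by a diagram hold in every SCM inducing that diagram,'' rather than as ``two SCMs with the same diagram impose identical constraints'': a particular SCM may satisfy extra accidental constraints (e.g., under degenerate mechanisms), so equality of all satisfied constraints is neither true nor needed. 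Relatedly, if the induced diagram is defined by genuine functional dependence rather than by syntactic arguments, the NCM's diagram may be only an edge-subgraph of $\mathcal{G}$; this is harmless, since deleting edges can only add constraints, so the constraints of $\mathcal{G}$ still hold. Second, the maximal sets in Rule~1 are maximal bidirected \emph{cliques}, not c-components (which are bidirected \emph{connected components}); what the construction recovers edge-by-edge is the bidirected part of $\mathcal{G}$, from which the c-components are then derived, so the delicate point you flag at the end should be phrased in terms of bidirected edges rather than c-components.
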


\section{Proofs}
\label{app:proofs}
In this section, we provide proofs of the theorems in the main body of the paper.

\subsection{Proof of Theorem~\ref{thm:GNN-SCM}}
\label{app:gnn-scm}
\thmgnnscm*

\begin{proof} 
\label{proof1}
A GNN is a neural network operating on a graph $G=(\mathcal{V}, \mathcal{E})$ including set of nodes $\mathcal{V}$  and set of edges $\mathcal{E}$. Recall the GNN background in Section~\ref{sec:prelim}, the GNN learning mechanism for a node $v$ in the $l$-th layer can be summarized as:
\begin{equation}
\text{GNN}(v) \equiv \begin{cases}
            \text{node embeddings } h^{l-1}_v  \text{ from previous layer } l-1, \text{ for }  u \in \{v\} \cup \mathcal{N}(v) \\
            \text{message } m^l_{u,v} = \text{MSG}(h^{l-1}_u, h^{l-1}_v, e_{u,v}) \text{ for current layer } l\\
            \text{aggregated message } h^{l}_v = \text{AGG}(m^l_{u,v} |  u \in \mathcal{N}(v)) \text{ for current layer } l\\
        \end{cases}
    \end{equation}
where the above process is iteratively performed $k$ times for a $k$-layer GNN. In doing so, each node $v$ will leverage the information from all its within $k$ neighborhoods. 
We denote the $k$-layer GNN learning for $v$ as:
$$\text{GNN}(G) = \big \{ \text{node embed: } \{ h_v \} \cup \{ h_u: u \in \mathcal{N}_{\leq k} (v)\}, \text{message: } \{m_{u,v}, u \in \mathcal{N}_{\leq k} (v)\} \big \},$$
where  $h_v$ is $v$' node feature and we omit the dependence on node $v$ for notation simplicity.  

By definition from literature, an SCM $\mathcal{M}$ is a four-tuple $\mathcal{M} \equiv ({\bf U}, {\bf V},\mathcal{F},P({\bf U}))$. In this specification of ours, an SCM $\mathcal{M}(\mathcal{G})$ is a $\mathcal{G}$-consistent four-tuple model based on a set of observable variables ${\bf V}$, a set of latent variables ${\bf U}$, a set of functions $\mathcal{F}$, and the probability of latent variables $P({\bf U})$.  
Recall in Definition \ref{causal network}, where the causal structure $\mathcal{G}(G)$ of a given graph $G$ (centered on a reference node $v$) was defined. Now, the correspondence of the GNN learning on $G$ and the causal structure $\mathcal{G}$ centered on a node $v$ can be written as: 
{
\scriptsize
\begin{equation}
\mathcal{G}(G)\equiv \text{GNN}(G)
\iff \begin{cases}
  \text{obs. vars: } \{y_{v}\} \cup \{y_{v_i}: 
    v_i \in  \mathcal{N}_{\leq k}(v) \}
  % v, \text{ fn}(v), \text{ and sn}(v)
  \equiv \text{node embed. } \{h_{v} \} \cup \{ h_u: u \in \mathcal{N}_{\leq k} (v)\} \\
    \text{lat. vars:  } \{{\bf U}_{v_i}: v_i \in \mathcal{N}_{\leq k}(v) \} \cup \{{\bf U}_{v,v_i}:{e_{v,v_i} \in \mathcal{E}}\}
    % {\bf U}_{v_i} \text{ and } {\bf U}_{v,v_i}
    \equiv \text{ msg: } \{m_{u,v}, u \in \mathcal{N}_{\leq k} (v)\} \\
    \text{probability of latent variables } P({\bf U}) \equiv \text{Dom}(\{{h_v}\})\\
\end{cases}
 \end{equation}
}

Hence, there exists a GNN-SCM $\mathcal{M}(\mathcal{G})$ that induces the causal structure $\mathcal{G}$ of $G$, as below:
{
\scriptsize
\begin{align}
  \begin{array}{r@{\mskip\thickmuskip}l}
 \mathcal{G}(G) =    \begin{cases}
 \text{node set } \mathcal{V}(\mathcal{G})\\
 \text{edge set } \mathcal{E(G)}\\
            \text{node effect } = \{{\bf U}_{v_i}\}\\
            \text{edge effect } = \{{\bf U}_{v, v_i}\} %\\
            \end{cases}
  \end{array} %\right\}
  \iff
   \mathcal{M}(\mathcal{G}) 
   \equiv %({\bf U}, {\bf V},\mathcal{F},P({\bf U})) = 
  \begin{cases}
    {\bf U} \text{ : }\{{\bf U}_{v_i}: v_i \in \mathcal{N}_{\leq k}(v) \} \cup \{{\bf U}_{v,v_i}:{e_{v,v_i} \in \mathcal{E}}\}\\
    {\bf V} \text{ : } \{y_{v}\} \cup \{y_{v_i};
    v_i \in  \mathcal{N}_{\leq k}(v) \}\\
    \mathcal{F} \text{ : } \{f_1, f_2, \ldots, f_n\} \in \mathcal{F} \text{; } f({\bf U}) \rightarrow {\bf V}\\
    P({\bf U}) \text{ : }P({\bf U}_{v_i}), P({\bf U}_{v,v_i}) 
    \\
    \end{cases}
    \label{eqn:gnn-ncm}
\end{align}
}
\end{proof}

\subsection{Proof of Theorem~\ref{thm:thmgnnncm}}
\label{app:thmgnnncm}
\thmgnnncm*
\begin{proof} \label{proof2}

From the literature, we know there exists a SCM $\mathcal{M}$ that includes exact values of observable and latent variables through studying the causes and effects within the SCM structure. First, we show a lemma that demonstrates the inheritance of neural causal models (NCMs) (see its definition in Section~\ref{app:background}) from SCMs, which are built upon Definition~\ref{def:g-con}, Definition~\ref{def:g-cons-nscm} and Theorem~\ref{thm:nscm-g-cons}. 

\begin{lemma}[\cite{xia2021causal}] \label{SCM-NCM}
All  NCMs $\widehat{\mathcal{M}}(\theta)$ (parameterized by $\theta$) are SCMs (i.e., $\widehat{\mathcal{M}}(\theta) \prec M$). Further, any $\mathcal{G}$-constrained $\widehat{\mathcal{M}}(\theta)$ (see Definition~\ref{def:g-cons-nscm}) has the same empirical observations as the SCM $\mathcal{M}$, which means $\mathcal{G}$-constrained NCMs can be used for generating any distribution associated with the SCMs.
\end{lemma}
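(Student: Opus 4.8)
The plan is to prove the two assertions of the statement separately: the \emph{structural inclusion} that every NCM is an SCM, and the \emph{expressiveness} claim that a $\mathcal{G}$-constrained NCM reproduces the empirical observations of $\mathcal{M}$. The inclusion $\widehat{\mathcal{M}}(\theta) \prec \mathcal{M}$ I would settle by direct inspection of the definitions: an NCM $\widehat{\mathcal{M}}(\theta) \equiv (\widehat{\bf U}, {\bf V}, \widehat{\mathcal{F}}, P(\widehat{\bf U}))$ is a four-tuple whose components match the SCM definition verbatim, with $\widehat{\bf U}$ as exogenous variables (here $\mathrm{Dom}(\widehat{\bf U}) = [0,1]$ and each $\widehat{U} \sim \mathrm{Unif}(0,1)$), ${\bf V}$ as endogenous variables, each $\widehat{f}_{V_i} \in \widehat{\mathcal{F}}$ a map from $\widehat{\bf U}_{V_i} \cup \textbf{Pa}_{V_i}$ to $V_i$, and $P(\widehat{\bf U})$ the exogenous law. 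Thus every NCM is a particular SCM and the NCM class sits inside the SCM class; this part is purely definitional and carries no analytic content.

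For the expressiveness claim I would argue in three steps. First, a \emph{canonical reduction}: invoke the representation theorem for SCMs to rewrite $\mathcal{M}$ as an equivalent SCM whose exogenous variables are mutually independent, uniform on $[0,1]$, and grouped exactly according to the confounded (bidirected) components of $\mathcal{G}$, using the inverse-CDF transform to absorb an arbitrary $P({\bf U})$ into uniform noise while preserving the factorization dictated by $\mathcal{G}$. Second, \emph{mechanism realization}: by the universal-approximation capacity of the neural mechanisms $\widehat{f}_{V_i}$, choose $\theta$ so that each $\widehat{f}_{V_i}$ matches the reduced structural function of $\mathcal{M}$ on $\widehat{\bf U}_{V_i} \cup \textbf{Pa}_{V_i}$; since the parent sets and shared latents of the construction coincide with the directed and bidirected edges of $\mathcal{G}$ (Definition~\ref{def:g-cons-nscm}), the resulting NCM is $\mathcal{G}$-constrained and reproduces the observational law $P({\bf V})$. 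Third, \emph{lift to all layers}: being $\mathcal{G}$-constrained, the NCM is $\mathcal{G}$-consistent by Theorem~\ref{thm:nscm-g-cons}, hence imposes the same constraints over the interventional distributions as $\mathcal{M}$ (Definition~\ref{def:g-con}); together with the matched observational law this yields that the NCM can generate every $\mathcal{G}$-determined distribution associated with $\mathcal{M}$, which is exactly the ``same empirical observations'' claimed.

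The main obstacle is the combination of the first two steps: fitting the \emph{observational} joint is straightforward, but the statement requires agreement with $\mathcal{M}$ on \emph{all} the $\mathcal{G}$-determined interventional (and counterfactual) queries, and this is not implied by an observational fit alone. The crux is that the canonical reduction must preserve the confounding structure of $\mathcal{G}$ faithfully, with each latent shared by precisely the right set of endogenous variables, because it is this grouping that forces interventions to sever the correct dependencies; universal approximation then only has to realize the per-node mechanisms. I would therefore concentrate the effort on verifying that the grouping in Definition~\ref{def:g-cons-nscm} exactly reproduces the $\mathcal{G}$-induced independence constraints, and then let Theorem~\ref{thm:nscm-g-cons} convert this structural agreement into equality of all $\mathcal{G}$-consistent distributions, rather than re-deriving the interventional equalities directly.
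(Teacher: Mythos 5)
First, a point of reference: the paper itself does not prove this lemma at all — it is imported verbatim from the cited reference (Xia et al.), and the surrounding text only invokes it as a black box inside the proof of Theorem~\ref{thm:thmgnnncm}. So your attempt has to be judged against the argument in that reference rather than an in-paper proof. Your first part (the inclusion $\widehat{\mathcal{M}}(\theta) \prec \mathcal{M}$) is correct and is exactly the definitional observation made there. Your steps of (i) canonical uniformization of the exogenous law, grouped by the bidirected components of $\mathcal{G}$, and (ii) realization of each reduced structural function by a neural mechanism are also the right skeleton. One caution on (ii): ``universal approximation'' literally gives only $\varepsilon$-approximation on compact sets, whereas the lemma needs exact agreement; the reason the argument goes through is that on the finite, discrete domains relevant here a feed-forward network can represent the reduced mechanisms \emph{exactly}, and the uniform exogenous variables can be partitioned to encode any discrete $P({\bf U})$. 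Once that is done, the NCM simulates $\mathcal{M}$ mechanism-for-mechanism, so every potential response ${\bf Y}_{\bf x}({\bf u})$ is computed identically in the two models and all observational, interventional, and counterfactual distributions coincide by construction.

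The genuine gap is your step (iii). Theorem~\ref{thm:nscm-g-cons} only says a $\mathcal{G}$-constrained NCM imposes the same \emph{constraints} over interventional distributions as $\mathcal{M}$ (Definition~\ref{def:g-con}); it does not convert an observational fit into equality of interventional distributions. If it did, every interventional query would be identifiable from the observational joint plus $\mathcal{G}$, contradicting the hierarchy obstruction the paper itself invokes in Section~3 (``causal properties are provably impossible to recover solely from the joint distribution''). Your own hedge — restricting the conclusion to ``$\mathcal{G}$-determined'' distributions — quietly weakens the claim to identifiable queries only, whereas the lemma asserts the existence of a parameterization generating \emph{any} distribution associated with the SCM, identifiable or not. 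The repair is to drop the lifting step entirely: the existence claim already follows from your steps (i)--(ii), because exact mechanism realization makes the agreement hold at all three layers simultaneously; $\mathcal{G}$-consistency is then a by-product of the construction, not the engine of the proof.
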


By Lemma~\ref{SCM-NCM}, we know a $\mathcal{G}$-constrained NCM $\widehat{\mathcal{M}}(\theta)$ inherits all properties of the respective SCM $\mathcal{M}$ and ensures  causal inferences via $\mathcal{G}$-constrained NCM. In our context, we need to build the corresponding $\mathcal{G}$-constrained GNN-NCM $\widehat{\mathcal{M}}(\mathcal{G},\theta)$ for the GNN-SCM defined in Equation~\ref{eqn:gnn-ncm}. With it, we ensure all $\mathcal{G}$-constrained GNN-NCMs $\widehat{\mathcal{M}}(\mathcal{G},\theta)$ are GNN-SCMs ($\widehat{\mathcal{M}}(\mathcal{G},\theta) \prec \mathcal{M}(\mathcal{G})$), meaning these GNN-NCMs can be used for performing causal inferences on the causal structure $\mathcal{G}(G)$. First, based to the four-tuple SCM $\mathcal{M} \equiv ({\bf U},{\bf V},\mathcal{F},P({\bf U}))$, a $\mathcal{G}$-constrained NCM  $\widehat{\mathcal{M}}(\theta) = (\widehat{\bf U},{\bf V},\mathcal{\widehat{F}(\theta)},P(\widehat{\bf U}))$ can be defined. %(see Equation~\ref{eq:NCM}). 
In our scenario, we can define the set of functions $\mathcal{\widehat{F}(\theta)}$ of the $\mathcal{G}$-constrained GNN-NCM as:
\begin{center} 
$\mathcal{\widehat{F}(\theta)} = \{\widehat{f}_{v_i}(\widehat{\bf u}_{v_i},\widehat{\bf u}_{{v_i},{v_j}}; \theta_{v_i}):v_i \in \mathcal{V}(\mathcal{G})\} \approx \{f_1, f_2, \ldots, f_n\} \in \mathcal{F} \text{; } f({\bf U}_{v_i}) \rightarrow v_i$,
\end{center}

From Theorem~\ref{thm:GNN-SCM}, there exists a GNN-SCM $\mathcal{M}(\mathcal{G})$ for a GNN operating on a graph $G$. Also the causal structure $\mathcal{G}(G)$ in Definition \ref{causal network} naturally satisfies Definition~\ref{def:g-cons-nscm}. 
Then, with respect to Equation~\ref{eq:gnn-ncm},  our $\mathcal{G}$-constrained GNN-NCM $\widehat{\mathcal{M}}(\mathcal{G},\theta)$ based on underlying GNN-SCM $\mathcal{M}(\mathcal{G})$ is defined as: %follows:
{
\small
\begin{align*}
 \mathcal{M}(\mathcal{G}) \iff 
\widehat{\mathcal{M}}(\mathcal{G},\theta) 
& \equiv
\begin{cases}
    \widehat{\bf U} := \{{\widehat{\bf U}}_{v_i}: v_i \in \mathcal{N}_{\leq k}(v) \} \cup \{{\widehat{\bf U}}_{v,v_i}:{e_{v,v_i} \in \mathcal{E}}\}\\
    {\bf V} := \{y_{v}\} \cup \{y_{v_i};
    v_i \in  \mathcal{N}_{\leq k}(v) \}\\
    \mathcal{\widehat{F}(\theta)}:= \{\widehat{f}_{v_i}(\widehat{\bf u}_{v_i},\widehat{\bf u}_{{v_i},{v_j}})(\theta_{v_i}):v_i \in V\}\\
    P(\widehat{\bf U}) := \{\widehat{\bf U}_{v_i} \sim \text{Unif}(0,1): v_i \in {\bf V} \}  \cup \{ T_{k, v_i} \sim \mathcal{N}(0,1)\}\\
    \end{cases}
\end{align*}
}%
\end{proof}

\subsection{Proof of Theorem~\ref{thm:nodeex}}
\label{app:nodeex}
\nodeex*
\begin{proof} \label{proof3}
In a graph classification task, GNN predicts a graph label $\widehat{y}_G$ for a graph $G$ with label $y_G$. The GNN explanation measures how accurately did the GNN classify the graph by finding the groundtruth explanation $\Gamma_{G}$ in the graph $G$. 
In other words, the graph explanation demands the nodes in $\Gamma_{G}$
should be as accurate as possible. That is, 
if $y_v = \widehat{y}_v; \forall v \in \Gamma_{G}$, then GNN explanation explained $G$'s prediction accurately.

Based on Theorem~\ref{thm:thmgnnncm}$, \mathcal{G}$-constrained GNN-NCM $\widehat{\mathcal{M}}(\mathcal{G},\theta)$ induces causal structure $\mathcal{G}$ based on the reference node $v \in \mathcal{V}(\mathcal{G})$. So, the trained $\mathcal{G}$-constrained GNN-NCM estimates node effect ${\bf U}_{v_i}$, and edge effect ${\bf U}_{v, v_i}$ defined in Equation~\ref{causal net}. 
Note that all the effects are respective to the reference node $v$, and if $v$ changes, the causal structure $\mathcal{G}$ will be also changed, and as a result $\mathcal{G}$-constrained GNN-NCM will be completely different.

According to Equation. \ref{eq:NM6}, a $\mathcal{G}$-constrained GNN-NCM $\widehat{\mathcal{M}}(\mathcal{G},\theta)$ calculates $p^{\widehat{\mathcal{M}}(\mathcal{G},\theta)}(y_{v})$ as the expected value of all the causal effects from the neighbor nodes $v_i$, i.e. $\text{do}(v_i)$, on the reference node $v$: 
\begin{equation}
    \forall v \in \mathcal{V}(G), \left( \exists v_i \in \mathcal{N}_{\leq k}(v) \right) \implies \left( p^{\widehat{\mathcal{M}}(\mathcal{G}(G),\theta)}(y_{v}) \geq 0 \right)
\end{equation}
As the expected value was calculated for $v$, we can explain $v$'s node label based on the outcome of $p^{\widehat{\mathcal{M}}(\mathcal{G}(G),\theta)}(y_{v})$.
\end{proof}

\subsection{Proof of Theorem~\ref{thm:nodeexp}}
\label{app:nodeexp}
\nodeexp*
\begin{proof} \label{proof4}
We know there is a value for the expected effect on an explainable node $v \in \mathcal{V}(G)$ as $p^{\widehat{\mathcal{M}}(\mathcal{G}(G),\theta)}(y_{v})$. This probability was calculated by the trained $\mathcal{G}$-constrained GNN-NCM $\widehat{\mathcal{M}}(\mathcal{G},\theta)$. 

For our purpose, we only consider the association and intervention layers. 
The association layer is about the observable information provided by the data, while the intervention layer in this paper is an explanation via doing interventions. 
\begin{equation}
\begin{aligned}
    & \text{Association Layer } \mathrm{L}_1: \quad G=(\mathcal{V},\mathcal{E}), \; y_G \in \mathcal{Y}, \; \Gamma_G \\
    & \text{Intervention Layer } \mathrm{L}_2: \quad \text{do}(v_i) = (y_v|y_{v_i} = x), \; \mathbf{U}_{v_i}, \; \mathbf{U}_{v,v_i} \\
\end{aligned}
\end{equation}
 
The explanation methods using information in the association layer is called association-based explanation. Instead, the $\mathcal{G}$-constrained GNN-NCM $\widehat{\mathcal{M}}(\mathcal{G}(G),\theta)$ is trained on interventions---a node $v_i$ in the neighborhood of the reference node $v$ provides the causal explanation information based on the intervention do$(v_i)$---and leveraging this interventional layer information can causality interpret the GNN predictions. To align this intrinsic explanation information from causal effects, we introduce the term \emph{expressivity}. 
Remember in probability theory, where the expected value of a random variable provides a measure of the central tendency of a probability distribution. 
For a discrete random variable \(Y\) with a probability distribution \(p(y)\), the expected value of \(Y\) , denoted \(\mathbb{E}(Y)\), is defined as:
$\mathbb{E}(Y_v) = \sum_{y} y \cdot p(y)$, 
where \(y\) ranges over all possible values of \(Y\), and \(p(y)\) is the probability that \(Y\) takes the value \(y\).  
According to Equation~\ref{eq:NM6}, $p^{\widehat{\mathcal{M}}(\mathcal{G}(G),\theta)}(y_{v})$ includes all the causal effect  from the 
neighhor nodes on $y_v$. Hence, the expected value of of the random variable node label \(Y_v\) will be defined as:
\[
\mathbb{E}_{\mathrm{L}_2}(Y_v) = \sum_{y_{v}} y_{v} \cdot p^{\widehat{\mathcal{M}}(\mathcal{G}(G),\theta)}(y_{v}),
\]
where the subscript $\mathrm{L}_2$ means the expectation leverages the interventional layer information. 
Note that this expected value is only feasible for the reference node (i.e., $v$) upon which the $\mathcal{G}$-constrained GNN-NCM $\widehat{\mathcal{M}}(\mathcal{G},\theta)$ was built. This expected value is treated as the expressivity of the explainable node $v$ that is denoted as $\text{exp}_{v}(\widehat{\mathcal{M}}(\mathcal{G},\theta))$.
\end{proof}

\begin{figure}[!tbh]
  \centering
  \includegraphics[width=0.8\textwidth]{}
  \caption{Overview. {\color{green} Green node} is the reference node $v$, {\color{blue} Blue nodes} and {\color{red}Red nodes} are node $v$'s $1$-hop and $2$-hop neighbors.}
  \label{fig:Causal graph explanation}
\end{figure}

\section{Experiments}
\label{app:exp}

\subsection{More experimental setup}
\label{app:setup}

%\subsubsection{\bf Detailed hyperparameter settings in {\name}} 
\noindent {\bf CXGNN:} The hyperparameter settings were determined through a systematic search and validation process to optimize the model's performance. 
The following hyperparameters were selected based on cross-validation:
\begin{itemize}
\item {Learning rate:} %We experimented with three 
We test learning rates—0.001, 0.01, 0.1—to find the optimal value, and 
%for synthetic datasets dataset. The 
the learning rate of 0.01 yielded the best results.

\item {Number of hidden layers:} We considered architectures with 1, 2, and 3 hidden layers. A network with 2 hidden layers outperformed the others in terms of both accuracy and convergence speed.

\item {Hidden layer size:} We tested various hidden layer sizes, including 32, 64, and 128 neurons per layer. A hidden layer size of 64 neurons struck a balance between model complexity and performance.

\item {Batch size:} We tested different batch sizes, ranging from 32 to 128. A batch size of 64 was found to be suitable. 

\item In addition, as the baseline GNN is GCN~\cite{kipf2017semi} that is a 2-layer neural network. We hence use $k=2$ in CXGNN.
\end{itemize}

%\subsubsection{\bf Detailed hyperparameter settings in the compared GNN explainers} 

\noindent {\bf GNNExplainer:} %The GNNExplainer method includes several hyperparameters, which 
Its hyperparameters are detailed as follows:
\begin{itemize}
\item A dictionary to store coefficients for the entropy term and the size term for learning edge mask and node feature mask. The chosen settings are:
\begin{itemize}
        \item edge: entropy: 1.0, and size: 0.005
        \item feature: entropy: 0.1, and size: 1.0
\end{itemize}
\item The number of epochs for training the explanation model: 200.

\item Learning Rate: Used in the Adam optimizer for training, set to 0.01.

\item The number of hops to consider when explaining a node prediction. It is equal to the number of layers in the GNN.
\end{itemize}

\noindent {\bf PGMexplainer:} %The PGMexplainer method, which 
It is for explaining predictions made by GNNs using Probabilistic Graphical Models (PGMs), provides these hyperparameters:

\begin{itemize}
\item Number of perturbed graphs to generate: 10 for graph-level explanations

\item How node features are perturbed: mean, for graph-level explanations.

\item The probability that a node's features are perturbed: 0.5

\item The threshold for the chi-square independence test: 0.05

\item Threshold for the difference in predicted class probability: 0.1

\item Number of nodes to include in PGM: all nodes given by the chi-square test are kept.
\end{itemize}

\noindent {\bf Guidedbp:} %The Guidedbp method, which 
It is a form of Guided Backpropagation for explaining graph predictions, contains several hyperparameters and method-specific parameters:

\begin{itemize}
\item The loss function used to train the model: cross entropy.
\item The number of hops for the $k$-hop subgraph, is implicitly set to the number of layers in the GNN (i.e., $k=2$ in our results).
\end{itemize}

\noindent {\bf GEM:} The GEM method has the  below hyperparameters:
\begin{itemize}
\item Optimization Parameters:
\begin{itemize}
\item Learning Rate: 0.1, Gradient Clipping: 2, Batch size: 20, Number of Epochs: 100, Optimizer: "Adam"
% \item Gradient Clipping: 2
% \item Batch size: 20
% \item Number of Epochs: 100
% \item Optimizer: "Adam"
\end{itemize}
\item Model Parameters:

\begin{itemize}
\item Hidden Dimension: 20, Output Dimension: 20, Number of Graph Convolution Layers: 2
% \item Output Dimension: 20. 
% \item Number of Graph Convolution Layers: 2
\end{itemize}
\item Explainer Parameters:
\begin{itemize}
\item Iterations to find alignment matrix: 1000, Number of mini-batches: 10
%\item Number of mini-batches: 10
\end{itemize}
\end{itemize}

\noindent {\bf RCExp:} The reinforced causal explainer for graph neural networks has the  below hyperparameters:
\begin{itemize}
\item Optimization Parameters:
\begin{itemize}
\item Learning Rate: 0.01, Weight Decay: 0.005, Number of Epochs: 100, Optimizer: "Adam"
% \item Weight Decay: 0.005
% \item Number of Epochs: 100
% \item Optimizer: "Adam"
\end{itemize}
\item Model Parameters:
\begin{itemize}
\item Hidden Dimension: 64, 32, Output Dimension: 2, Number of Graph Convolution Layers: 2
% \item Hidden Dimension: 32 
% \item Output Dimension: 2
% \item Number of Graph Convolution Layers: 2
\end{itemize}
\item Explainer Parameters:
\begin{itemize}
\item Output size of edge action rep generator: 64, Edge attribute dimension: 32
%\item edge attribute dimension: 32
\end{itemize}
\end{itemize}

\noindent {\bf Orphicx:} The causality-inspired latent variable Model for interpreting graph neural networks has the  below hyperparameters:
\begin{itemize}
\item Optimization Parameters:
\begin{itemize}
\item Learning Rate: 0.0005, Weight Decay: 0.01, Number of Epochs: 100, Optimizer: "Adam", Early Stopping Patience: 20 
% \item Weight Decay: 0.01
% \item Number of Epochs: 100
% \item Optimizer: "Adam"
% \item Early Stopping Patience: 20
\end{itemize}
\item Model Parameters:
\begin{itemize}
\item Hidden Dimension: 32, Decoder Hidden Dimension: 16, Dropout rate: 0.5, Output Dimension: 108, Number of Graph Convolution Layers: 2
% \item Decoder Hidden Dimension: 16. 
% \item Dropout rate: 0.5.
% \item Output Dimension: 108. 
%\item Number of Graph Convolution Layers: 2
\end{itemize}
\item Explainer Parameters: Number of causal factors: 5
% \begin{itemize}
% \item Number of causal factors: 5
% \end{itemize}
\end{itemize}

\begin{figure}[!t]
  \centering
  \begin{subfigure}[b]{0.32\textwidth}
    \includegraphics[width=\linewidth]{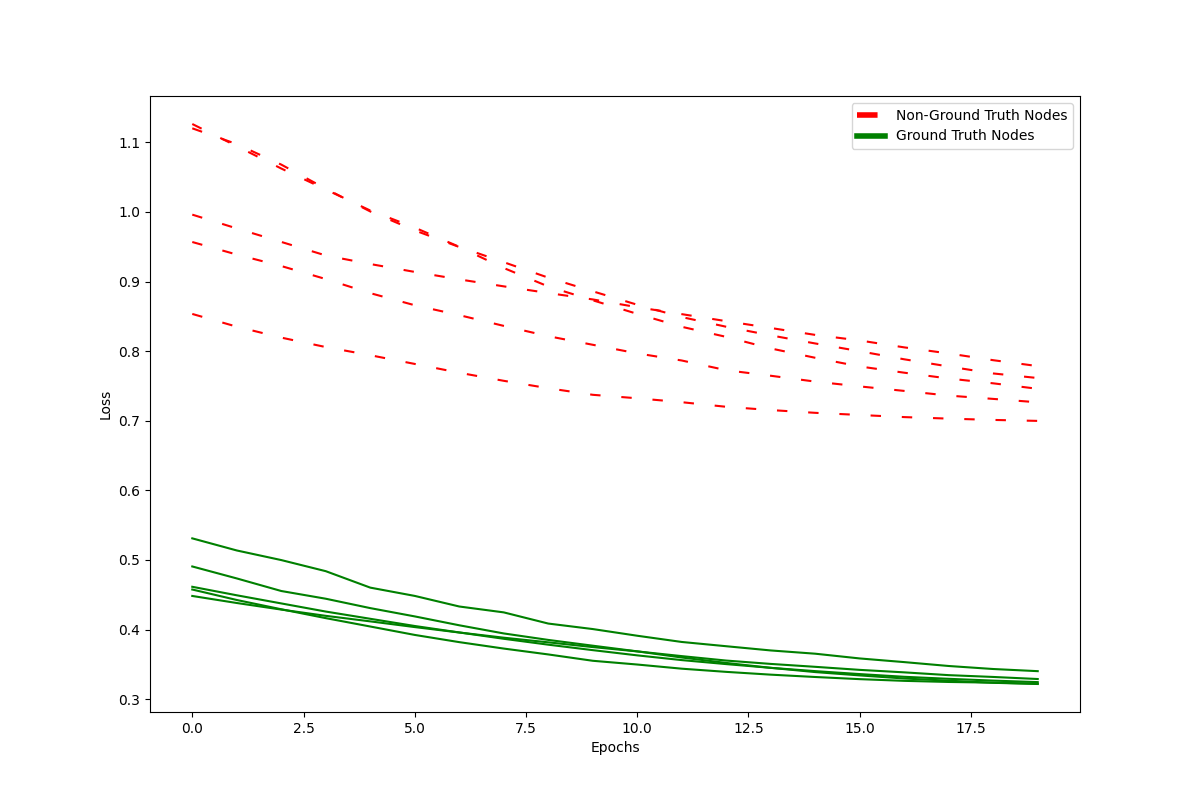}
  \end{subfigure}
  \begin{subfigure}[b]{0.32\textwidth}
    \includegraphics[width=\linewidth]{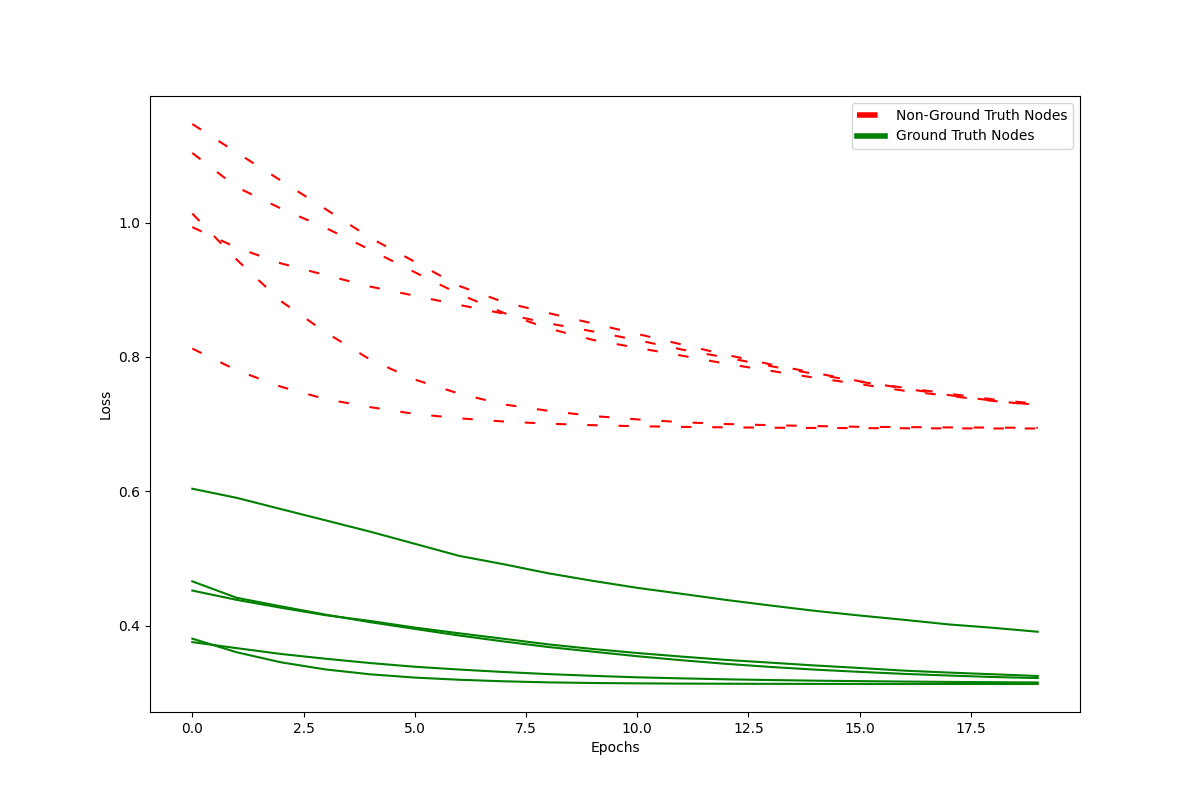}
  \end{subfigure}
  \begin{subfigure}[b]{0.32\textwidth}
    \includegraphics[width=\linewidth]{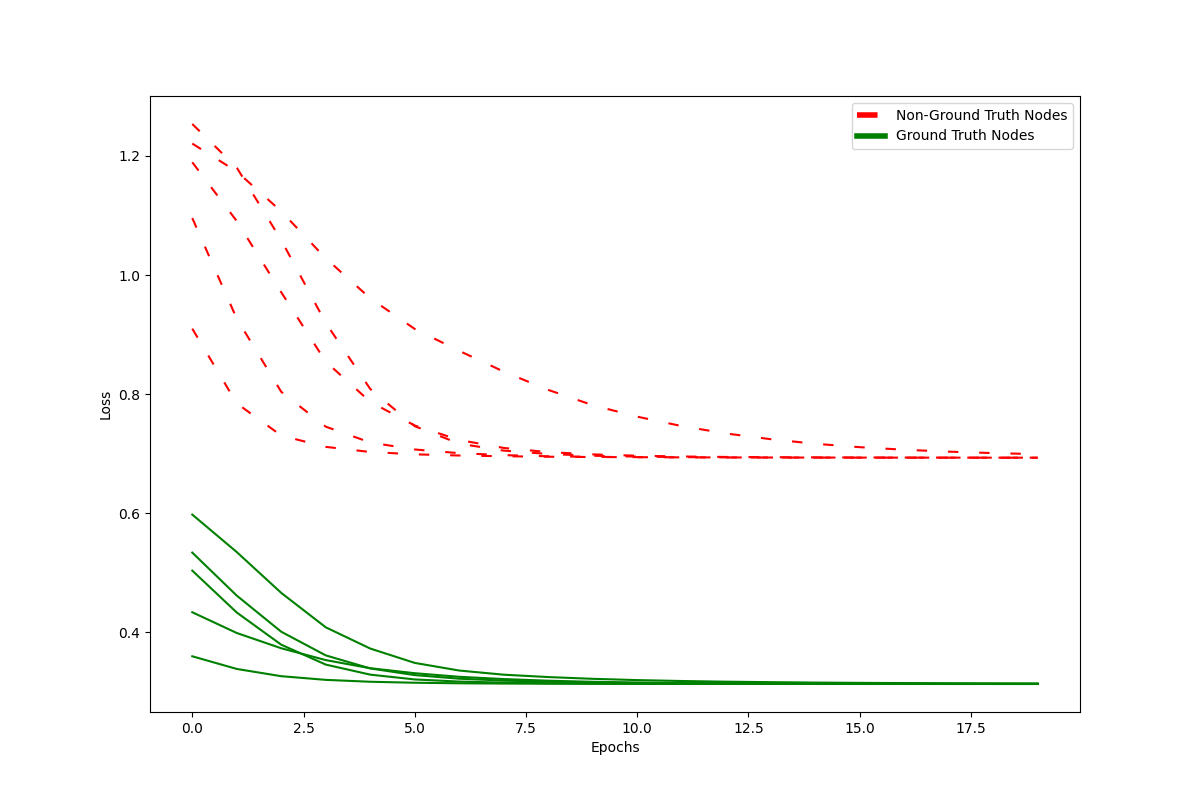}
  \end{subfigure}
    \begin{subfigure}[b]{0.32\textwidth}
    \includegraphics[width=\linewidth]{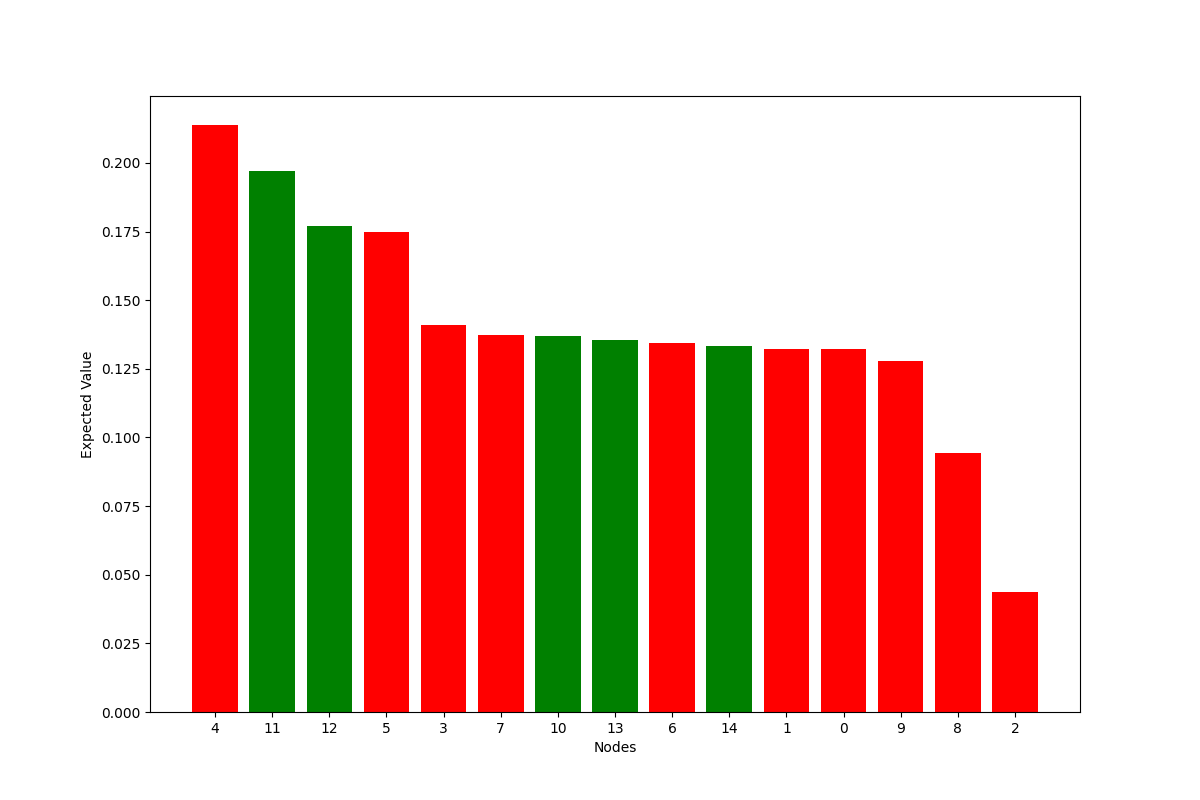}
  \end{subfigure}
  \begin{subfigure}[b]{0.32\textwidth}
    \includegraphics[width=\linewidth]{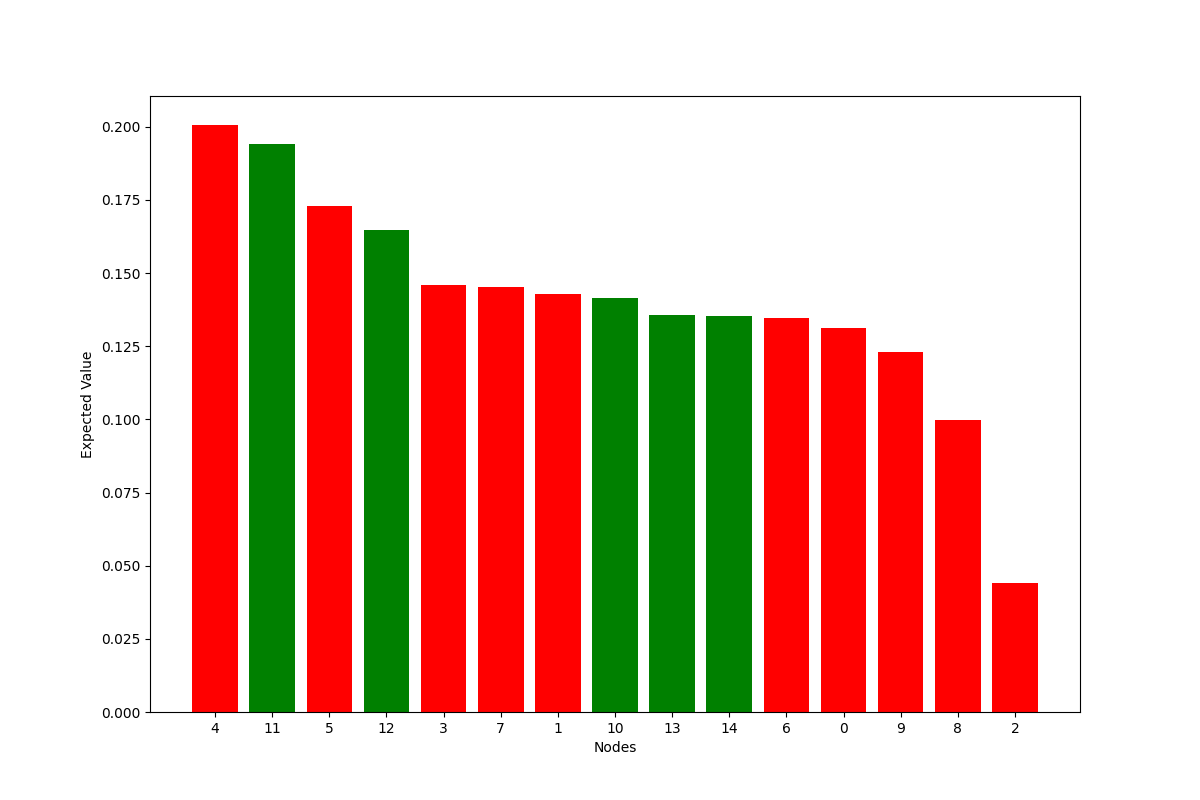}
  \end{subfigure}
  \begin{subfigure}[b]{0.32\textwidth}
    \includegraphics[width=\linewidth]{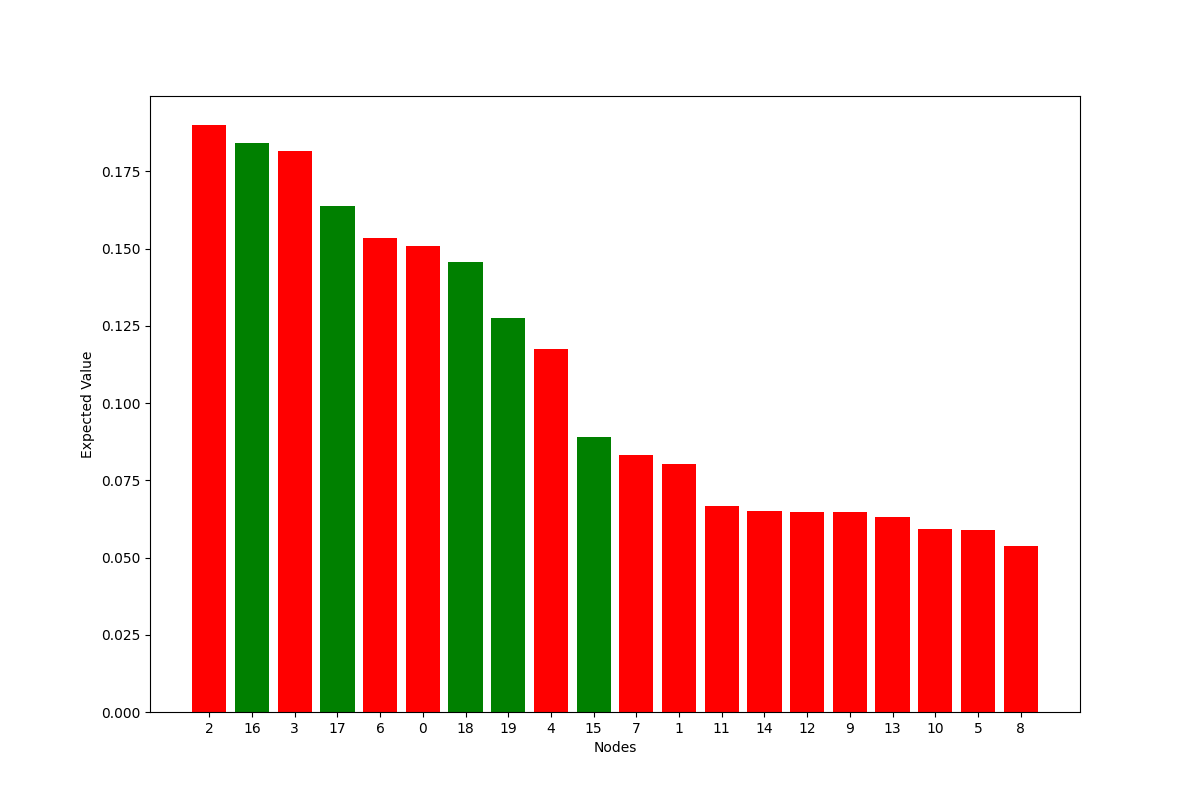}
  \end{subfigure}
\caption{More results on BA+House. Top 3 figures: Loss curves of training the GNN-NCMs on the groundtruth nodes ({\color{green} green curves}) and non-groundtruth ones ({\color{red} red curves}); Bottom 3 figures: Node expressivity distributions on three unsuccessful graphs. {\color{green} Green bars} ({\color{red} red bars}) correspond to nodes that are (NOT) in the groundtruth. Same meaning for all the below figures.
%, while {\color{red} red bars} correspond to nodes that are not.
}
\label{fig:vis_bahouse}
\end{figure}

\begin{figure}%[h]
  \centering
  \begin{subfigure}[b]{0.32\textwidth}
    \includegraphics[width=\linewidth]{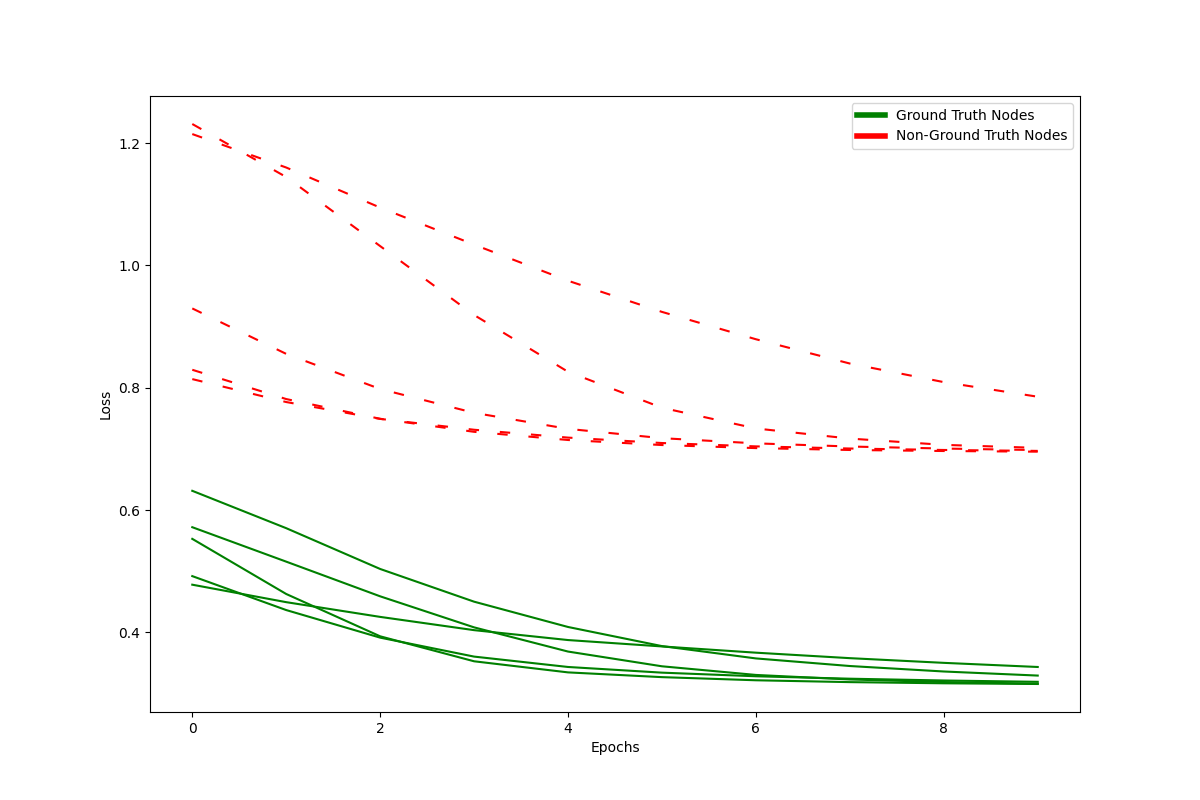}
  \end{subfigure}
  \begin{subfigure}[b]{0.32\textwidth}
    \includegraphics[width=\linewidth]{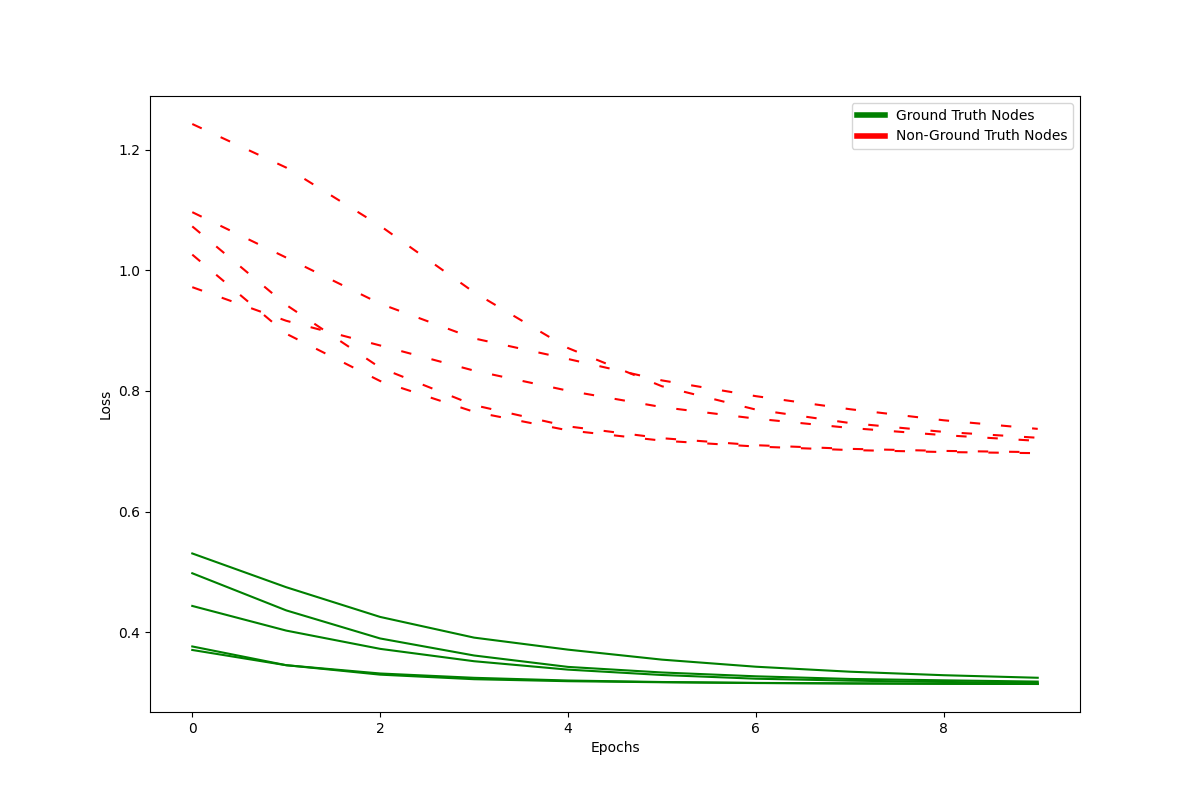}
  \end{subfigure}
    \begin{subfigure}[b]{0.32\textwidth}
    \includegraphics[width=\linewidth]{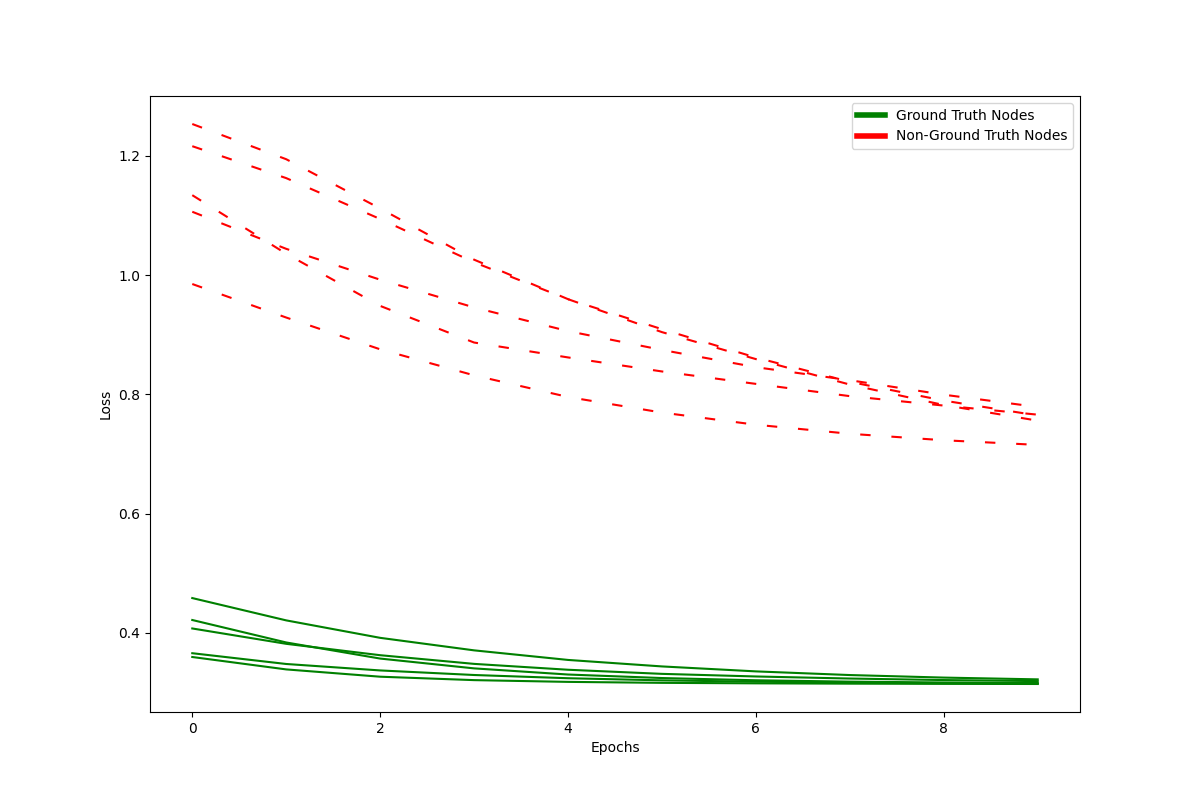}
  \end{subfigure}
   \label{fig2:sub1}
  \begin{subfigure}[b]{0.32\textwidth}
    \includegraphics[width=\linewidth]{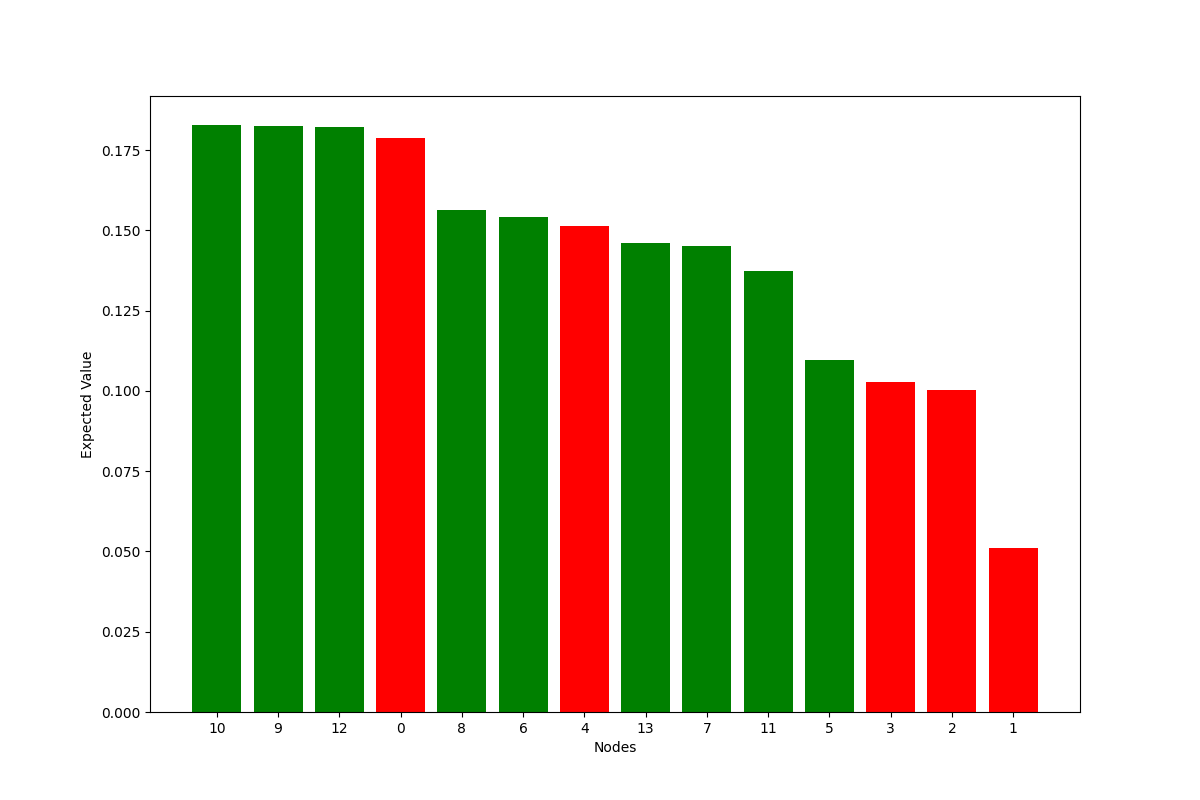}
  \end{subfigure}
  \begin{subfigure}[b]{0.32\textwidth}
    \includegraphics[width=\linewidth]{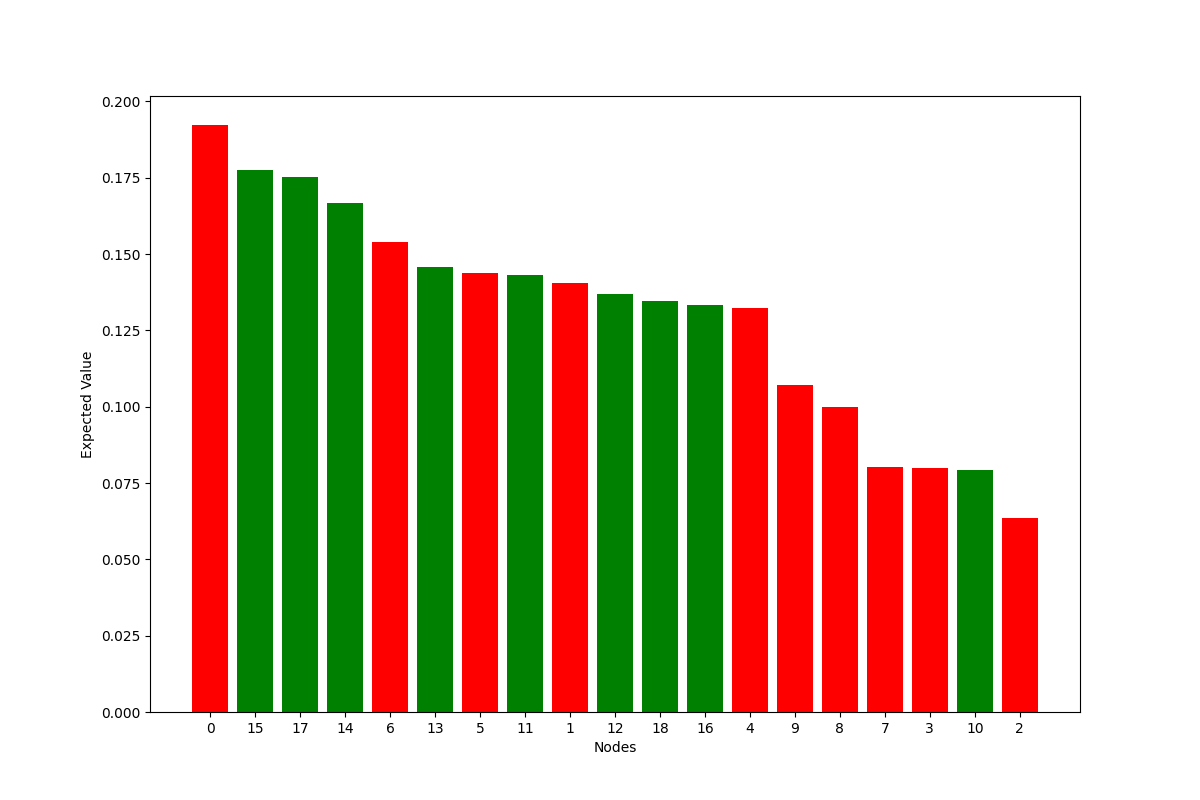}
  \end{subfigure}
  \begin{subfigure}[b]{0.32\textwidth}
    \includegraphics[width=\linewidth]{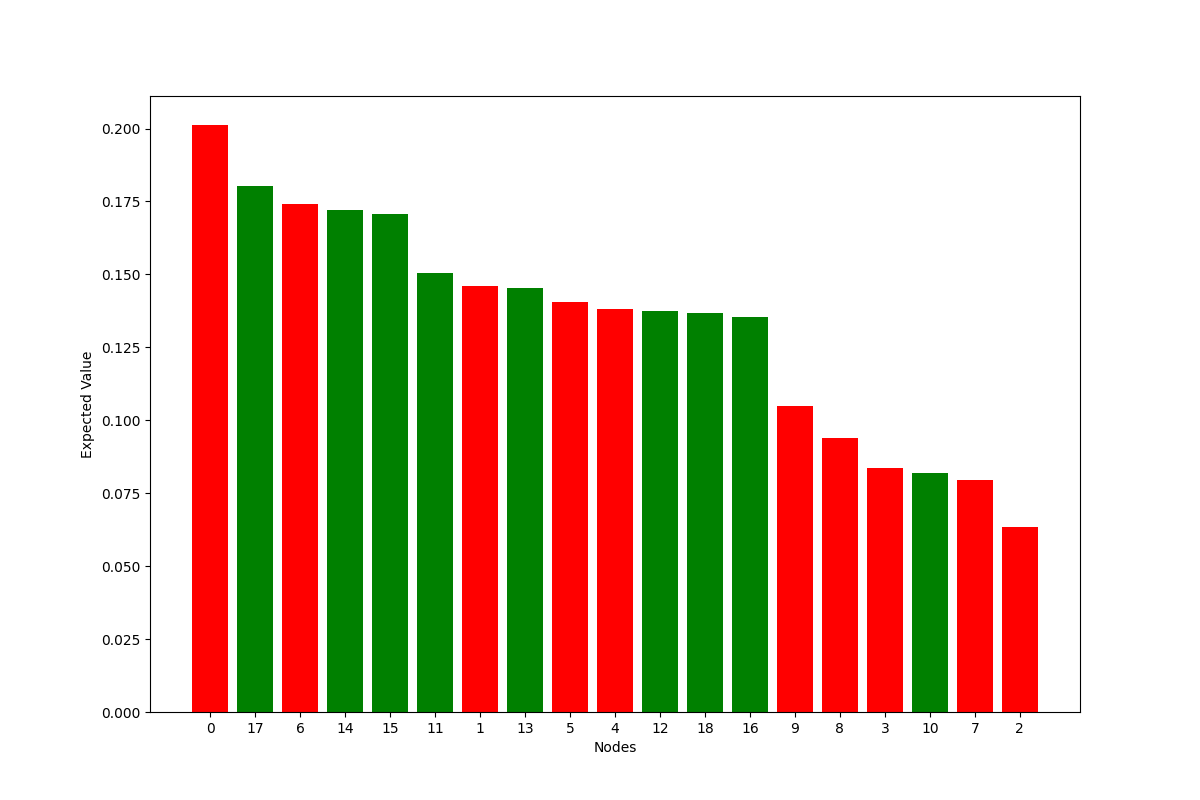}
  \end{subfigure}
\caption{More results on BA+Grid. 
%Top 3 figures: Loss curves of training the GNN-NCMs on the groundtruth nodes ({\color{green} green curves}) and non-groundtruth ones ({\color{red} red curves}); Bottom 3 figures: Node expressivity distributions on three unsuccessful graphs. %{\color{green} Green bars} ({\color{red} red bars}) correspond to nodes that are (NOT) in the groundtruth.
}
\label{fig:vis_bagrid}
\end{figure}

\begin{figure}%[!t]
  \centering
  \begin{subfigure}[b]{0.32\textwidth}
    \includegraphics[width=\linewidth]{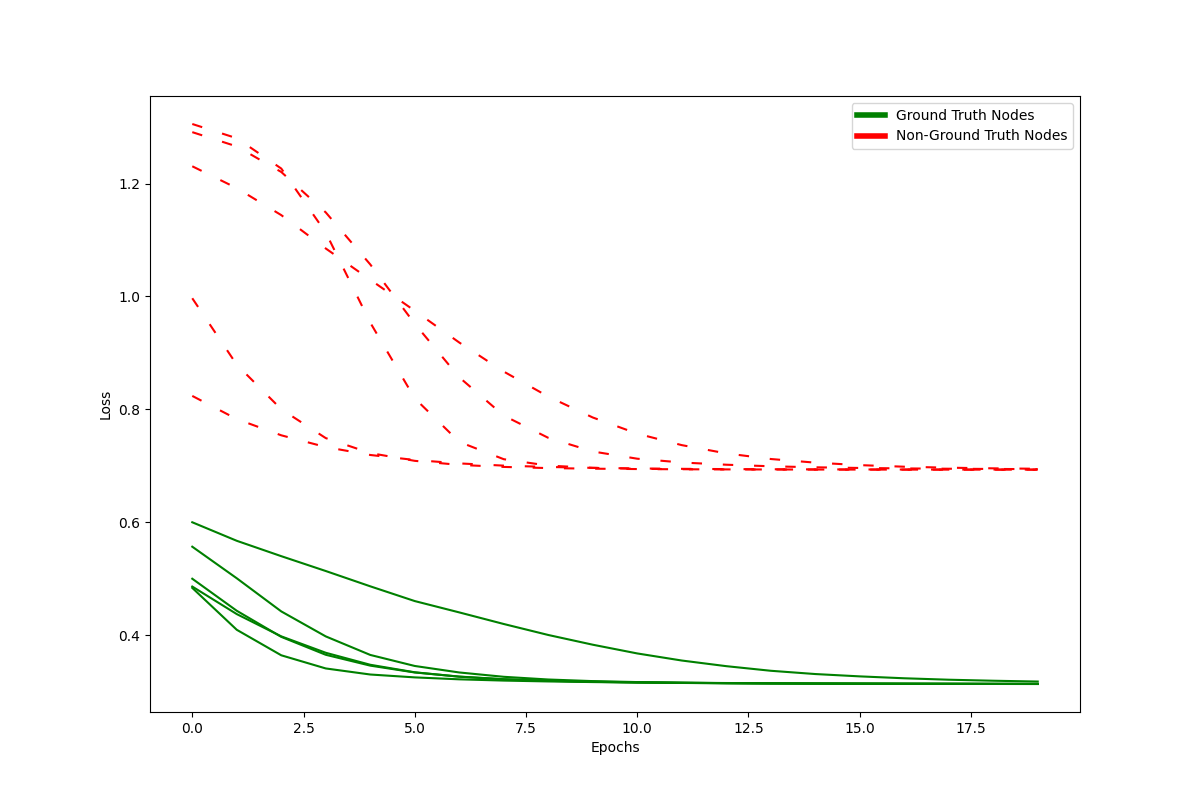}
  \end{subfigure}
  \begin{subfigure}[b]{0.32\textwidth}
    \includegraphics[width=\linewidth]{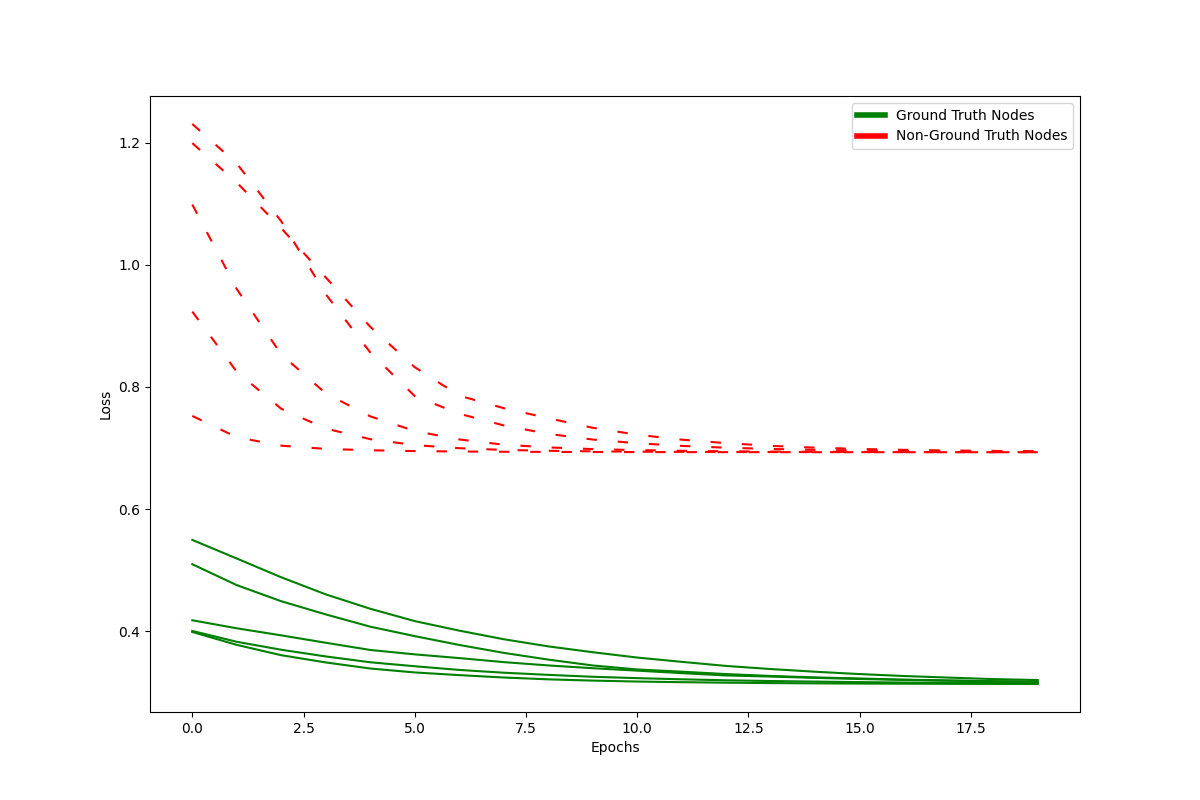}
  \end{subfigure}
    \begin{subfigure}[b]{0.32\textwidth}
    \includegraphics[width=\linewidth]{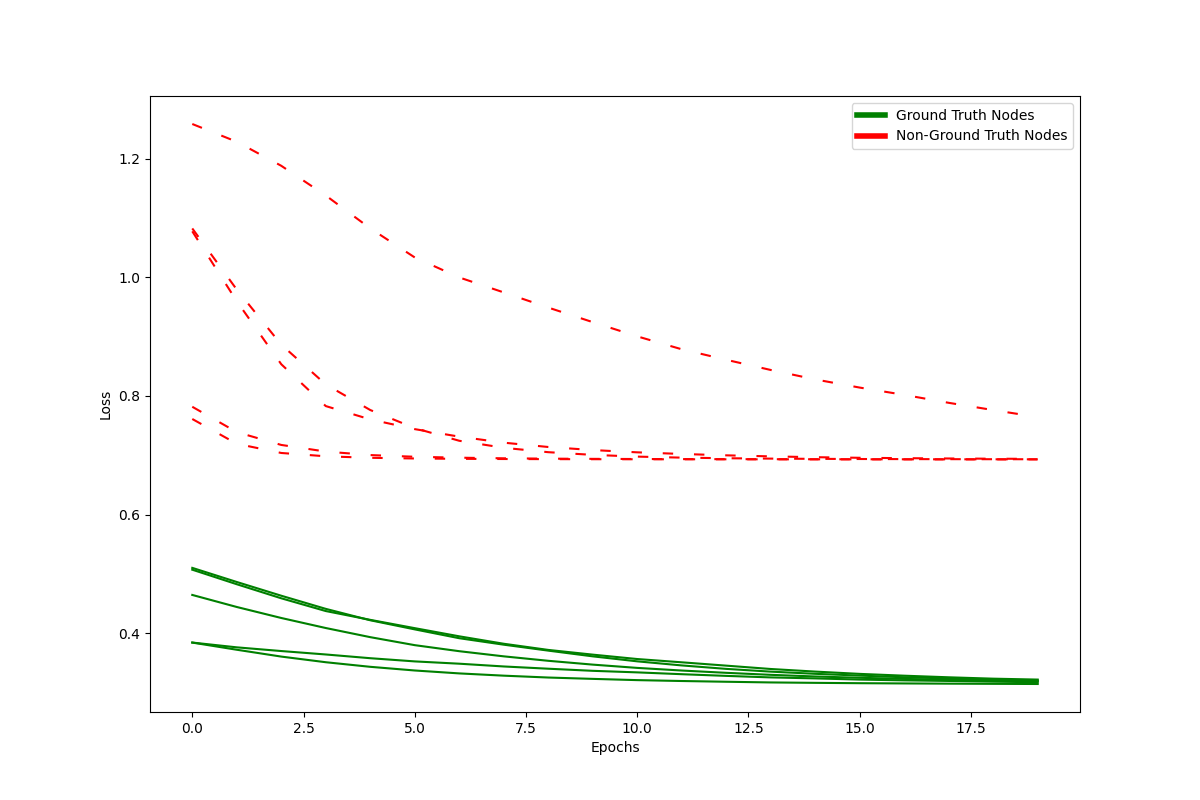}
  \end{subfigure}
  \begin{subfigure}[b]{0.32\textwidth}
    \includegraphics[width=\linewidth]{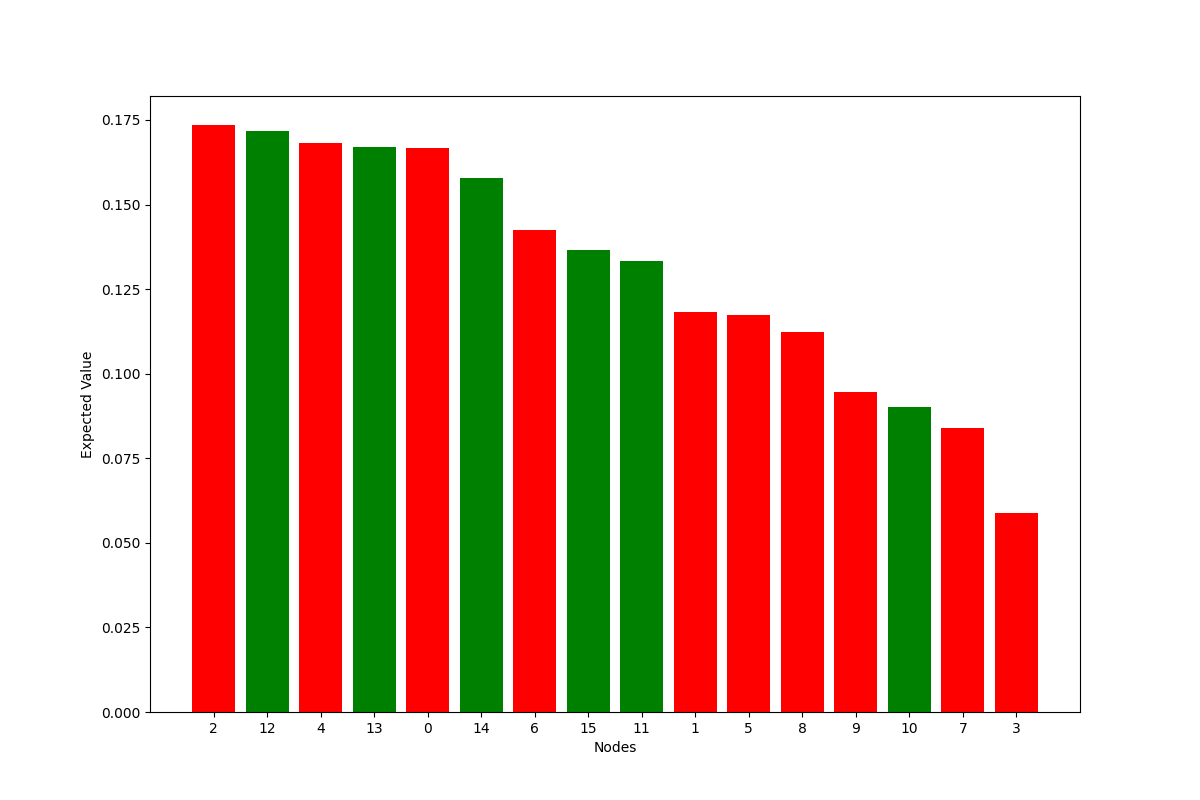}
  \end{subfigure}
  \begin{subfigure}[b]{0.32\textwidth}
    \includegraphics[width=\linewidth]{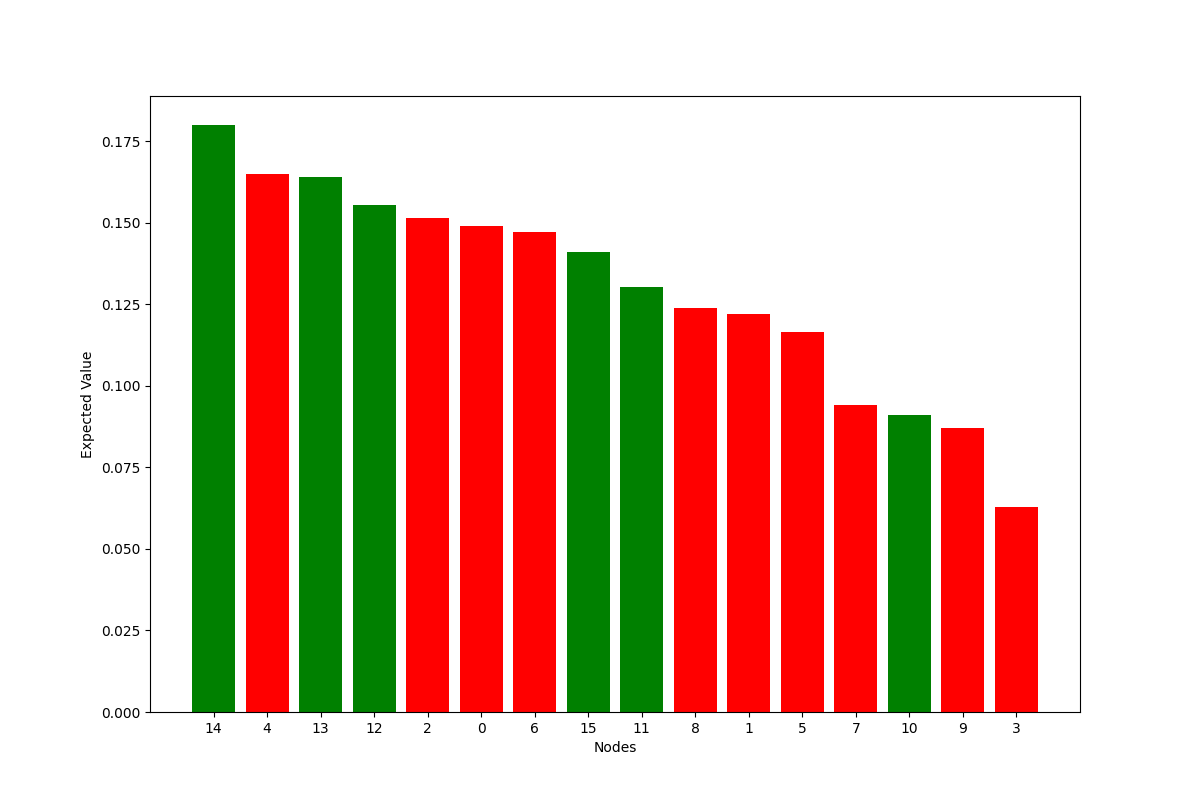}
  \end{subfigure}
  \begin{subfigure}[b]{0.32\textwidth}
    \includegraphics[width=\linewidth]{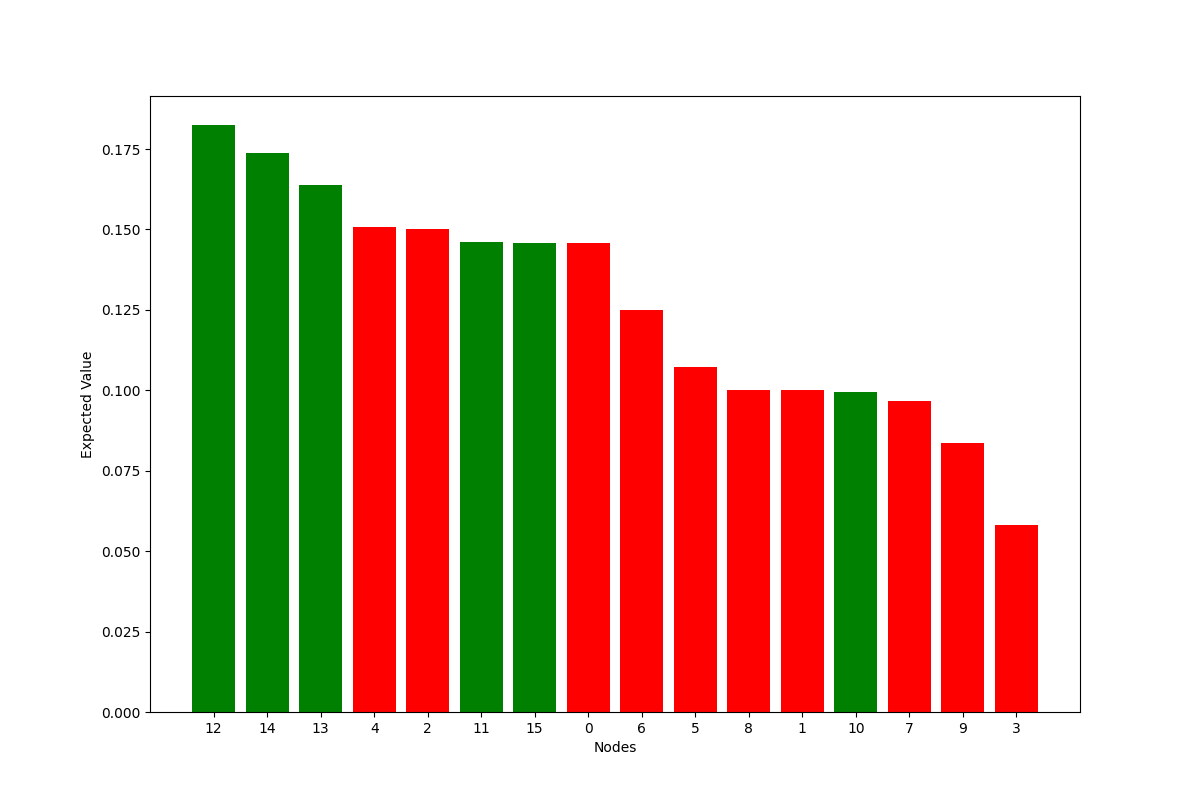}
  \end{subfigure}
\caption{More results on BA+Cycle. 
%Top 3 figures: Loss curves of training the GNN-NCMs on the groundtruth nodes ({\color{green} green curves}) and non-groundtruth ones ({\color{red} red curves}); Bottom 3 figures: Node expressivity distributions on three unsuccessful graphs. %{\color{green} Green bars} ({\color{red} red bars}) correspond to nodes that are (NOT) in the groundtruth.
}
\end{figure}

\begin{figure}%[!t]
  \centering
    \begin{subfigure}[b]{0.32\textwidth}
    \includegraphics[width=\linewidth]{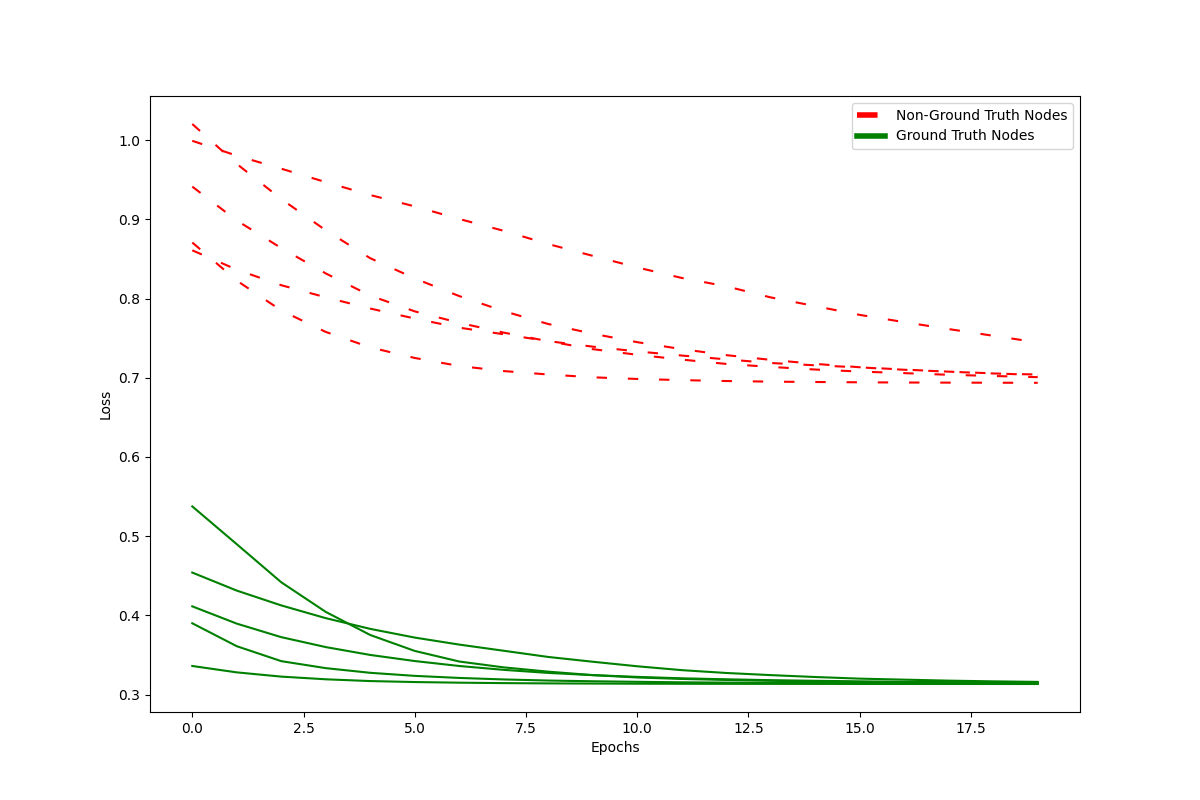}
  \end{subfigure}
  \begin{subfigure}[b]{0.32\textwidth}
    \includegraphics[width=\linewidth]{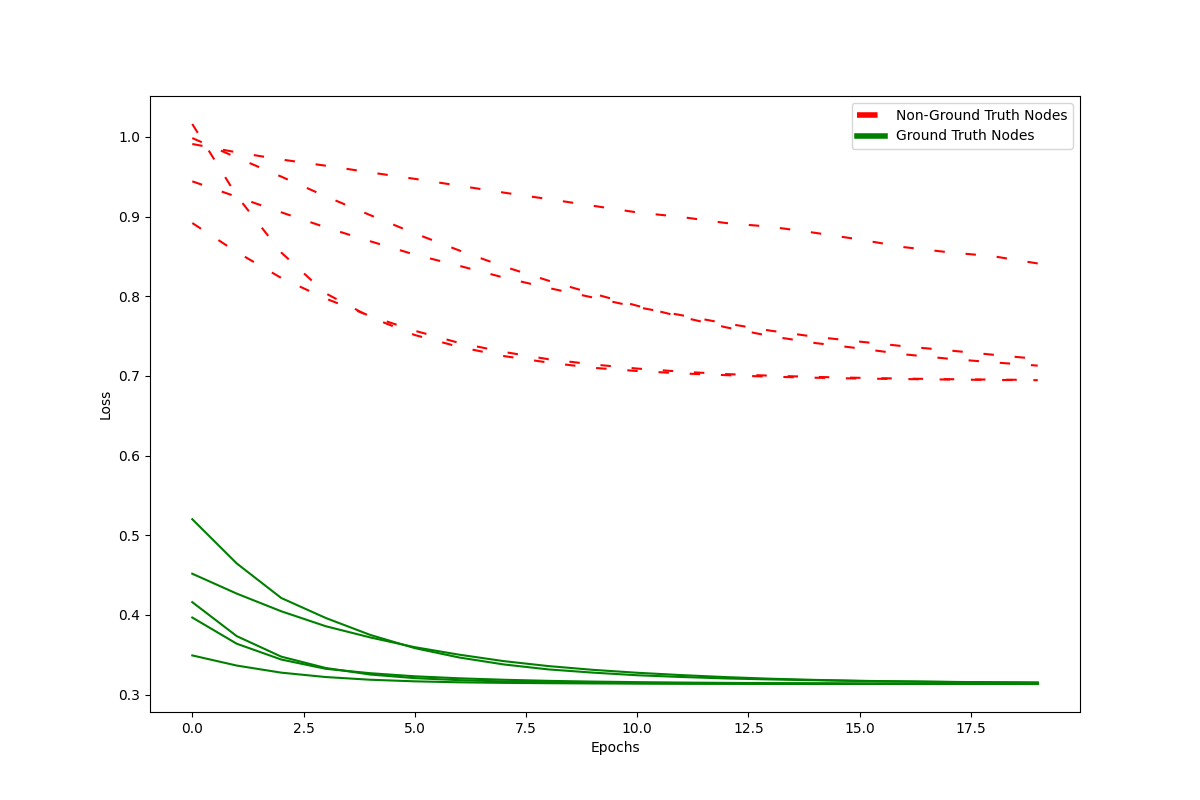}
  \end{subfigure}
  \begin{subfigure}[b]{0.32\textwidth}
    \includegraphics[width=\linewidth]{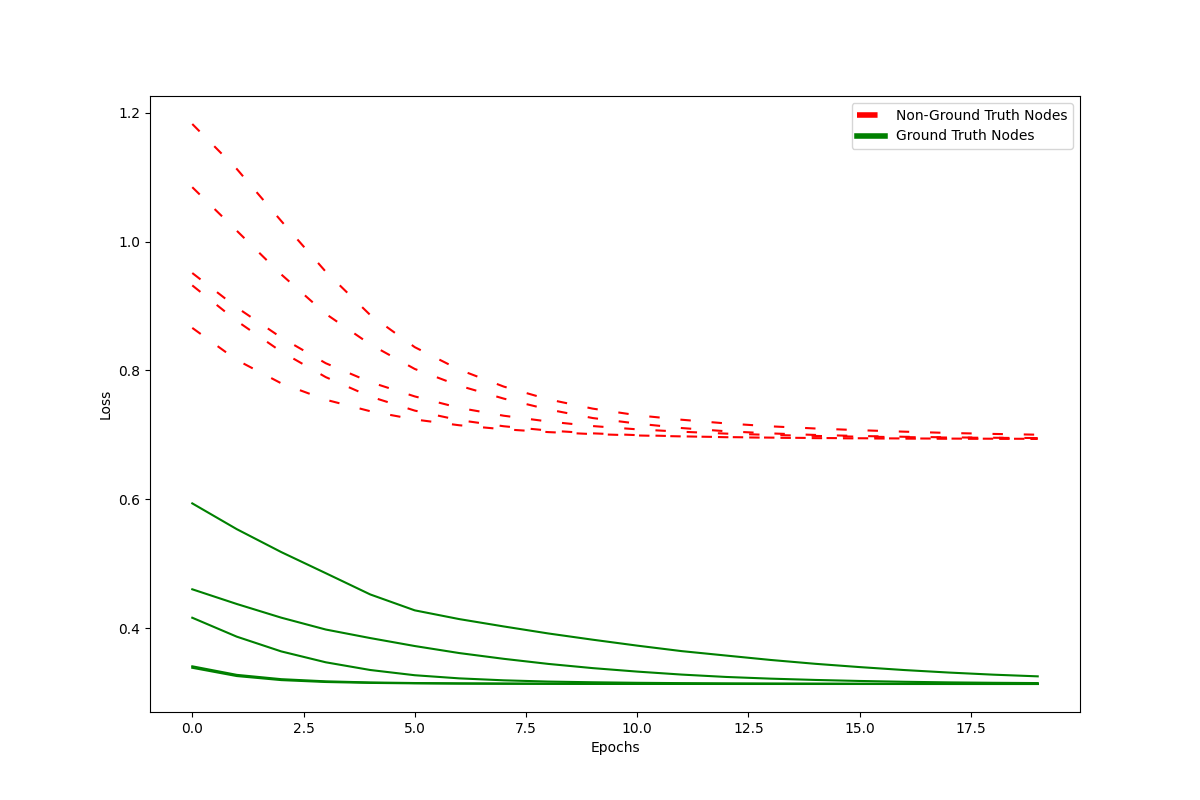}
  \end{subfigure}
    \begin{subfigure}[b]{0.32\textwidth}
    \includegraphics[width=\linewidth]{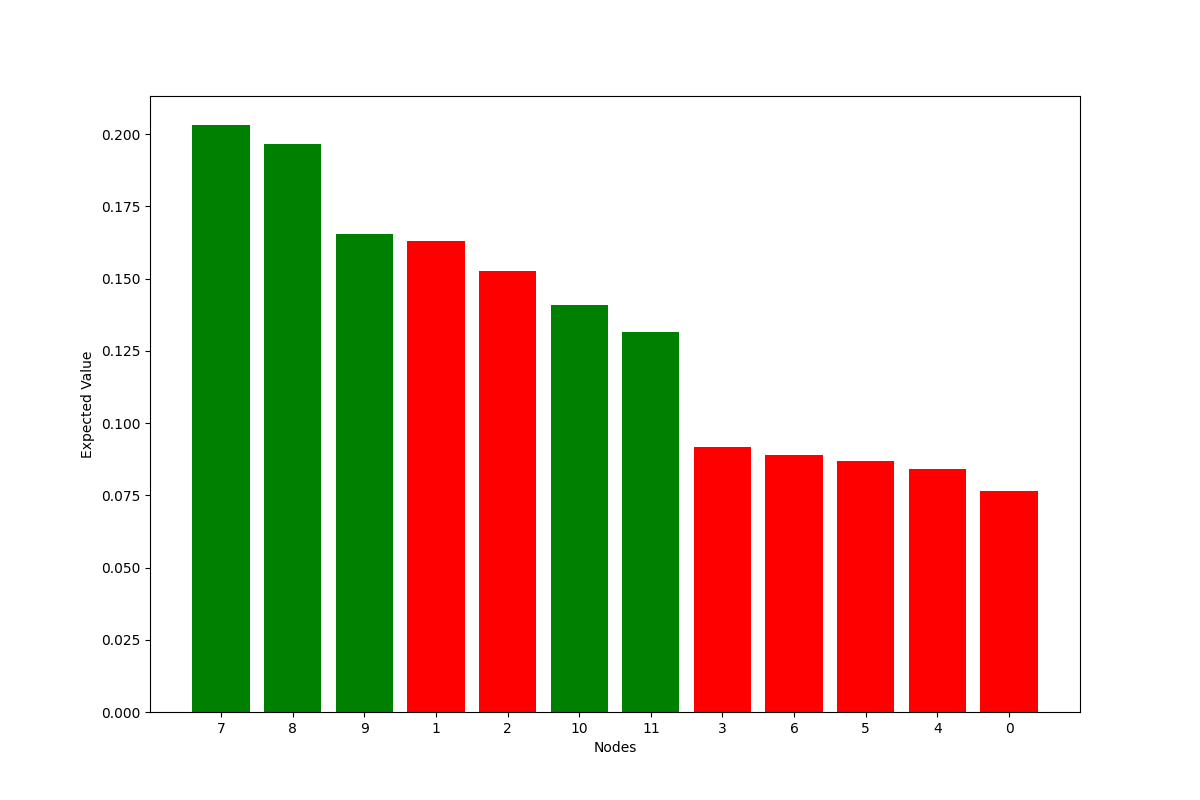}
  \end{subfigure}
  \begin{subfigure}[b]{0.32\textwidth}
    \includegraphics[width=\linewidth]{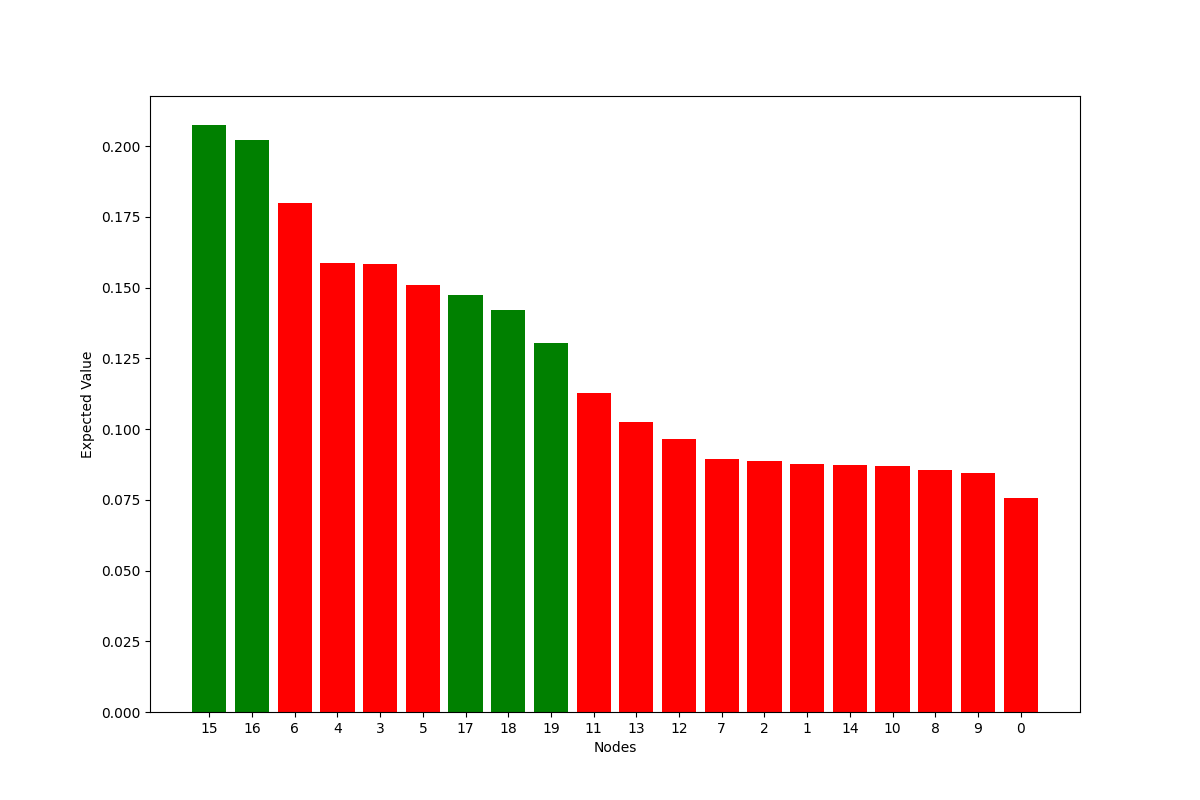}
  \end{subfigure}
  \begin{subfigure}[b]{0.32\textwidth}
    \includegraphics[width=\linewidth]{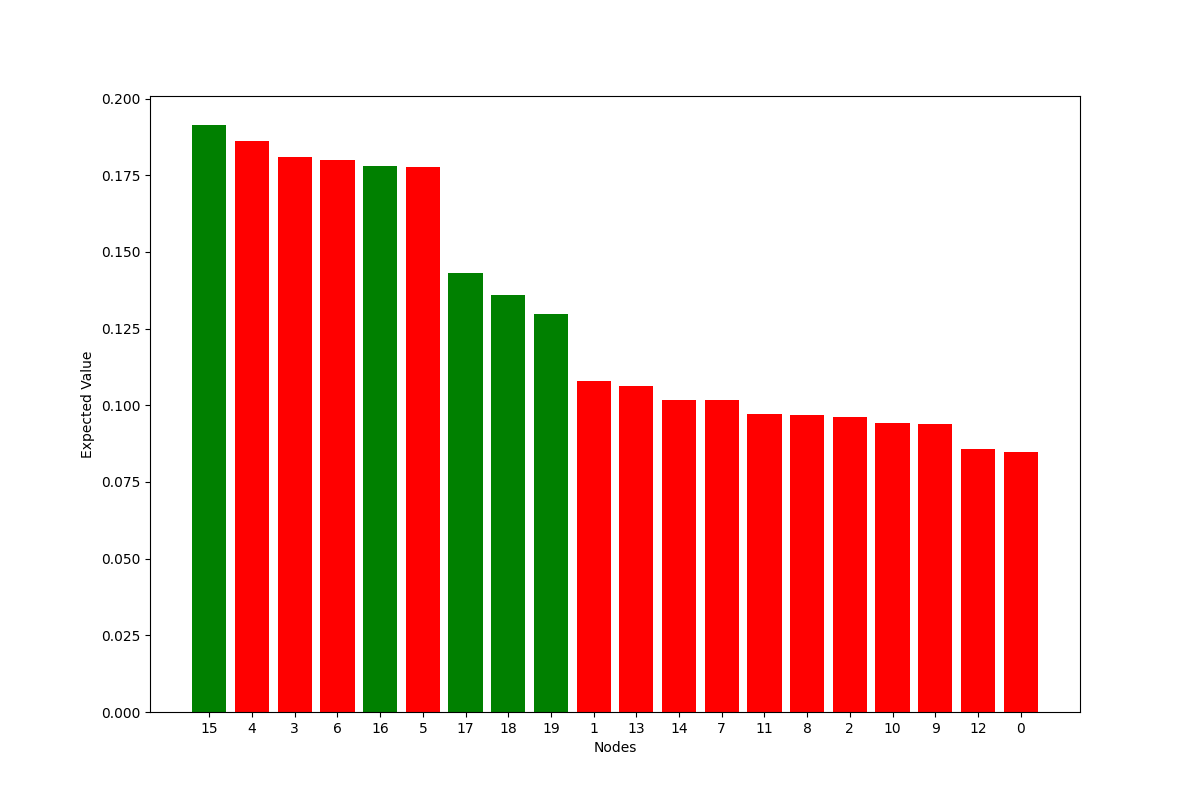}
  \end{subfigure}
\caption{More results on Tree+House. 
%Top 3 figures: Loss curves of training the GNN-NCMs on the groundtruth nodes ({\color{green} green curves}) and non-groundtruth ones ({\color{red} red curves}); Bottom 3 figures: Node expressivity distributions on three unsuccessful graphs. %{\color{green} Green bars} ({\color{red} red bars}) correspond to nodes that are (NOT) in the groundtruth.
}
\label{fig:vis_treehouse}
\end{figure}

\begin{figure}%[!t]
  \centering
  \begin{subfigure}[b]{0.32\textwidth}
    \includegraphics[width=\linewidth]{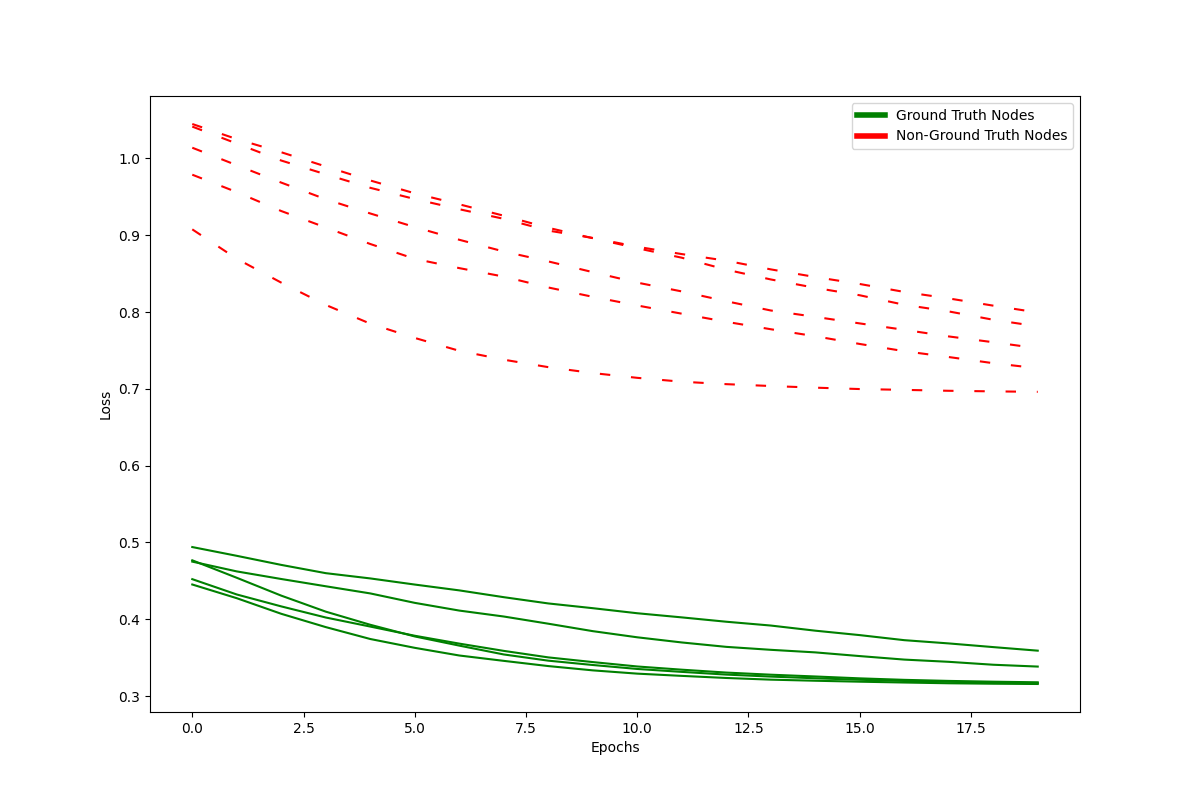}
  \end{subfigure}
  \begin{subfigure}[b]{0.32\textwidth}
    \includegraphics[width=\linewidth]{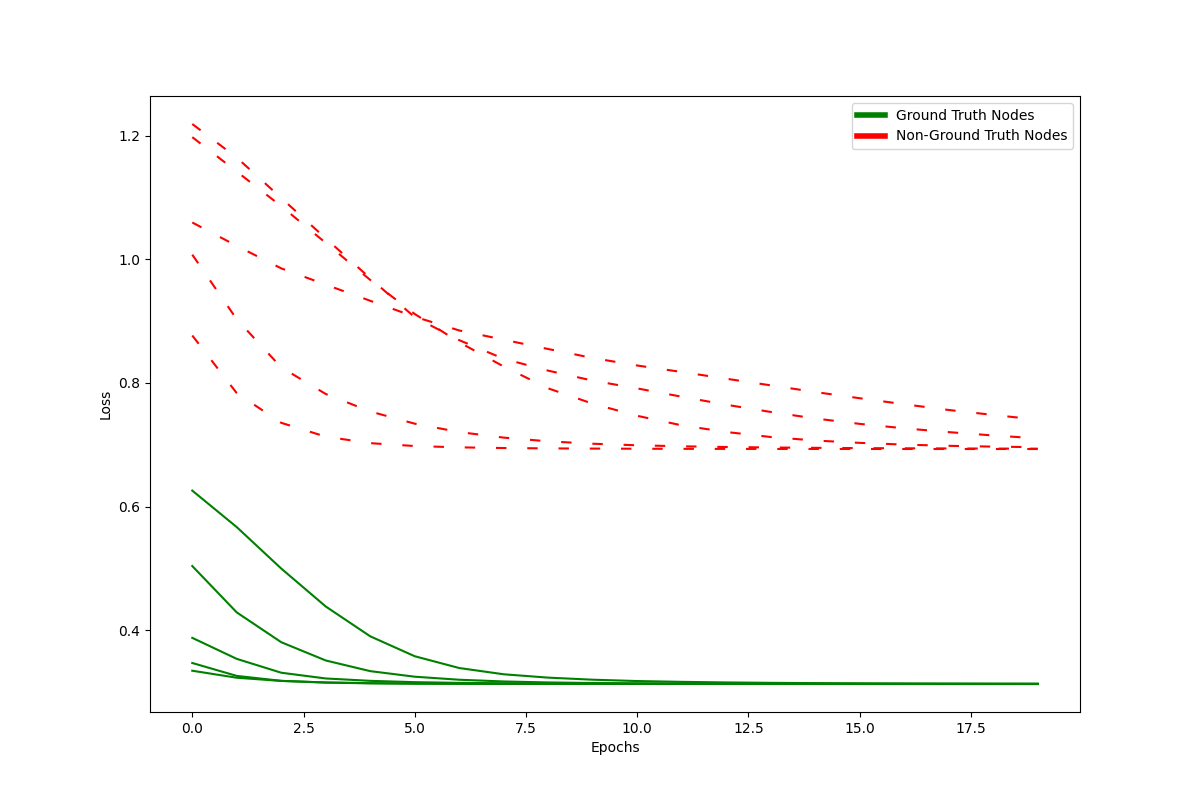}
  \end{subfigure}
    \begin{subfigure}[b]{0.32\textwidth}
    \includegraphics[width=\linewidth]{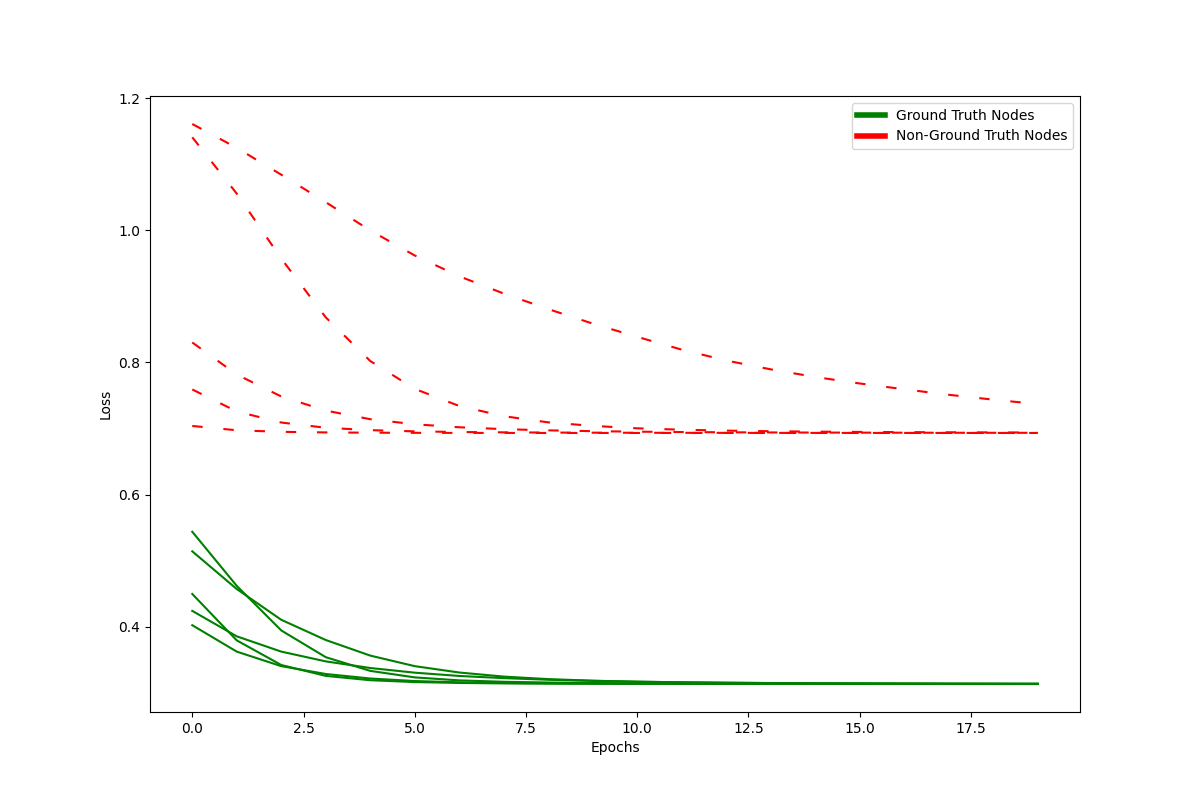}
  \end{subfigure}
  \begin{subfigure}[b]{0.32\textwidth}
    \includegraphics[width=\linewidth]{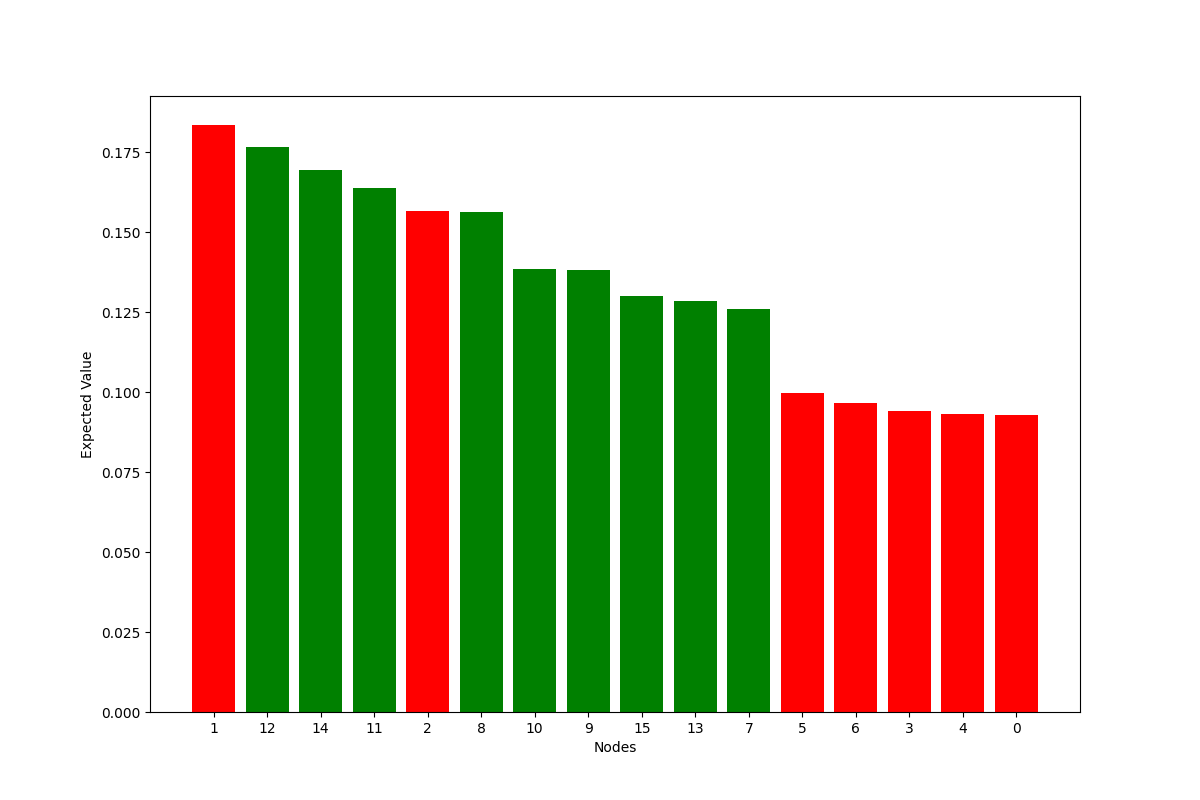}
  \end{subfigure}
  \begin{subfigure}[b]{0.32\textwidth}
    \includegraphics[width=\linewidth]{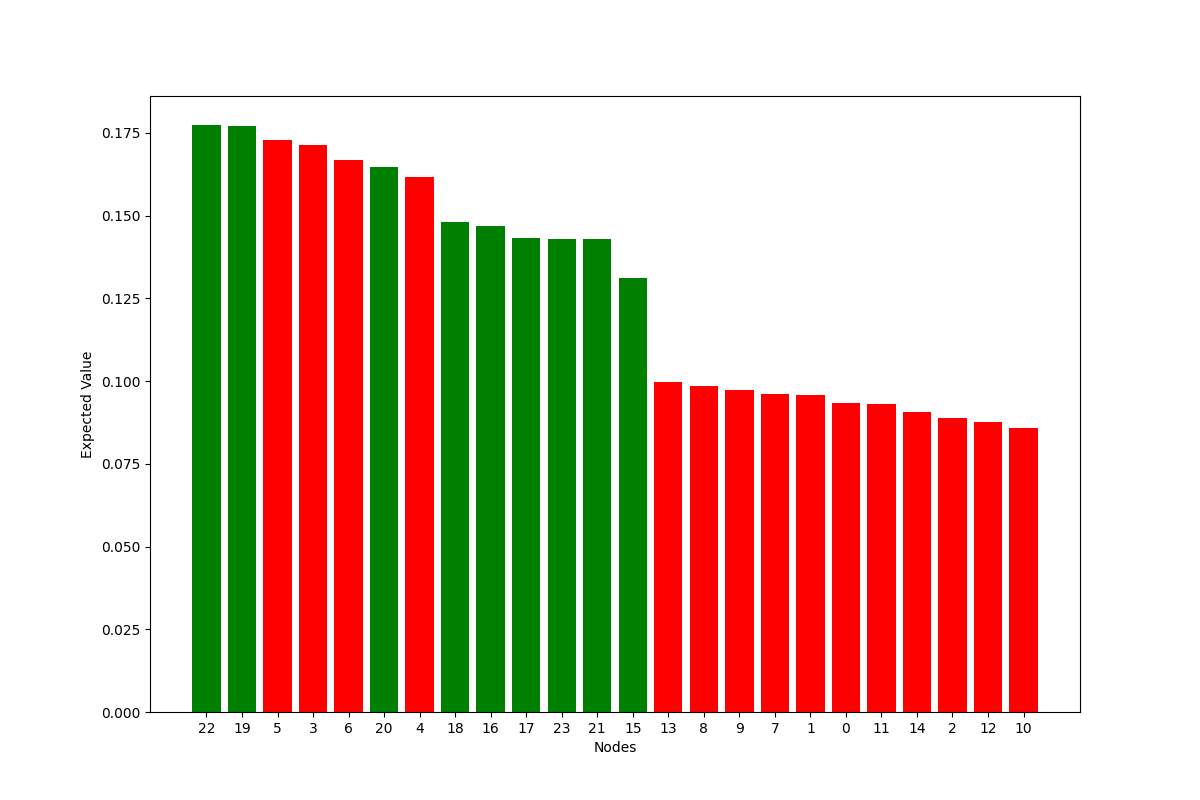}
  \end{subfigure}
  \begin{subfigure}[b]{0.32\textwidth}
    \includegraphics[width=\linewidth]{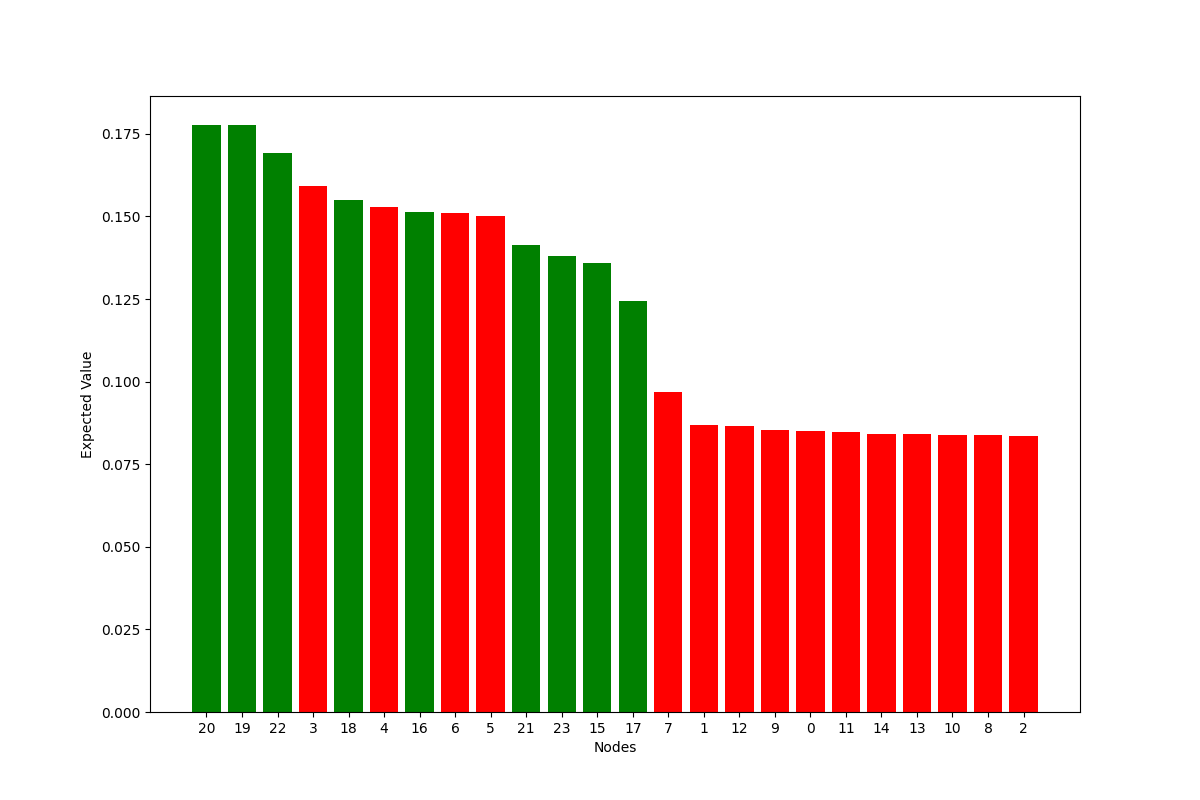}
  \end{subfigure}
\caption{More results on Tree+Grid. 
%Top 3 figures: Loss curves of training the GNN-NCMs on the groundtruth nodes ({\color{green} green curves}) and non-groundtruth ones ({\color{red} red curves}); Bottom 3 figures: Node expressivity distributions on three unsuccessful graphs. {\color{green} Green bars} ({\color{red} red bars}) correspond to nodes that are (NOT) in the groundtruth.
}
\label{fig:vis_treegrid}
\end{figure}

\begin{figure}%[!t]
  \centering
  \begin{subfigure}[b]{0.32\textwidth}
    \includegraphics[width=\linewidth]{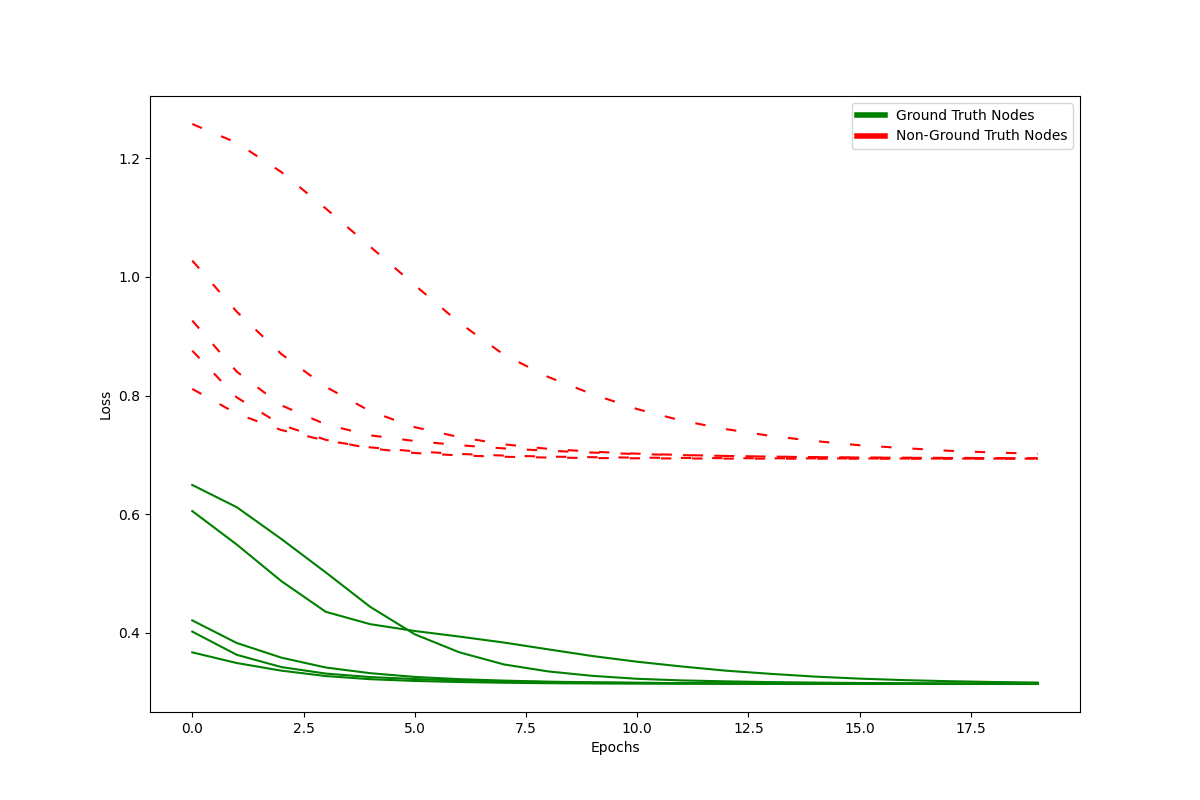}
  \end{subfigure}
  \begin{subfigure}[b]{0.32\textwidth}
    \includegraphics[width=\linewidth]{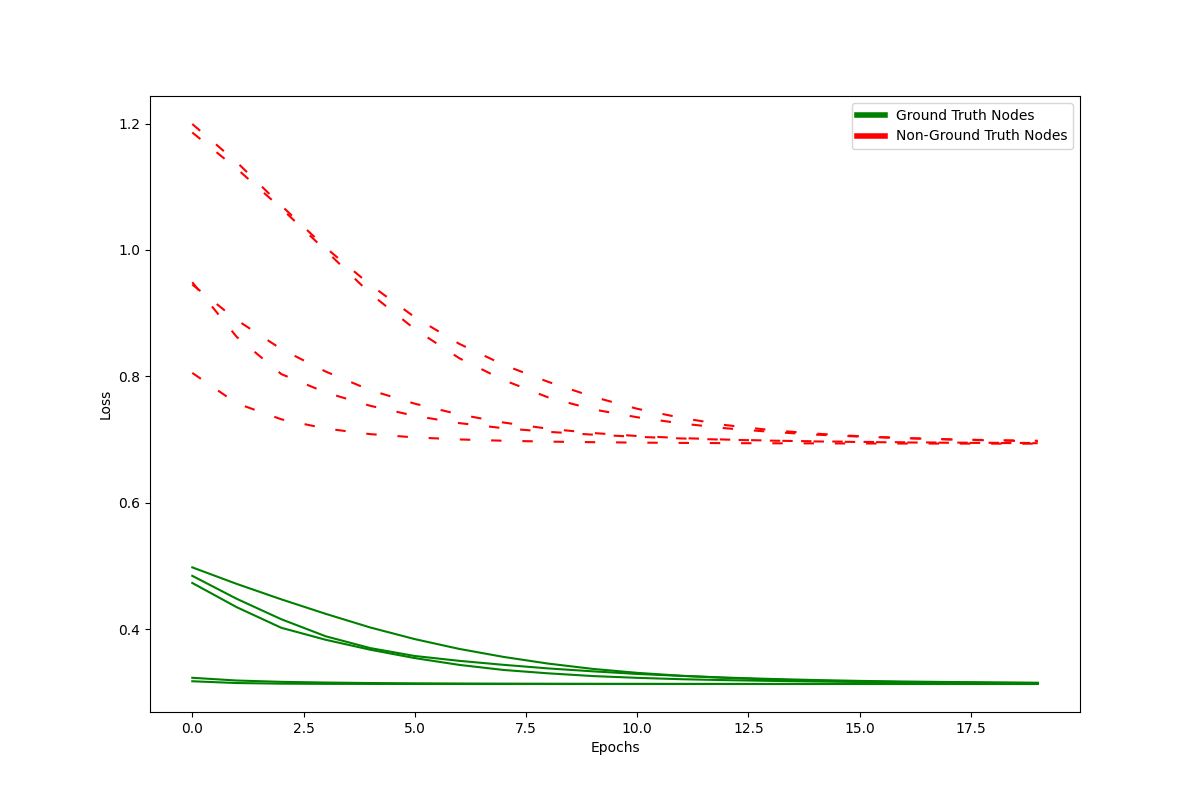}
  \end{subfigure}
    \begin{subfigure}[b]{0.32\textwidth}
    \includegraphics[width=\linewidth]{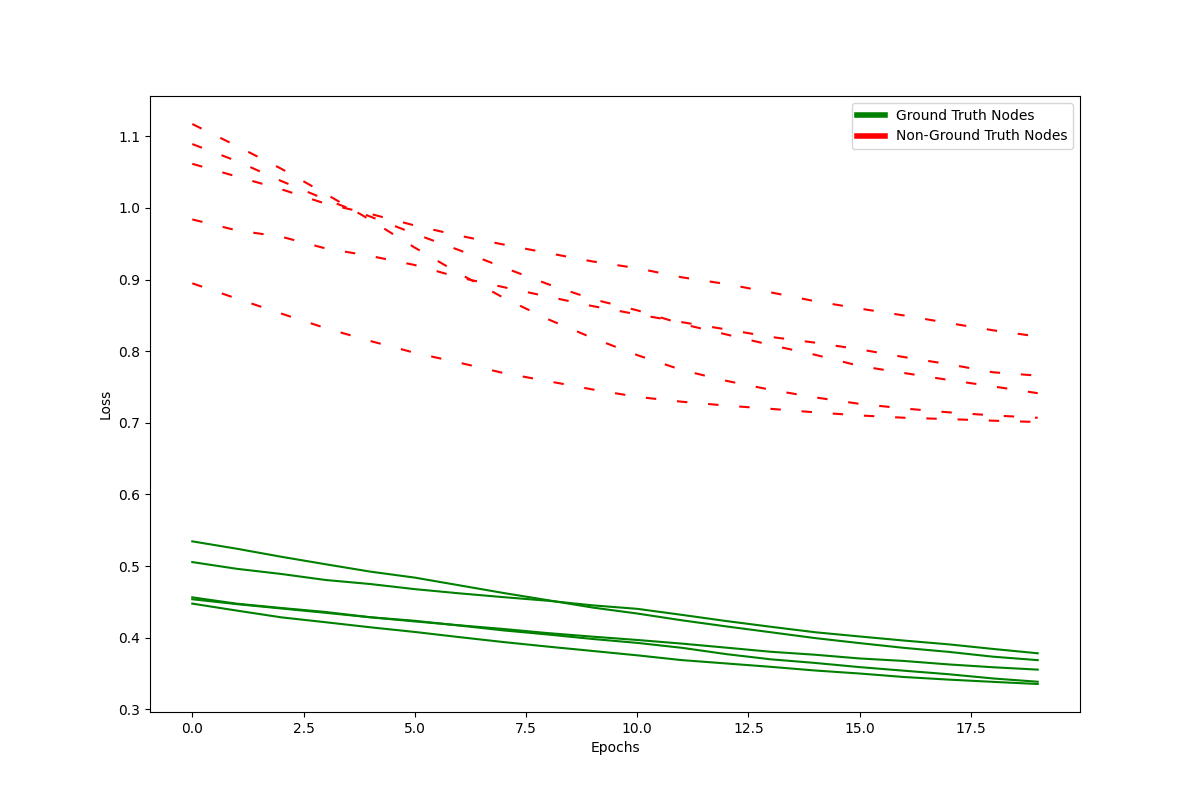}
  \end{subfigure}
  \begin{subfigure}[b]{0.32\textwidth}
    \includegraphics[width=\linewidth]{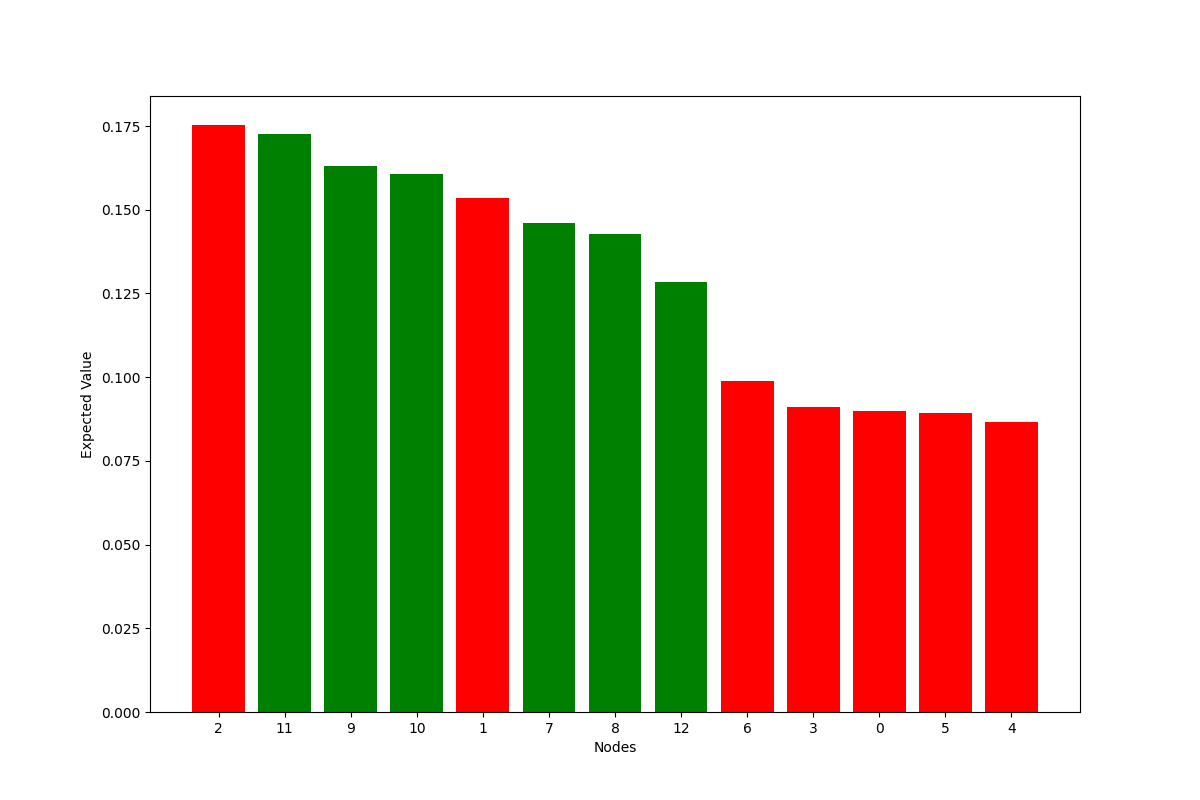}
  \end{subfigure}
  \begin{subfigure}[b]{0.32\textwidth}
    \includegraphics[width=\linewidth]{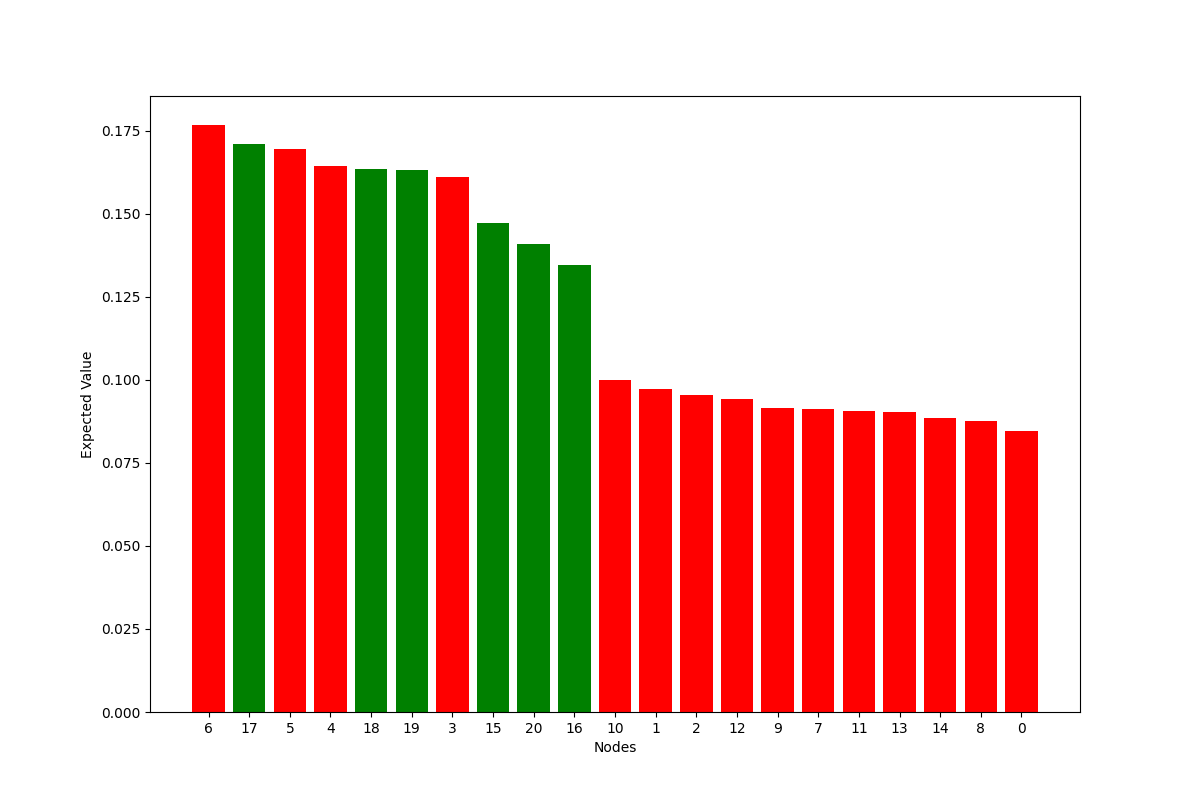}
  \end{subfigure}
  \begin{subfigure}[b]{0.32\textwidth}
    \includegraphics[width=\linewidth]{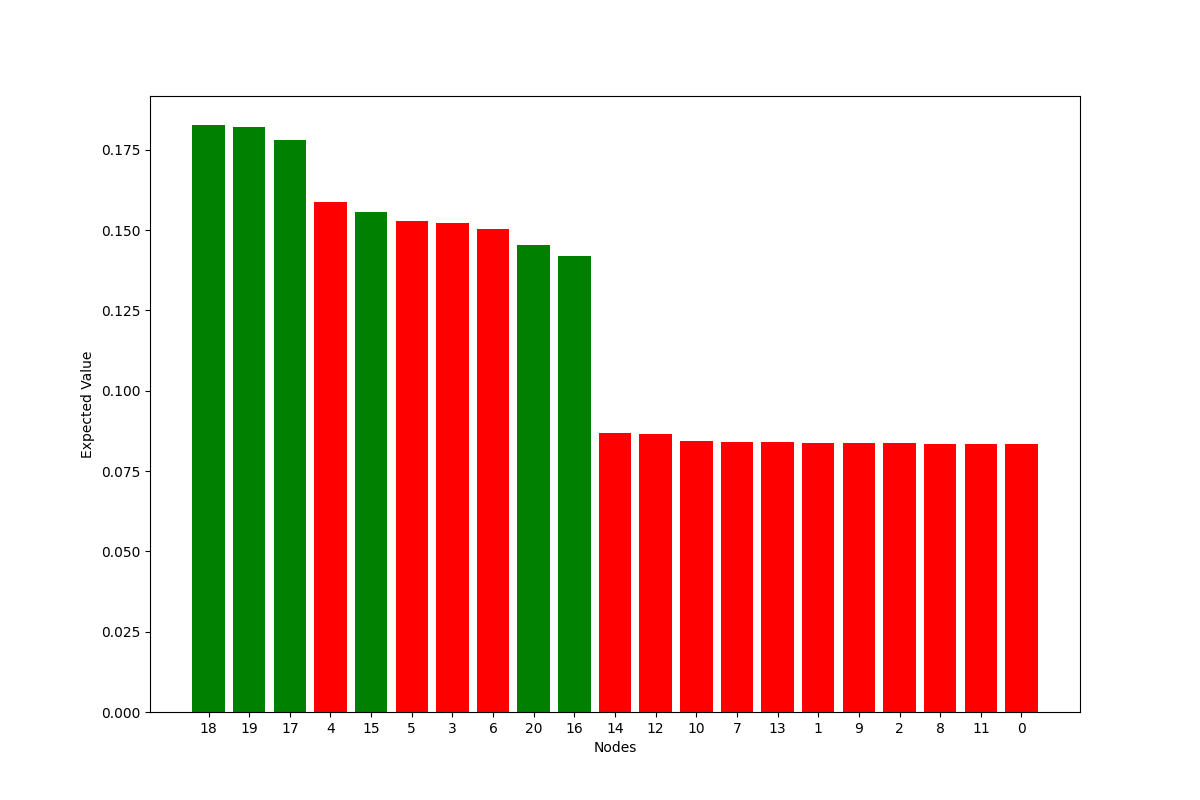}
  \end{subfigure}
\caption{More results on Tree+Cycle. 
%Top 3 figures: Loss curves of training the GNN-NCMs on the groundtruth nodes ({\color{green} green curves}) and non-groundtruth ones ({\color{red} red curves}); Bottom 3 figures: Node expressivity distributions on three unsuccessful graphs. {\color{green} Green bars} ({\color{red} red bars}) correspond to nodes that are (NOT) in the groundtruth.
}
\label{fig:vis_treecycle}
\end{figure}

\begin{figure}%[!t]
  \centering
  \begin{subfigure}[b]{0.32\textwidth}
    \includegraphics[width=\linewidth]{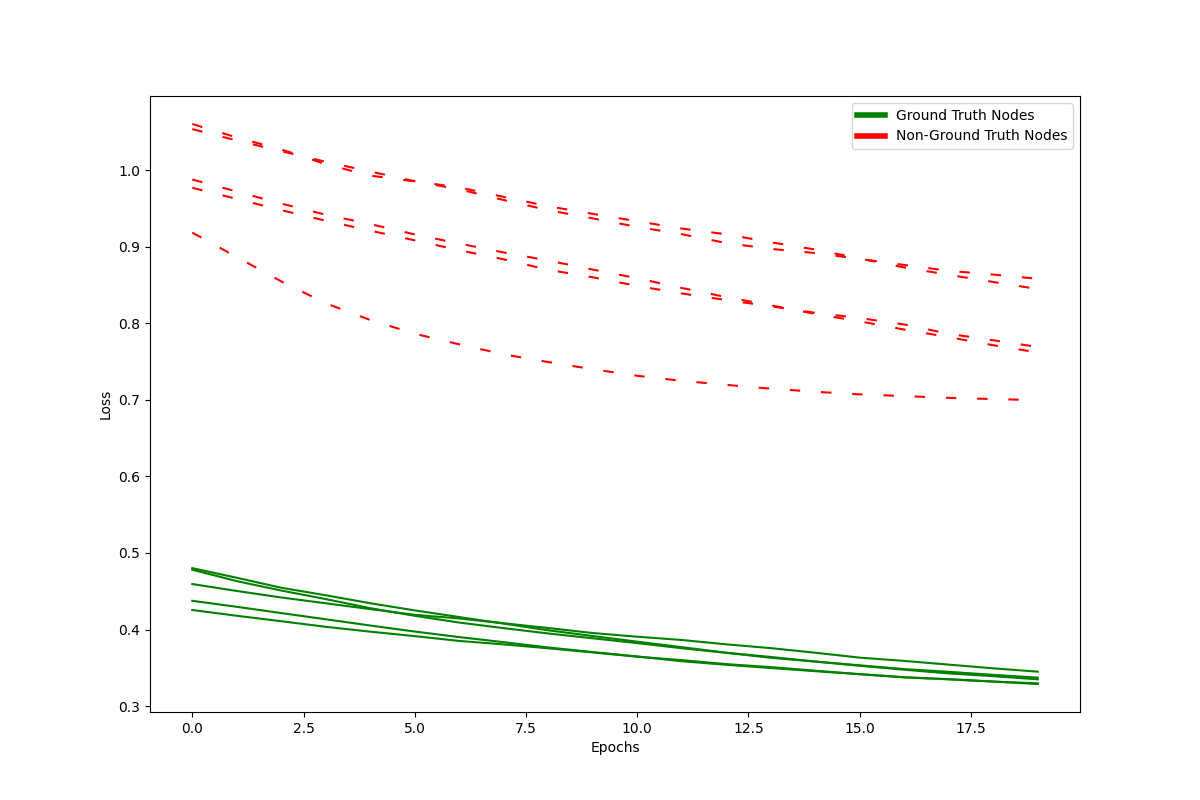}
  \end{subfigure}
  \begin{subfigure}[b]{0.32\textwidth}
    \includegraphics[width=\linewidth]{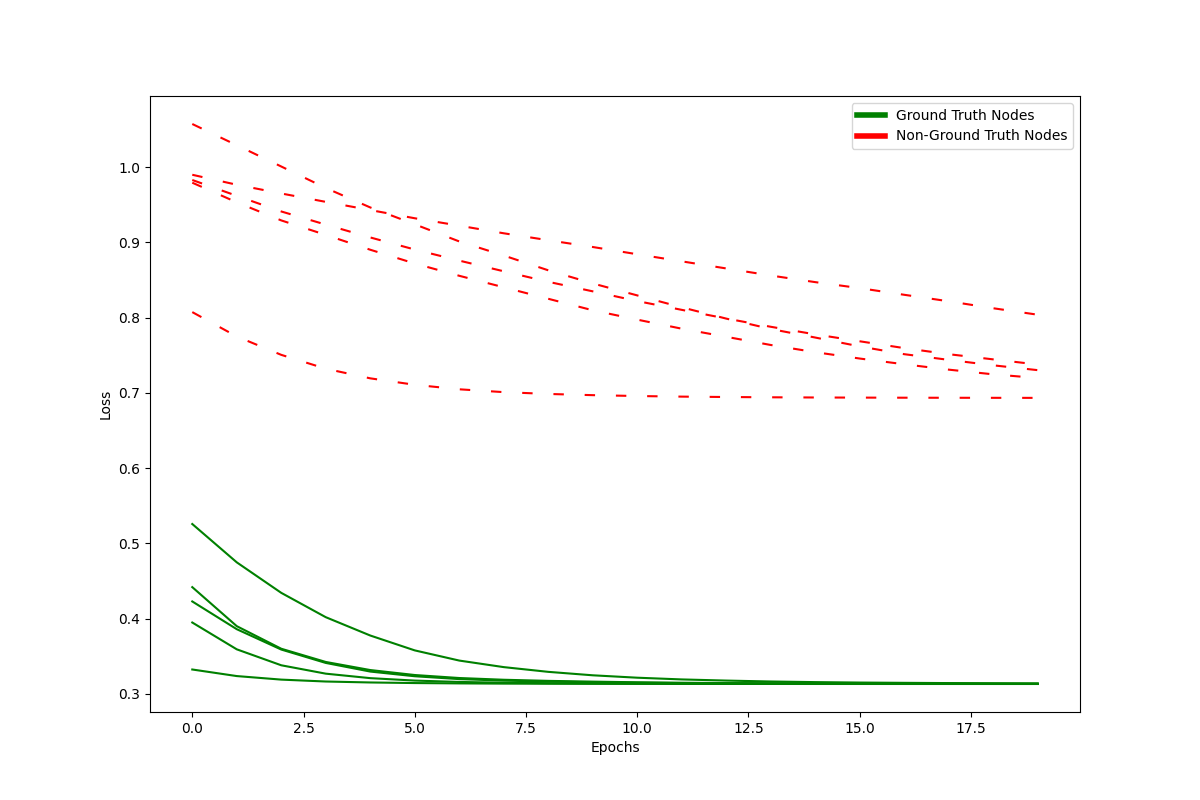}
  \end{subfigure}
    \begin{subfigure}[b]{0.32\textwidth}
    \includegraphics[width=\linewidth]{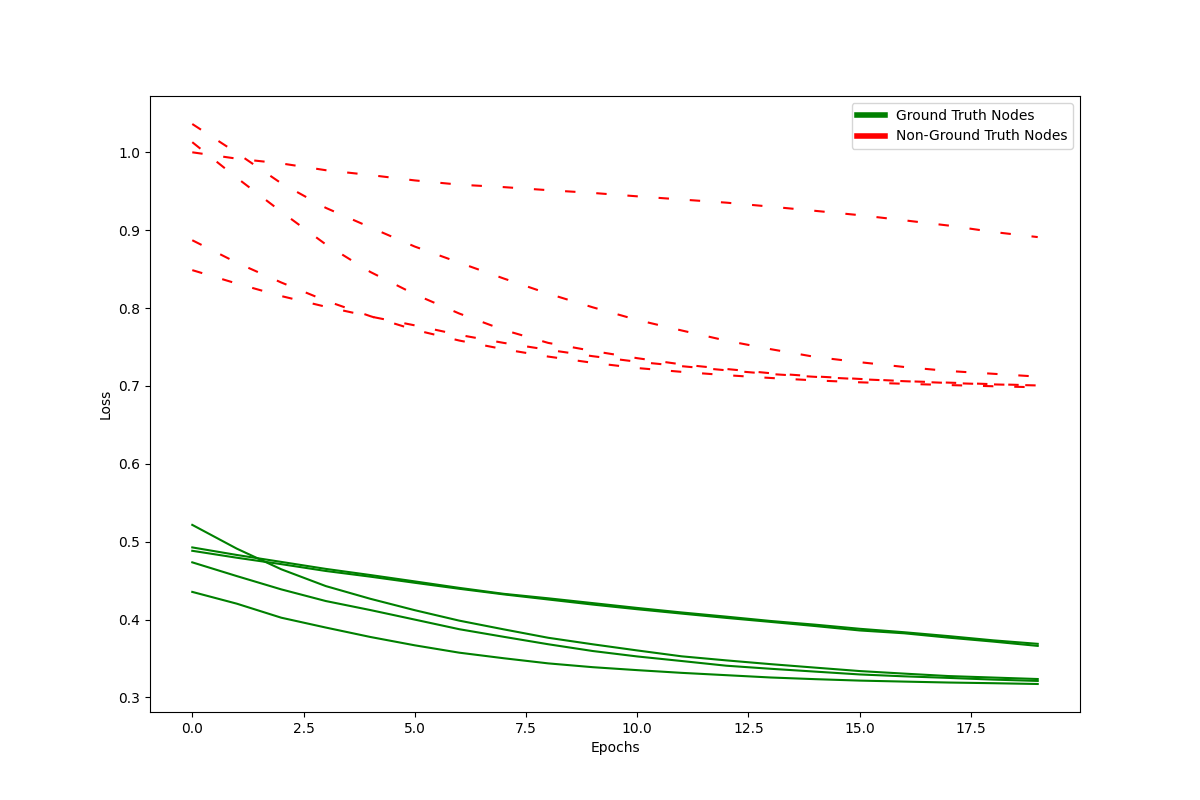}
  \end{subfigure}
  \begin{subfigure}[b]{0.32\textwidth}
    \includegraphics[width=\linewidth]{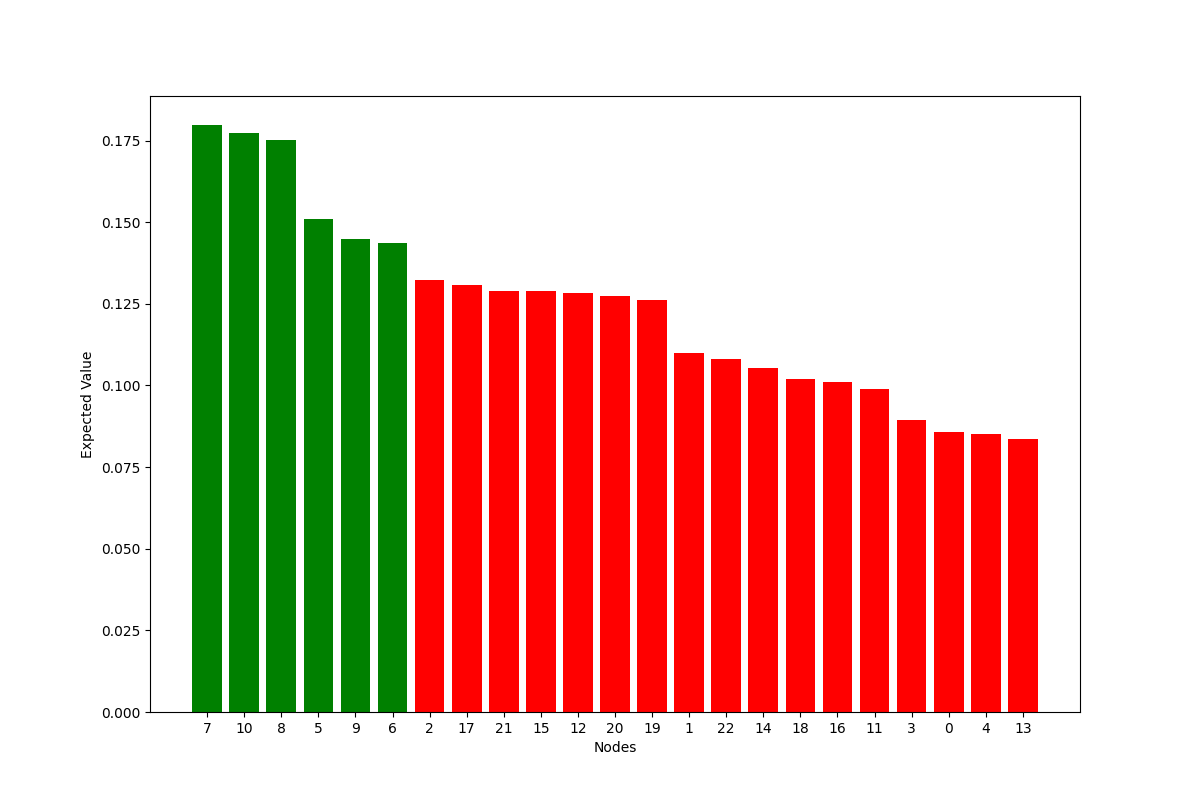}
  \end{subfigure}
  \begin{subfigure}[b]{0.32\textwidth}
    \includegraphics[width=\linewidth]{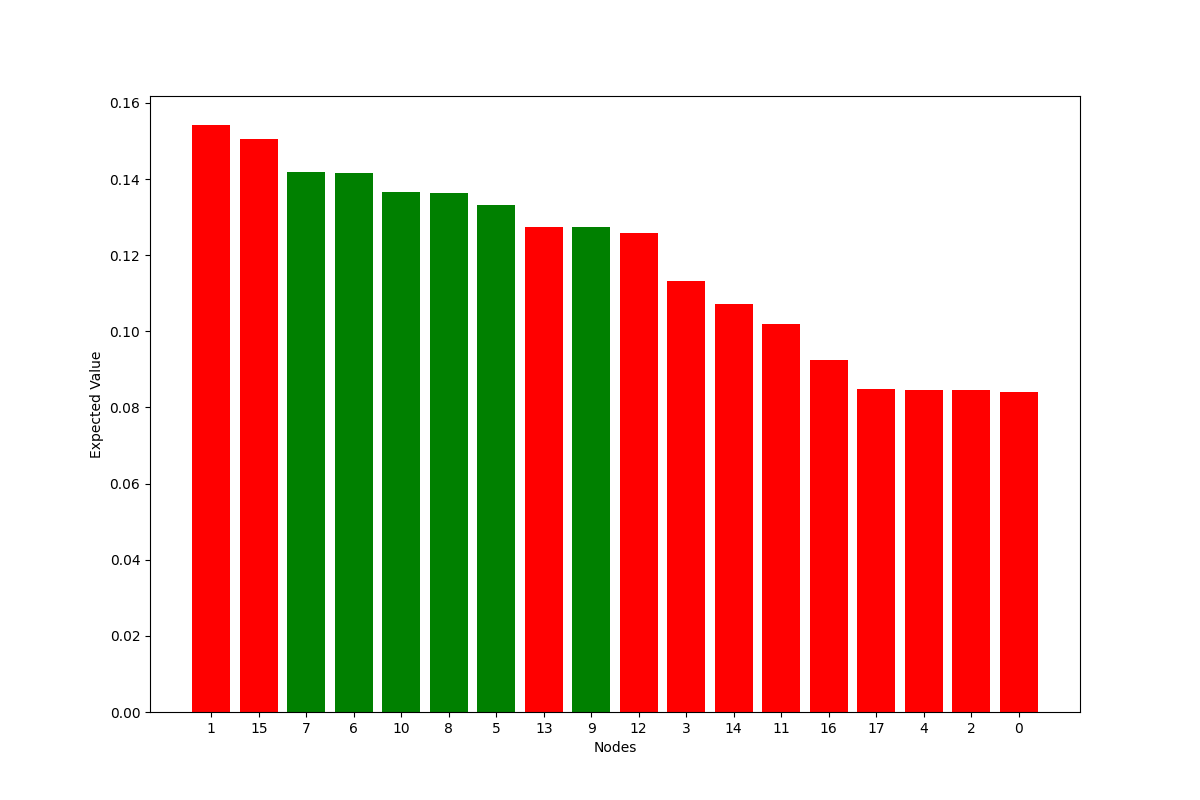}
  \end{subfigure}
  \begin{subfigure}[b]{0.32\textwidth}
    \includegraphics[width=\linewidth]{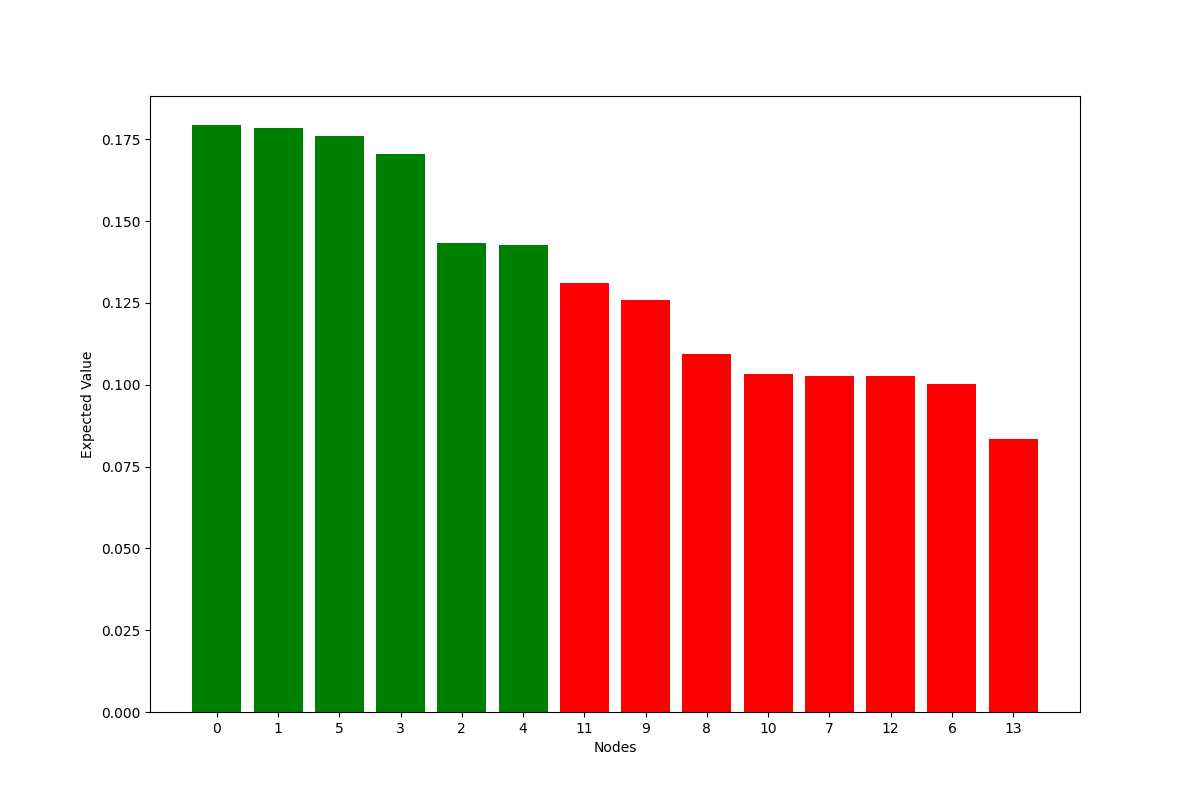}
  \end{subfigure}
\caption{More results on Benzene. 
%Top 3 figures: Loss curves of training the GNN-NCMs on the groundtruth nodes ({\color{green} green curves}) and non-groundtruth ones ({\color{red} red curves}); Bottom 3 figures: Node expressivity distributions on three unsuccessful graphs. {\color{green} Green bars} ({\color{red} red bars}) correspond to nodes that are (NOT) in the groundtruth.
}
\label{fig:vis_bagrid}
\end{figure}

\begin{figure}%[!t]
  \centering
  \begin{subfigure}[b]{0.32\textwidth}
    \includegraphics[width=\linewidth]{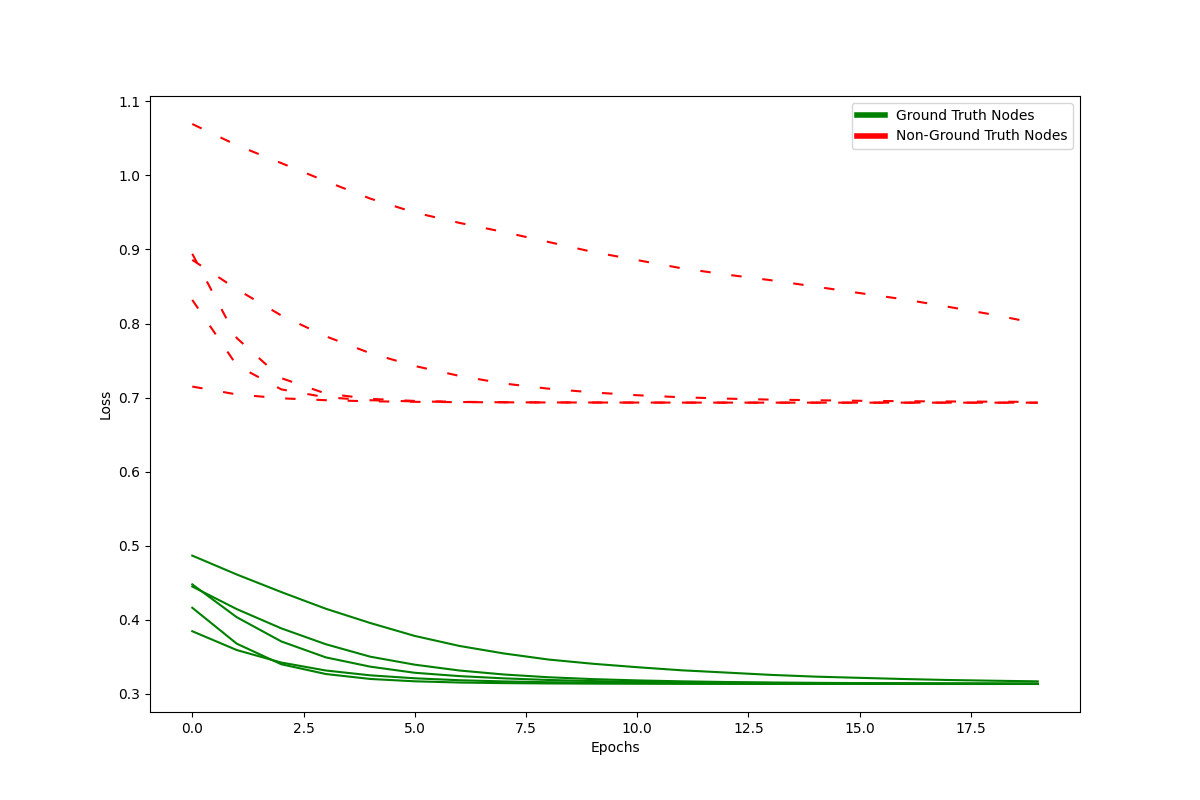}
  \end{subfigure}
  \begin{subfigure}[b]{0.32\textwidth}
    \includegraphics[width=\linewidth]{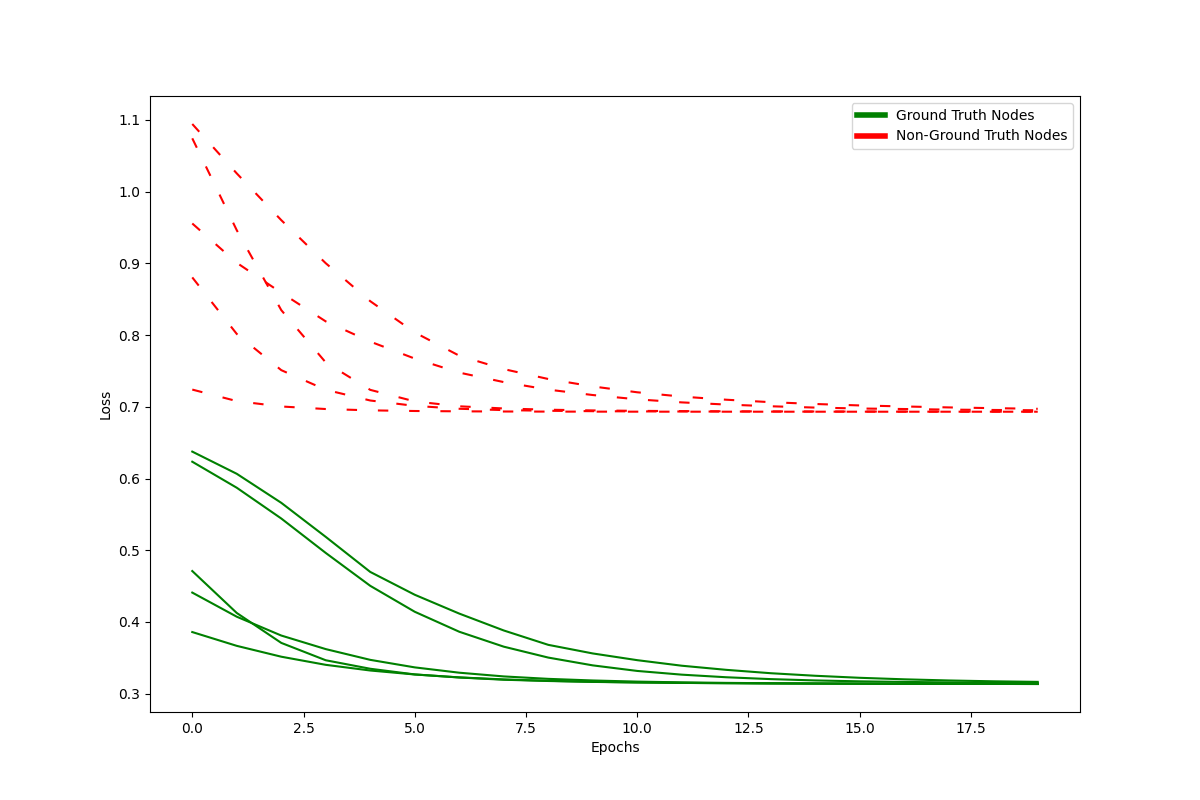}
  \end{subfigure}
    \begin{subfigure}[b]{0.32\textwidth}
    \includegraphics[width=\linewidth]{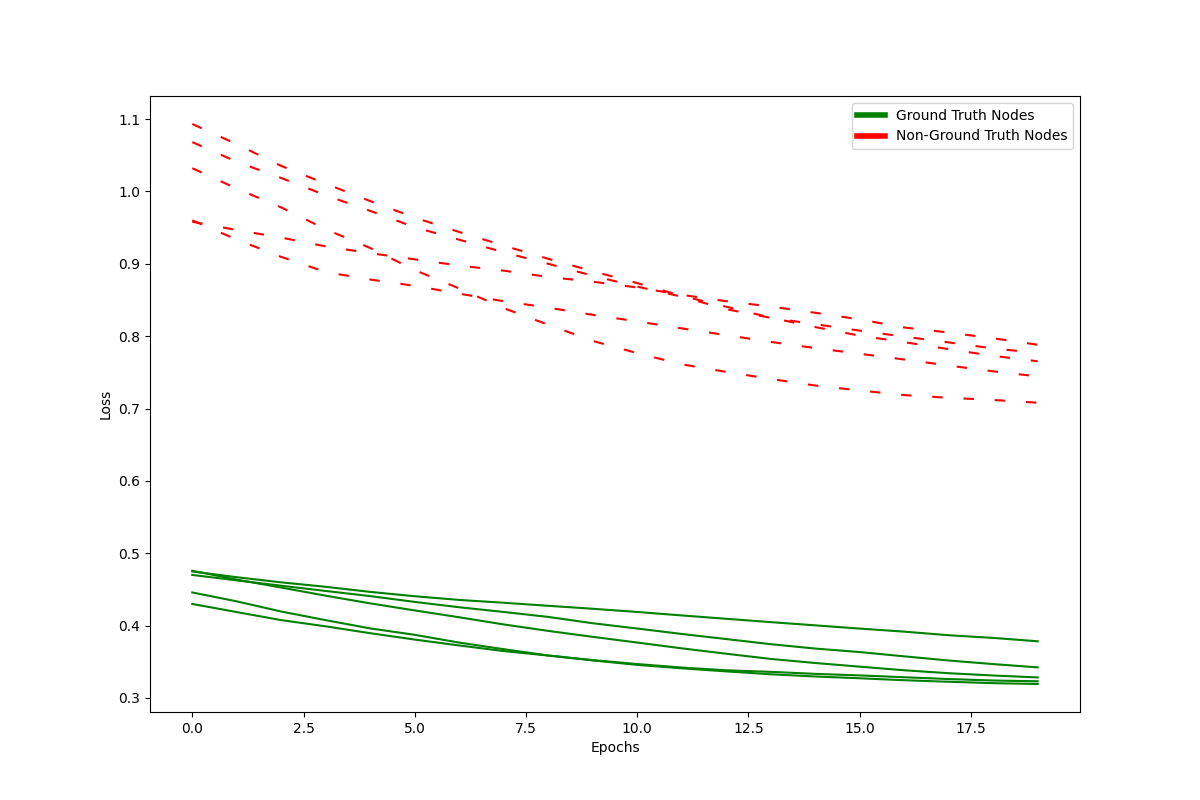}
  \end{subfigure}
    \begin{subfigure}[b]{0.32\textwidth}
    \includegraphics[width=\linewidth]{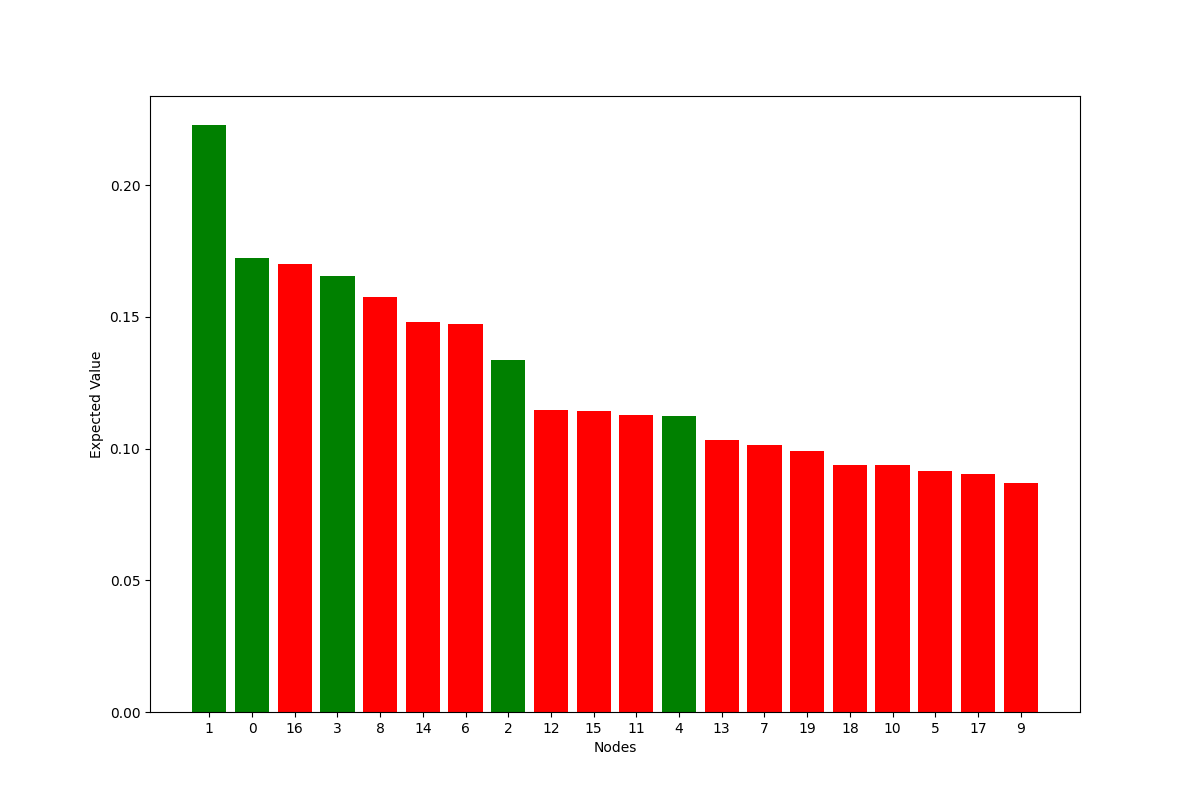}
  \end{subfigure}
  \begin{subfigure}[b]{0.32\textwidth}
    \includegraphics[width=\linewidth]{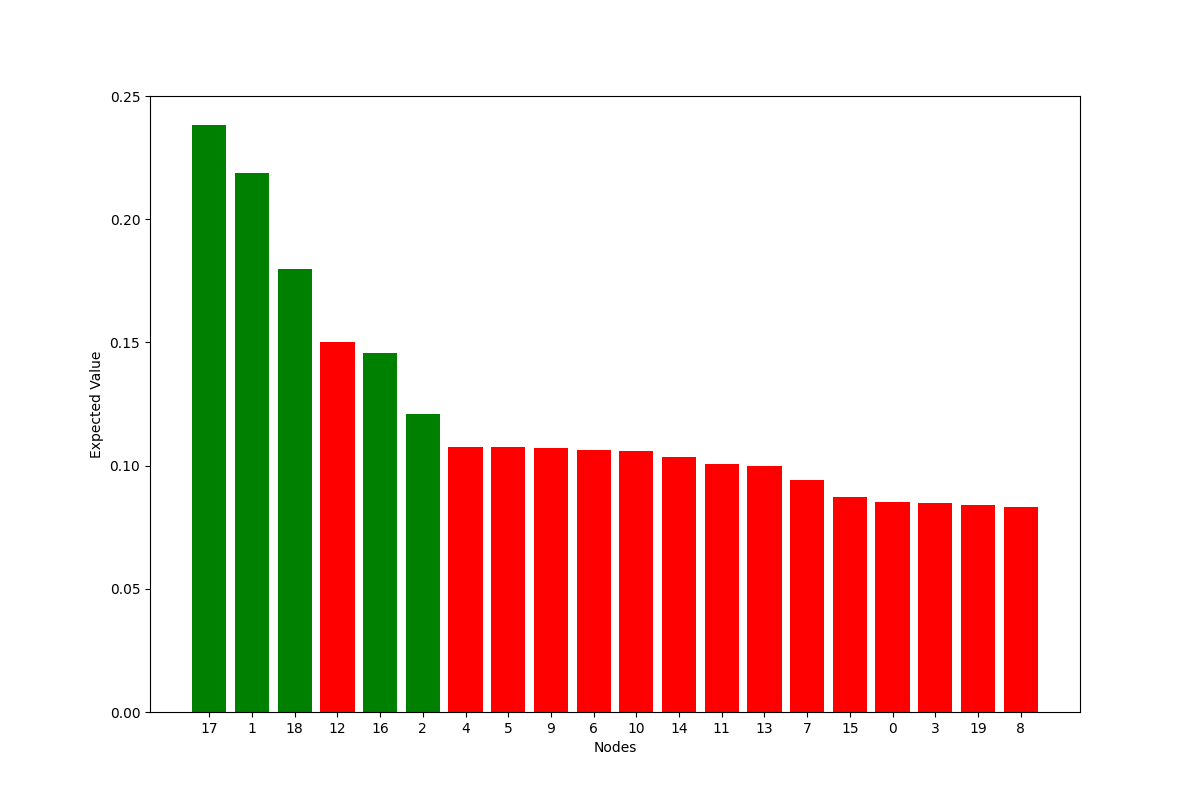}
  \end{subfigure}
  \begin{subfigure}[b]{0.32\textwidth}
    \includegraphics[width=\linewidth]{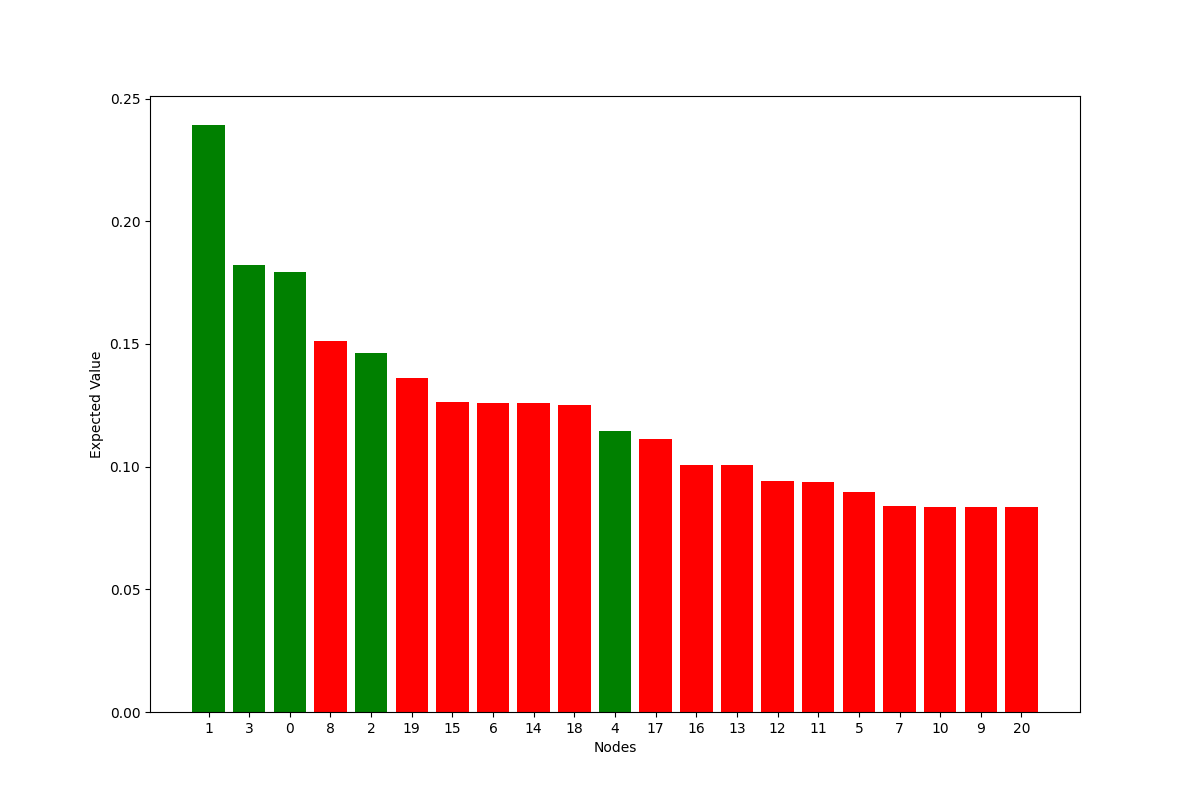}
  \end{subfigure}
\caption{More results on Fluoride Carbonyl. 
%Top 3 figures: Loss curves of training the GNN-NCMs on the groundtruth nodes ({\color{green} green curves}) and non-groundtruth ones ({\color{red} red curves}); Bottom 3 figures: Node expressivity distributions on three unsuccessful graphs. {\color{green} Green bars} ({\color{red} red bars}) correspond to nodes that are (NOT) in the groundtruth.
}
\label{fig:vis_bagrid}
\end{figure}

\subsection{More experimental results}

More results on the synthetic graphs and real-world graphs in terms of loss curves and node expressivity distributions are shown in the Figure~\ref{fig:vis_bahouse}-Figure~\ref{fig:vis_bagrid}.

\begin{table}[!t]\renewcommand{\arraystretch}{0.9}
\centering
\footnotesize
\caption{Dominant time complexity of the compared explainers. $N, E, d, h, T, L, K$ are \#nodes, \#edges, \#node features, \#neurons, \#training epochs, \#layers, and \#samples. $h=64 \ll d=\sim1000$.} 
\begin{tabular}{lc}
\hline \\
\textbf{GNNExp.} & $O(T*L*N*d^2)$  \\ 
\textbf{PGMExp.} & $O(T*L*N*d^2 + K*(N+E))$  \\
\textbf{Guidedbp} & $O(T * L * N)$  \\
\textbf{GEM} & $O(T * L * N * d^2)$  \\
\textbf{RCExp.} & $O(T*L*(N+E))$  \\
\textbf{OrphicX} & $O(T*L*N*d^2)$  \\
\textbf{\name} & $O(T*L*N*h^2)$ \\
\hline
\end{tabular}
\label{tab:complexity}
\end{table}

\section{Complexity Analysis}
Within a GNN-NCM, we train a feed-forward network. With an \( L \)-layer network and each layer has \( h \) neurons, by training $K$ epochs, the time complexity for a graph with \( n \) nodes is \( O(T*L*N*h^2)  \). Note that training GNN-NCMs for all nodes independently can be easily paralleled via multi-threads/processors. We also show the dominant complexity of compared GNN explainers in Table~\ref{tab:complexity}. Though computing GNN-NCM per node, we can see CXGNN is still more efficient than most of the SOTA explainers (GNNExp., PGMExp., OrphicX, GEM).

\section{Discussion}

\noindent  {\bf Potential risk of overfitting.} In our experiments, we tuned the number of hidden layers and hidden neurons and observed that deeper/wider networks indeed could cause overfitting. Through hyperparameters tuning, we found 2 hidden layers with each layer having 64 neurons that can well balance between model complexity and performance. 

\noindent  {\bf Practical issue of applying CXGNN to large graphs.} We admit 
directly 
running CXGNN in large graphs could have a scalability issue. One solution to speed up the computation is using multi-threads/processors as all nodes can be run independently in CXGNN. Note that all existing GNN explainers also face the same scalability issue, even worse than ours as shown in Table~\ref{tab:complexity}. We acknowledge it is valuable future work to design scalable GNN causal explainers.

\noindent  {\bf True causal subgraph is not present.} Our explainer and causality-inspired ones are all based on the common assumption that a graph consists of the causal subgraph that interprets the prediction. If real-world applications do not satisfy this assumption, all these explainers may not work well.

\noindent {\bf Complexity comparison between NCM and not using NCM (i.e., SCM).}
 Computing the cause-effect in a graph via SCM is computationally intractable. The complexity is exponential to the number of node/edge latent variables. Instead, training an NCM to learn the cause-effect is in the polynomial time.
\end{document}